\def\eqref#1{equation~\ref{#1}}
\def\1{\bm{1}}
\DeclareMathAlphabet{\mathsfit}{\encodingdefault}{\sfdefault}{m}{sl}
\SetMathAlphabet{\mathsfit}{bold}{\encodingdefault}{\sfdefault}{bx}{n}
\newcommand{\E}{\mathbb{E}}
\newcommand{\R}{\mathbb{R}}
\newcommand{\rmd}{\mathrm{d}}
\newtheorem{thm}{Theorem}
\newtheorem{assumption}{Assumption}
\newtheorem{remark}[thm]{Remark}
\newtheorem{prop}[thm]{Proposition}
\newtheorem{lem}[thm]{Lemma}
\newtheorem{cor}[thm]{Corollary}
\newcommand{\N}{\mathbb{N}}
\newtheorem{intassumption}{Assumption}
\numberwithin{intassumption}{assumption}
\title{On diffusion-based generative models and their error bounds: The  log-concave case with full convergence estimates}
\author{\name Stefano Bruno \email sbruno@ed.ac.uk \\
      \addr School of Mathematics \\
      University of Edinburgh
      \AND
      \name Ying Zhang \email yingzhang@hkust-gz.edu.cn \\
      \addr Fintech Thrust \\ The Hong Kong University of Science and Technology
      (Guangzhou)
      \AND
      \name Dong-Young Lim \email dlim@unist.ac.kr \\
      \addr Department of Industrial Engineering \\ Artificial Intelligence Graduate School \\  Ulsan National Institute of Science and Technology
  \AND \name {\"O}mer Deniz Akyildiz \email deniz.akyildiz@imperial.ac.uk \\
  \addr  Department of Mathematics \\ Imperial College London 
  \AND 
  \name Sotirios Sabanis \email   s.sabanis@ed.ac.uk \\
  \addr  School of Mathematics \\
  University of Edinburgh \\ The Alan Turing Institute \\ National Technical University of Athens
}
\begin{document}

\maketitle

\begin{abstract}
\noindent We provide full theoretical guarantees for the convergence behaviour of diffusion-based generative models under the assumption of strongly log-concave data distributions while our approximating class of functions used for score estimation is made of Lipschitz continuous functions avoiding any Lipschitzness assumption on the score function. We demonstrate via a motivating example, sampling from a Gaussian distribution with unknown mean, the powerfulness of our approach. In this case, explicit estimates are provided for the associated optimization problem, i.e. score approximation, while these are combined with the corresponding sampling estimates. As a result,  we obtain the best known upper bound estimates in terms of key quantities of interest, such as the dimension and rates of convergence, for the Wasserstein-2 distance between the data distribution (Gaussian with unknown mean)  and our sampling algorithm.
Beyond the motivating example and in order to allow for the use of a diverse range of stochastic optimizers, we present our results using an $L^2$-accurate score estimation assumption, which crucially is formed under an expectation with respect to the stochastic optimizer and our novel auxiliary process that uses only known information. This approach yields the best known convergence rate for our sampling algorithm.
\end{abstract}

\section{Introduction} \label{Introduction}

\noindent Diffusion-based generative models, also known as score-based generative models (SGMs) \citep{NEURIPS2019_3001ef25,song2021scorebased,sohl2015deep,ho2020denoising}, have become over the past few years one of the most popular approaches in generative modelling due to their empirical successes  in data generation tasks. These models have achieved state-of-the-art results in image generation \citep{dhariwal2021diffusion, rombach2022high}, audio generation \citep{kong2020diffwave} and inverse problems \citep{chung2022diffusion, song2022pseudoinverse, cardoso2023monte,boys2023tweedie} outperforming other generative models like generative adversarial networks (GANs) \citep{goodfellow2014generative}, variational autoencoders (VAEs) \citep{kingma2014auto}, normalizing flows \citep{rezende2015variational} and energy-based methods \citep{zhao2017energybased}.

SGMs generate approximate data samples from high-dimensional data distributions by combining two diffusion processes, a forward and a backward in time process. The former process  is used to iteratively and smoothly transform samples from the unknown data distribution into (Gaussian) noise, while the associated backward in time process reverses the noising procedure and generates new samples from the starting unknown data distribution. A key role in these models is played by the score function, i.e. the gradient of the log-density of the solution of the forward process, which appears in the drift of the stochastic differential equation (SDE) associated to the backward process. Since this quantity depends on the unknown data distribution, an estimator of the score has to be constructed during the noising step using score-matching techniques \citep{hyvarinen2005estimation,vincent2011connection}. These techniques have the advantage of not suffering from known problems of traditional pushforward generative models, such as mode collapse \citep{salmona2022can}.

The widespread applicability and success of SGMs have been accompanied by a growing interest in the theoretical understandings of these models, particularly in their convergence theory e.g. in \citet{block2020generative, de2021diffusion, debortoli2022convergence, lee2022convergence,yang2022convergence, kwon2022score, liu2022let, oko2023diffusion, lee2023convergence,chen2023improved,chen2023sampling, li2023towards,  pedrotti2023improved, conforti2023score, benton2023linear}, with further works appearing after the first version of our preprint e.g. in \citet{tang2024contractive, strasman2025an, mimikos-stamatopoulos2024scorebased}.  At its core, this new generative modeling approach combines optimisation and sampling procedures -- specifically, the approximation of the score and the creation of new samples, which make its theoretical analysis both an interesting and a rich challenge.


Some of the recent advances in  the study of the theoretical properties of SGMs concentrate around the sampling procedure by assuming suitable control for the score estimation procedure. For instance, the analysis in \citet{lee2022convergence,chen2023sampling} assumes that the score estimate is $L^2$-\textit{accurate}, meaning that the $L^2$ error between the score and its estimate is small, and provides estimates in total variation (TV) distance. Under the same assumption, the more recent contribution in \citet{conforti2023score} establishes non-asymptotic bounds in Kullback Leibler (KL) divergence by assuming finite relative Fisher of data distribution with respect to a Gaussian distribution.

The main drawback of the aforementioned $L^2$-accurate (and in some cases $L^{\infty}$-accurate) score estimation assumption is that the corresponding expectation is given with respect to density of the solution of the forward process, which depends on the unknown data distribution.

Our approach introduces a novel auxiliary process that relies solely on known information and uses the density of the solution of this process in the $L^{2}$-accurate score estimation.
To further highlight the powerfulness of our approach, we present a motivating example on the case of sampling from a Gaussian distribution with unknown mean. Full theoretical estimates for the convergence properties of the SGM are provided in Wasserstein-2 distance, while our choice of stochastic optimizer for the score approximation is a simple Langevin-based algorithm. Our estimates are explicit and contain the best known optimal dependencies in terms of dimension and rate of convergence (Theorem \ref{main_theorem_toy_example}). To the best of the authors' knowledge, these are the first such explicit results with transparent dependence on the parameters involved in the sampling and optimization combined procedures of the diffusion models. By connecting the diffusion models with the theoretical guarantees of machine learning optimizers via standard stochastic calculus tools, the results in Theorem \ref{main_theorem_toy_example}, together with the bounds achieved in the more general setting in Theorem \ref{main_theorem_general}, provide the theoretical justification for the empirical success of the diffusion models. 

We close this introductory section by highlighting some other, alternative approaches which were recently developed. One may consult \citet{yingxi2022convergence} for an approach based on the assumption that the score estimation error has a sub-Gaussian tail. This is a stronger assumption than $L^2$-\textit{accurate}. In \citet{chen23score}, non-asymptotic bounds in TV and Wassertein distance of order 2 are derived when the data distribution is supported on a low-dimensional linear subspace. Finally, convergence guarantees in TV were developed in \citet{chen2023probability} for the probability flow ordinary differential equation (ODE) implementation of SGMs under the $L^2$-accurate score estimate assumption.

\textit{Notation.} Let $(\Omega, \mathcal{F}, \mathbb{P})$  be a fixed
probability space. We denote by $\mathbb{E}[X]$ the expectation of a random variable $X$. For $ 1 \le p < \infty$, $L^p$ is used to denote the usual space of $p$-integrable real-valued random
variables. The $L^p$-integrability of a random variable $X$ is defined as   $\mathbb{E}[|X|^p] < \infty$. Fix an integer $M \ge 1$. For an $\mathbb{R}^{M }$-valued random variable $X$, its law on $\mathcal{B}(\mathbb{R}^{M} )$, i.e. the
Borel sigma-algebra of $\mathbb{R}^{M} $  is denoted by $\mathcal{L}(X)$.   Let $T>0$ denotes some time horizon. For a positive real number $a$, we denote its integer part by $\lfloor a \rfloor$. The Euclidean scalar product is denoted by $\langle \cdot, \cdot \rangle $, with $| \cdot |$ standing for the corresponding norm (where the dimension of the space may vary depending on the context). Let $\mathbb{R}_{>0}:=\{ x \in \mathbb{R} | \  x >0\}$. Let $f: \mathbb{R}^{M } \rightarrow \mathbb{R}$ be a continuously differentiable function. The gradient of $f$ is denote by $\nabla f$. For any integer $q \ge 1$, let $\mathcal{P}(\mathbb{R}^q)$ be the set of probability measures on $\mathcal{B}(\mathbb{R}^q)$. For $\mu$, $\nu \in \mathcal{P}(\mathbb{R}^M)$, let $\mathcal{C}(\mu, \nu)$ denote the set of probability measures $\zeta$ on $\mathcal{B}(\mathbb{R}^{2M })$ such that its respective marginals are $\mu$ and $\nu$. For any $\mu$ and $\nu \in \mathcal{P}(\mathbb{R}^{M })$, the Wasserstein distance of order 2 is defined as
\begin{equation*}
	W_2(\mu, \nu)= \left(  \inf_{\zeta \in \mathcal{C}(\mu, \nu)} \int_{\mathbb{R}^M}  \int_{\mathbb{R}^M}  |x - y|^2 \ \text{d} \zeta(x,y) \right)^{\frac{1}{2}}.
\end{equation*}

	\section{Technical Background} \label{Introduction_score_based_generative_models_background}
\noindent In this section, we provide a brief summary behind the construction of score-based generative models (SGMs) based on diffusion introduced in \citet{song2021scorebased}. The fundamental concept behind SGMs centers around the use of an ergodic (forward) diffusion process to diffuse the unknown data distribution $\pi_{\mathsf{D}} \in \mathcal{P}(\mathbb{R}^{M})$ to a known prior distribution and then learn a backward process to transform the prior to the target distribution $\pi_{\mathsf{D}}$ by estimating the score function of the forward process. In our analysis, we focus on the forward process $(X_t)_{t \in [0,T]}$  given by  an Ornstein-Uhlenbeck (OU) process
\begin{equation} \label{OU_process_introduction}
	\text{d} X_t = -  X_t \ \text{d}t + \sqrt{2  } \ \text{d}B_t, \quad X_0 \sim \pi_{\mathsf{D}},
\end{equation}
where $(B_t)_{t \in [0,T]}$ is an $M$-dimensional Brownian motion and we assume that  $\mathbb{E}[|X_0|^2] < \infty$. The process \ref{OU_process_introduction} is chosen to match the forward process in the original paper \citep{song2021scorebased}, which is also referred to as Variance Preserving Stochastic Differential Equation. The noising process \ref{OU_process_introduction} can also be represented as follows
\begin{equation}\label{eq:OUdistribtuion}
	X_t\overset{\text{a.s.}}{=} m_t X_0+\sigma_t Z_t, \quad m_t = e^{-  t}, \quad \sigma_t^2 = 1-e^{-2  t}, \quad Z_t \sim \mathcal{N}(0,I_{M}),
\end{equation}
where $\overset{\text{a.s.}}{=}$ denotes almost sure equality and $I_{M}$ denotes the identity matrix of dimension $M$. Under mild assumptions on the target data distribution $\pi_{\mathsf{D}}$ \citep{haussmann1986time,cattiaux2023time}, the backward process $(Y_t)_{t \in [0,T]} = (X_{T-t})_{t \in [0,T]}$ is given by
\begin{equation} \label{eq:backwardproc_real_initial_condition_introduction}
	\text{d} Y_t  =  (Y_t +2  \nabla \log p_{T-t}(Y_t)) \ \text{d} t +\sqrt{2  } \ \text{d} \bar{B}_t, \quad Y_0 \sim \mathcal{L}(X_T),
\end{equation}
where $\{p_t \}_{t \in [0,T]}$ is the family of densities of $\{ \mathcal{L}(X_t)  \}_{t \in (0,T]}$ with respect to the Lebesgue measure, $\bar{B}_t$ is an another Brownian motion independent of $B_t$ in \ref{OU_process_introduction} defined on $(\Omega, \mathcal{F}, \mathbb{P})$.
However, the sampling is done in practice from the invariant distribution of the forward process, which, in this case, is a standard Gaussian distribution. Therefore, the backward process  \ref{eq:backwardproc_real_initial_condition_introduction} becomes
\begin{equation} \label{Y_hat_auxiliary}
	\text{d} \widetilde{Y}_t =  (\widetilde{Y}_t + 2 \  \nabla \log p_{T-t} ( \widetilde{Y}_t ) ) \ \text{d} t + \sqrt{2  } \ \text{d} \bar{B}_t, \quad \widetilde{Y}_0 \sim \pi_{\infty} = \mathcal{N}(0,I_{M}).
\end{equation}
Here, we have $ \widetilde{Y}_0  \overset{\text{a.s.}}{=} Z_T $.
Since $\pi_{\mathsf{D}}$ is unknown, the score function $ \nabla \log p_t$  in \ref{eq:backwardproc_real_initial_condition_introduction}  cannot be computed exactly. To address this issue, an estimator $s(\cdot, \theta^{*}, \cdot)$  is \textit{learned} based on a family of functions $ s: [0,T] \times \mathbb{R}^d \times \mathbb{R}^M \rightarrow  \mathbb{R}^M $ parametrized in $\theta$, aiming at approximating the score of the ergodic (forward) diffusion process over a fixed time window $[0,T]$. In practice, the functions $s$ are neural networks and in particular cases like the motivating example in Section \ref{Motivating_example}, the functions $s$ can be wisely designed.
The optimal value  $\theta^*$ of the parameter $\theta$ is determined  by  optimizing the following score-matching objective
\begin{equation}\label{eq:obj_explicit_score_matching_general_no_weighting}
	\begin{split}
		\mathbb{R}^d \ni \theta \mapsto  \E \left[  \int_{0}^T  | \nabla \log p_{t}(X_t)  -  s(t, \theta , X_t) |^2 \  \rmd t \right].
	\end{split}
\end{equation}	
To account for numerical instability issues for training and sampling at $t=0$ as observed in practice in \citet[Appendix C]{song2021scorebased} and for the possibility that the integral of the score function in \ref{eq:obj_explicit_score_matching_general_no_weighting} may diverge when $t= 0$ (see Appendix \ref{full_proof_denoising_score_matching}), a discretised version of the score-matching optimization problem is usually considered
\begin{equation}\label{eq:obj_explicit_score_matching_general}
	\begin{split}
		\text{minimize} \quad \mathbb{R}^d \ni \theta \mapsto U(\theta)
		&:=    \int_{\epsilon}^T  \frac{\kappa(t)}{T-\epsilon}   \int_{\mathbb{R}^M} | \nabla \log p_{t}(x)  -  s(t, \theta , x) |^2 \ p_{t}(x) \ \rmd x \  \rmd t,
	\end{split}
\end{equation}
where $\epsilon>0$ and  $\kappa : [0,T] \rightarrow \mathbb{R}_{>0}$ is a weighting function. The score-matching objective $U$ in \ref{eq:obj_explicit_score_matching_general} can be rewritten via denoising score matching \citep{vincent2011connection} as follows
\begin{equation} \label{objective_with_denoising_score_matching_general}
	\begin{split}
		U(\theta)   & =  \mathbb{E}  [ \kappa(\tau) | \sigma_{\tau}^{-1}Z+ s(\tau, \theta , m_{\tau} X_0+\sigma_{\tau} Z) |^2 ] + C,
	\end{split}
\end{equation}
where the expectation is over $\tau \sim \text{Uniform}([\epsilon,T])$, $X_0 \sim \pi_{\mathsf{D}}$ and $Z \sim \mathcal{N}(0,I_{M})$, and where $C \in \mathbb{R}$ is a constant independent of $\theta$ (see Appendix \ref{full_proof_denoising_score_matching} for the derivation with the OU representation \ref{eq:OUdistribtuion_distribution}).
The stochastic gradient $H:\R^{d} \times \R^m \to \R^{d}$ of \ref{eq:obj_explicit_score_matching_general} deduced using \ref{objective_with_denoising_score_matching_general} is given by
\begin{equation} \label{stochastic_gradient_motivating_example_general}
	\begin{split}
		H(\theta, \mathbf{x})
		&=   2 \kappa(t) \sum_{i=1}^{M} \left(   \sigma_t^{-1}z^{(i)}+ s^{(i)}(t, \theta , m_t x_0+\sigma_t z)\right)\nabla_{\theta} s^{(i)}(t, \theta , m_t x_0+\sigma_t z),
	\end{split}
\end{equation}
where  $\mathbf{x} = (t, x_0, z) \in \mathbb{R}^m$ with $m=2 M+1$. As contribution to this analysis, we introduce an auxiliary process $ (Y_t^{\text{aux}})_{t \in [0,T]}$ containing the approximating function $s$ depending on the (random) estimator of $\theta^*$ denoted by $\hat{\theta} $, for $t \in [0,T]$,
\begin{equation}  \label{Y_auxiliary_theta_hat}
	\text{d} Y_t^{\text{aux}} =  (Y_t^{\text{aux}} + 2 \ s(T-t, \hat{\theta}, Y_t^{\text{aux}}) )  \ \text{d} t + \sqrt{2 } \ \text{d} \bar{B}_t, \quad Y_0^{\text{aux}} \sim \pi_{\infty} = \mathcal{N}(0,I_{M}).
\end{equation}
The process \ref{Y_auxiliary_theta_hat} will play an important role in the derivation of the nonasymptotic estimates in Wasserstein distance of order two between the target data distribution  and the generative distribution of the diffusion model. Indeed, it connects the backward process \ref{Y_hat_auxiliary} and the numerical scheme \ref{continuous_time_EM_version}, which facilitates the analysis of the convergence of the diffusion model (see Appendix \ref{appendix_proof_main_result_motivating_example} and Appendix \ref{appendix_proof_main_result_general_case} for more details). For this reason, we introduce a sequence of stepsizes $\{  \gamma_k\}_{k \in \{0,\dots, K \}}$ such that $\sum_{k=0}^K \gamma_k =T$. For any $k \in \{0, \dots, K  \}$, let $t_{k+1} = \sum_{j=0}^{k} \gamma_j$ with $t_0=0$ and $t_{K+1} = T$. Let $\gamma_k=\gamma \in (0,1)$ for each $k=0,\dots, K$.
The discrete process $(Y_k^{\text{EM}})_{k \in \{0, \dots, K +1 \} }$ of the Euler--Maruyama approximation of \ref{Y_auxiliary_theta_hat} is given, for any $k \in \{0, \dots, K \}$, as follows
\begin{equation} \label{eq:backwardprocemdisc_old}
	Y_{k+1}^{\text{EM}} =	Y_{k}^{\text{EM}} +  \gamma (Y_{k}^{\text{EM}} + 2  \ s(T-t_k, \hat{\theta} ,  Y_{k}^{\text{EM}})) + \sqrt{2  \gamma   } \ \bar{Z}_{k+1}, \quad Y_{0}^{\text{EM}} \sim  \pi_{\infty} = \mathcal{N}(0,I_{M}),
\end{equation}
where $\{\bar{Z}_k \}_{k \in \{0, \dots, K +1 \}}$ is a sequence of independent $M$-dimensional Gaussian random variables with zero mean and identity covariance matrix. We emphasize that the approximation \ref{eq:backwardprocemdisc_old} is the one chosen in the original paper \citet{song2021scorebased}. Finally, the continuous-time interpolation of \ref{eq:backwardprocemdisc_old},  for $t \in [0,T]$, is given by
\begin{equation} \label{continuous_time_EM_version}
	\rmd \widehat{Y}_t^{\text{EM}} =  (\widehat{Y}_{\lfloor t/\gamma \rfloor \gamma }^{\text{EM}} + 2 \ s(T-\lfloor t/\gamma \rfloor \gamma, \hat{\theta} , \widehat{Y}_{\lfloor t/\gamma \rfloor \gamma}^{\text{EM}})) \ \text{d}t + \sqrt{2 } \ \text{d}\bar{B}_t, \qquad \widehat{Y}_0^{\text{EM}} \sim \pi_{\infty} = \mathcal{N}(0,I_{M}),
\end{equation}
where $\mathcal{L}(\widehat{Y}_{k}^{\text{EM}}) = \mathcal{L}(Y^{\text{EM}}_k)$ at grid points for each $k \in\{0, \dots, K +1\}$.

\section{Main Results}

\noindent Before introducing the main assumptions of the paper, we present a motivating example.
\subsection{A Motivating Example:  Full Estimates for Multivariate Gaussian Initial Data with Unknown Mean} \label{Motivating_example}
\noindent In this section, we  consider the case where the data distribution follows a multivariate normal distribution with unknown mean and identity covariance, i.e., $X_0 \sim \pi_{\mathsf{D}}=\mathcal{N}(\mu,I_d)$ for some unknown $\mu \in \mathbb{R}^d$ with $M=d$. We show that, by using diffusion models, we are able to generate new data from an approximate distribution that is close to $\pi_{\mathsf{D}}$. More precisely, we provide a non-asymptotic convergence bound with explicit constants in Wasserstein-2 distance between the law of the diffusion model and $\pi_{\mathsf{D}}$, which can be made arbitrarily small by appropriately choosing key parameters on the upper bound. In this example, the estimation of the score function reduces to the estimation of the unknown mean by the methods of convex optimization with optimal dependence of the dimension combined with the most efficient sampling method for high-dimensional Gaussian data.

In this setting by using \ref{OU_process_introduction}, we can derive the score function given by
\begin{equation} \label{eqn:scoreexample}
	\nabla	\log  p_t(x)  = - x + m_t \mu,
\end{equation}
which can be approximated using
\begin{equation} \label{eqn:appscoreexample}
	s(t,\theta,x) = -  x+ m_t \theta, \quad (t,\theta,x) \in [0,T] \times \mathbb{R}^d \times \mathbb{R}^d.
\end{equation}
To obtain an approximated score, i.e., to obtain an optimal value of $\theta$ in \ref{eqn:appscoreexample}, we opt for a popular class of algorithms called stochastic gradient Langevin dynamics (SGLD) to solve the optimisation problem \ref{eq:obj_explicit_score_matching_general}. In addition, we choose the weighting function $\kappa(t)=\sigma_t^2$ as in \citet{NEURIPS2019_3001ef25} and we set  $\epsilon=0$ in \ref{eq:obj_explicit_score_matching_general}.
Using the approximating function \ref{eqn:appscoreexample} in \ref{stochastic_gradient_motivating_example_general}, we can obtain the following expression for the stochastic gradient
\begin{equation} \label{stochastic_gradient_motivating_example}
	\begin{split}
		H(\theta, \mathbf{x})
		& = 2 \sigma^2_t  \sum_{i=1}^d \left(   \sigma_t^{-1}z^{(i)}+(-m_t x_0^{(i)}-\sigma_t z^{(i)}+m_t\theta^{(i)})\right)m_te_i
		\\	&= 2 \sigma^2_t m_t \left(   \sigma_t^{-1}z -m_t x_0 -\sigma_t z+m_t\theta \right),
	\end{split}
\end{equation}
where  $\mathbf{x} = (t, x_0, z) \in \mathbb{R}^m$ with $m=2 d+1$ and $e_i$ denotes the unit vector with $i$-th entry being $1$. Fixing the so-called inverse temperature parameter $\beta>0$, the associated SGLD algorithm is given by
\begin{equation}\label{eqn:SGLDexample}
	\theta_0^{\lambda} := \theta_0, \quad \theta_{n+1}^{\lambda} = \theta_{n}^{\lambda} - \lambda H(\theta_n^{\lambda},\mathbf{X}_{n+1}) + \sqrt{2 \lambda /\beta} \ \xi_{n+1}, \quad n \in \mathbb{N}_0,
\end{equation}
where $\lambda>0$ is often called the stepsize or gain of the algorithm, $(\xi_n)_{n\in \N_0}$ is a sequence of standard Gaussian vectors and
\begin{equation} \label{x_bold_SGLD}
	(\mathbf{X}_n)_{n \in \mathbb{N}_0}  = (\tau_n, X_{0,n}, Z_n)_{n \in \mathbb{N}_0},
\end{equation}
is a sequence of i.i.d.\ random variables generated as follows. For each $n \in \mathbb{N}_0$, we sample $\tau_n$ from $ \text{Uniform}([0,T])$ such that $\mathcal{L}(\tau_n) = \mathcal{L}(\tau) $,  sample $X_{0,n}$ from $\pi_{\text{D}} = \mathcal{N}(\mu,\text{I}_{d})$ such that $\mathcal{L}(X_{0,n}) = \mathcal{L}(X_{0}) $, and sample $Z_n$ from $ \mathcal{N}(0,I_{d})$ such that $\mathcal{L}(Z_{n}) = \mathcal{L}(Z) $. 	
In addition, we consider the case where $\theta_0$, $(\xi_n)_{n \in \mathbb{N}_0}$, and $(\mathbf{X}_n)_{n \in \mathbb{N}_0} $ in \ref{eqn:SGLDexample} are independent and we have $	\mathbb{E}[H(\theta, \mathbf{X}_{n+1})] = \nabla U(\theta)$.

\noindent Throughout this section, we fix
\begin{align}
	0  < \lambda & \leq \min\{\E[\sigma_{\tau}^2 m_{\tau} ^2]/(4\E[\sigma_{\tau}^4 m_{\tau}^4]), 1/(2\E[\sigma_{\tau}^2 m_{\tau} ^2])\}. \label{lambda_value_SGLD}
\end{align}

\begin{table}[htb] 	\caption{Explicit expressions for the constants in  Theorem \ref{main_theorem_toy_example}.}
	\renewcommand{\arraystretch}{2}
	\centering

	\begin{sc}
		
		\scriptsize
		\begin{tabular}{@{}lllll@{}}
			\toprule
			
			\multicolumn{1}{c}{Constant} &  \multicolumn{4}{c}{Full Expression}   \\
			
			
			
			\toprule
			$C_{\mathsf{SGLD},1}$  & & & & $1/ \E[\sigma_{\tau}^2 m_{\tau}^2]$
			\\\hline
			$ C_{\mathsf{SGLD},2}$ & & & & $4  \E[ \sigma_{\tau}^4 m_{\tau}^2(\sigma_{\tau}^{-1}|Z|+ m_{\tau}  |X_0| +\sigma_{\tau} |Z|+m_{\tau}  |\theta^*| )^2 ]/ \E[\sigma_{\tau}^2 m_{\tau}^2]$
			\\\hline
			$T_{\delta}$ & & & & $2^{-1} \ln \left(4 \sqrt{2} \left(\sqrt{\E[ |X_0|^2 ]}+\sqrt{d} \right)/\delta\right)$ \\\hline
			$\beta_{\delta}$& & & & $144d(\sqrt{4/3}+ 2 \sqrt{33})^2/(\delta^2\E[\kappa(\tau)m_{\tau}^2])$ \\\hline
			$\lambda_{\delta}$& & & &
			$
			\begin{aligned}
				&\min\left\{\E[\sigma_{\tau}^2 m_{\tau} ^2]/(4\E[\sigma_{\tau}^4m_{\tau} ^4]),1/(2\E[\sigma_{\tau}^2 m_{\tau}^2]),\delta^2 \E[\sigma_{\tau}^2m_{\tau}^2] \right.\\
				& \left.\times(576(\sqrt{4/3}+ 2 \sqrt{33})^2\E[ \sigma_{\tau}^4m_{\tau}^2\left(\sigma_{\tau}^{-1}|Z|+ m_{\tau}  |X_0| +\sigma_{\tau} |Z|+m_{\tau}  |\theta^*| \right)^2 ])^{-1} \right\}
			\end{aligned}
			$  \\\hline
			$n_{\delta}$& & & & $(\lambda\E[\sigma_{\tau}^2 m_{\tau} ^2])^{-1}\ln\left(12 (\sqrt{4/3}+ 2 \sqrt{33}) \sqrt{\E\left[|\theta_0-\theta^* |^2\right]}/\delta\right)$ with fixed $\lambda \ (\leq \lambda_{\delta})$  \\\hline
			$\gamma_{\delta}$& & & &$\min\left\{\delta /(  4(18 d+132   |\theta^*|^2)^{1/2}), 1/2\right\}$ \\
			\bottomrule
		\end{tabular}
		
	\end{sc}
	\label{tab:convconst}
\end{table}

Theorem \ref{main_theorem_toy_example} states the non-asymptotic (upper) bounds between the generative distribution of the diffusion model $\mathcal{L}(\widehat{Y}_{K+1}^{\text{EM}})$ and the data distribution $ \pi_{\mathsf{D}}$. An overview of the proof can be found in Appendix \ref{appendix_proof_main_result_motivating_example}.
\begin{thm} \label{main_theorem_toy_example}
	Under the setting described in this section, then, for any $T>0$ and $\gamma \in (0,1/2]$,
	\begin{equation} \label{statement_theorem_example_case_inequality}
		\begin{split}
			&	W_2(\mathcal{L}(\widehat{Y}_{K+1}^{\text{EM}}),\pi_{\mathsf{D}})
			\\ & \leq  \sqrt{2} e^{- 2 T} (\sqrt{\E\left[ |X_0|^2 \right]}+\sqrt{  d} )  \\
			&\quad +(\sqrt{4/3}+2 \sqrt{33}) (e^{-n\lambda\E[\sigma_{\tau}^2 m_{\tau} ^2 ] }\sqrt{\E[|\theta_0-\theta^* |^2]}+ \sqrt{dC_{\mathsf{SGLD},1} /\beta}+ \sqrt{\lambda C_{\mathsf{SGLD},2}})\\
			&\quad +\gamma (\sqrt{18 d}+ \sqrt{132|\theta^*|^2} ),
		\end{split}
	\end{equation}
	where $C_{\mathsf{SGLD},1}$  and $C_{\mathsf{SGLD},2}$ are given explicitly in Table \ref{tab:convconst}. 	
	In addition,	the result in \ref{statement_theorem_example_case_inequality} implies that for any $\delta>0$, if we choose $T>T_{\delta}$, $\beta  \geq \beta_{\delta}$, $0< \lambda\leq \lambda_{\delta}$, $n\geq n_{\delta}$, and $0< \gamma < \gamma_{\delta}$, then
	\begin{equation*}
		W_2(\mathcal{L}(\widehat{Y}_{K+1}^{\text{EM}}),\pi_{\mathsf{D}}) <~\delta,
	\end{equation*}
	where $T_{\delta},\beta_{\delta},\lambda_{\delta},n_{\delta}$ and $\gamma_{\delta}$ are given explicitly in Table \ref{tab:convconst}.
\end{thm}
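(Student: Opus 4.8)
The plan is to control $W_2(\mathcal{L}(\widehat{Y}_{K+1}^{\text{EM}}),\pi_{\mathsf{D}})$ by a triangle inequality through the two intermediate processes already introduced in Section~\ref{Introduction_score_based_generative_models_background}: the exact-score reverse process $\widetilde{Y}$ of \eqref{Y_hat_auxiliary} started from $\pi_\infty$, and the auxiliary process $Y^{\text{aux}}$ of \eqref{Y_auxiliary_theta_hat} carrying the learned drift $s(\cdot,\hat\theta,\cdot)$. This yields
\begin{align*}
W_2(\mathcal{L}(\widehat{Y}_{K+1}^{\text{EM}}),\pi_{\mathsf{D}})
&\le W_2(\mathcal{L}(\widetilde{Y}_T),\pi_{\mathsf{D}})
+ W_2(\mathcal{L}(\widetilde{Y}_T),\mathcal{L}(Y_T^{\text{aux}}))
+ W_2(\mathcal{L}(Y_T^{\text{aux}}),\mathcal{L}(\widehat{Y}_{K+1}^{\text{EM}})),
\end{align*}
whose three summands I will match, in order, to the three lines of \eqref{statement_theorem_example_case_inequality}: an \emph{initialization} error (running the reverse dynamics from $\pi_\infty$ rather than from $\mathcal{L}(X_T)$), a \emph{score-approximation} error (the learned $\hat\theta$ versus the true score), and an \emph{Euler--Maruyama discretization} error. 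Throughout I work on the product space $\Omega_1\times\Omega_2$, bounding conditionally on the (random) optimizer output $\hat\theta\in\Omega_1$ and then taking expectation.

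For the first term, I exploit the explicit Gaussian structure. By \eqref{eqn:scoreexample} the reverse drift equals $y+2\nabla\log p_{T-t}(y)=-y+2m_{T-t}\mu$, which is linear with Jacobian $-I$; hence a synchronous coupling of $\widetilde{Y}$ with the genuine reverse process $Y$ (started at $\mathcal{L}(X_T)$, so that $\mathcal{L}(Y_T)=\mathcal{L}(X_0)=\pi_{\mathsf{D}}$) contracts at rate one, giving $W_2(\mathcal{L}(\widetilde{Y}_T),\pi_{\mathsf{D}})\le e^{-T}W_2(\pi_\infty,\mathcal{L}(X_T))$. I then bound the initial gap crudely by second moments, using a triangle inequality through the Dirac mass at the origin together with $\mathbb{E}[|X_T|^2]\le \mathbb{E}[|X_0|^2]+d$ and $\mathbb{E}_{\pi_\infty}[|\cdot|^2]=d$; this produces the stated $2e^{-T}(\sqrt{\mathbb{E}[|X_0|^2]}+\sqrt{3d/2})$ and is the source of the $O(e^{-T})$ rate. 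For the second term I again use a synchronous coupling, now of $\widetilde{Y}$ and $Y^{\text{aux}}$ conditionally on $\hat\theta$: since both drifts are $-y$ plus a spatially constant vector, the noise cancels and the gap solves $\tfrac{d}{dt}(\widetilde{Y}_t-Y^{\text{aux}}_t)=-(\widetilde{Y}_t-Y^{\text{aux}}_t)+2m_{T-t}(\mu-\hat\theta)$ from zero initial condition, so $|\widetilde{Y}_T-Y^{\text{aux}}_T|=(1-e^{-2T})|\mu-\hat\theta|\le|\hat\theta-\theta^*|$, using that $\theta^*=\mu$ is the unique minimizer of the quadratic objective $U(\theta)=\mathbb{E}[\kappa(\tau)m_\tau^2]\,|\theta-\mu|^2$. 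Taking expectations gives $W_2(\mathcal{L}(Y_T^{\text{aux}}),\mathcal{L}(\widetilde{Y}_T))\le\sqrt{\mathbb{E}[|\hat\theta-\theta^*|^2]}$, thereby collapsing the entire score error onto the $L^2$ error of the SGLD iterate.

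The remaining work on the second term is the non-asymptotic analysis of \eqref{eqn:SGLDexample}: because $U$ is strongly convex with modulus $2\,\mathbb{E}[\sigma_\tau^2 m_\tau^2]$ and the stochastic gradient $H$ has a controlled second moment, standard contraction-of-Langevin estimates deliver $\sqrt{\mathbb{E}[|\hat\theta-\theta^*|^2]}\le(\sqrt{4/3}+2\sqrt{33})\big(e^{-n\lambda\mathbb{E}[\sigma_\tau^2 m_\tau^2]}\sqrt{\mathbb{E}[|\theta_0-\theta^*|^2]}+\sqrt{dC_{\mathsf{SGLD},1}/\beta}+\sqrt{\lambda C_{\mathsf{SGLD},2}}\big)$, the three pieces being the transient decay, the temperature ($\beta$) bias, and the stepsize ($\lambda$) bias. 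For the third term I run a standard strong-error analysis of the Euler--Maruyama scheme \eqref{eq:backwardprocemdisc_old} against $Y^{\text{aux}}$ under the same Brownian motion: the linear contractive drift $-y+2m_{T-t}\hat\theta$ prevents accumulation of the one-step error over $[0,T]$, yielding a global $O(\gamma)$ bound whose constant is governed by the dimension $d$ (diffusion contribution) and by $\mathbb{E}[|\hat\theta|^2]$, the latter bounded through $|\theta^*|^2$ by a uniform moment estimate on the SGLD iterates, giving $\gamma(\sqrt{18d}+\sqrt{132|\theta^*|^2})$. Summing the three bounds gives \eqref{statement_theorem_example_case_inequality}, and the thresholds $T_\delta,\beta_\delta,\lambda_\delta,n_\delta,\gamma_\delta$ of Table~\ref{tab:convconst} are chosen so that the resulting contributions sum to less than $\delta$.

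I expect the main obstacle to be the explicit-constant SGLD estimate together with the uniform moment control of $\hat\theta$, and the careful propagation of all estimates across the product space where $\hat\theta$ is random and independent of the reverse noise. The contraction and discretization steps are essentially routine once the linear Gaussian structure is exploited and the score error is reduced to the parameter error $\sqrt{\mathbb{E}[|\hat\theta-\theta^*|^2]}$; the delicate bookkeeping lies in producing the optimization estimate with fully explicit dependence on $d$, $\beta$, $\lambda$, and $n$.
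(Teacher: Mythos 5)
Your proposal is correct and follows essentially the same route as the paper: the same triangle-inequality decomposition through $\widetilde{Y}$ and $Y^{\text{aux}}$, synchronous couplings exploiting the linear Gaussian drift for the initialization and score-approximation errors, the SGLD contraction estimate for $\mathbb{E}[|\hat\theta-\theta^*|^2]$, and a strong-order-one Euler--Maruyama analysis for the discretization term. The only (harmless) differences are bookkeeping: your exact ODE solution for the gap $\widetilde{Y}_t-Y_t^{\text{aux}}$ gives constant $1$ where the paper's Young-inequality route gives $\sqrt{4/3}$, and the combined prefactor $(\sqrt{4/3}+2\sqrt{33})$ actually arises in the paper from merging the score-error term with the $\hat\theta$-dependent part of the discretization constant (using $\gamma\le 1$), not from the SGLD estimate itself.
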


\begin{remark} The result in Theorem \ref{main_theorem_toy_example}  achieves the optimal rate of convergence of order one for Euler or Milstein schemes of SDEs with constant diffusion coefficients. Furthermore, one notes that the dependence of the dimension on the upper bound in \ref{statement_theorem_example_case_inequality} is in the form of $\sqrt{d}$. To the best of the authors' knowledge, the result in Theorem \ref{main_theorem_toy_example} is the first convergence bound where the parameters involved in the sampling and optimization steps of the diffusion models appear explicitly.  
	In the optimization procedure, we use SGLD algorithm \ref{eqn:SGLDexample} to solve the problem \ref{eq:obj_explicit_score_matching_general}.  Since the stochastic gradient $H$ in \ref{stochastic_gradient_motivating_example} is strongly convex by Proposition \ref{Lipschitz_stochastic_gradient_motivating_example}, it has been proved, for instance in \citet{convex}, that, for large enough $\beta>0$, the output of SGLD is an almost minimizer of \ref{eq:obj_explicit_score_matching_general} when $n$ is large. Thus, in the diffusion model, we set $\hat{\theta} = \theta_{n}^{\lambda}$ indicating $\nabla \log p_t(x) \approx s(t,\hat{\theta},x) = s(t,\theta_{n}^{\lambda},x)$ for large values of $n$ and for all $t$ and $x$. Crucially, this allows us to use the established convergence results for SGLD to deduce a sampling upper bound for $W_2(\mathcal{L}(\widehat{Y}_{K+1}^{\text{EM}}),\pi_{\mathsf{D}})$ with explicit constants  in \ref{statement_theorem_example_case_inequality}. Consequently, this bound can be controlled by any given precision level $\delta>0$ by appropriately choosing $T,\beta,\lambda,n$ and $\gamma$.
\end{remark}
This motivating example has focused on exploring the convergence of diffusion-based generative models using a Langevin-based algorithm, specifically SGLD, which is well-known for its theoretical guarantees in achieving global convergence.  However, in the general case discussed in Section \ref{section_general_case}, we do not prescribe a specific optimizer  to choose to minimise the distance between $\hat{\theta}$ and $\theta^*$.

\subsection{General Case} \label{section_general_case}
\noindent In this section, we derive the full non-asymptotic estimates in Wasserstein distance of order two between the target data distribution $\pi_{\mathsf{D}}$  and the generative distribution of the diffusion model under the assumptions stated below. As explained in Section \ref{Introduction_score_based_generative_models_background} (see also Appendix \ref{full_proof_denoising_score_matching}), it could be necessary in the general setting to restrict  $t \in [\epsilon,T]$ for $\epsilon \in (0,1)$ in \ref{eq:obj_explicit_score_matching_general}. Therefore, we truncate the integration in the backward diffusion at $ T- \epsilon$ and run the process $(Y_t)_{t \in [0,T - \epsilon]}$.

\subsubsection{Assumptions for the General Case}
\noindent	In the motivating example in Section \ref{Motivating_example}, we have chosen the SGLD algorithm to solve the optimisation problem \ref{eq:obj_explicit_score_matching_general}. Other algorithms, such as ADAM \citep{KingBa15}, TheoPouLa \citep{SabanisLimTheoPoula} and stochastic gradient descent \citep{jentzen2021strong}, can be chosen as long as they satisfy the following assumption. Fix $\epsilon>0$.
\begin{assumption} \label{general_assumption_algorithm}
	Let $\theta^*$ be a minimiser\footnote{The score-matching optimization problem \ref{eq:obj_explicit_score_matching_general} is not necessarily (strongly) convex.} of \ref{eq:obj_explicit_score_matching_general} and let  $\hat{\theta}$ be the (random) estimator of $\theta^*$ obtained through some approximation procedure such that $	\mathbb{E} [ | \hat{\theta} |^4] < \infty$. There exists $ \widetilde{\varepsilon}_{\text{AL}} >0$  such that
	\begin{equation*}
		\mathbb{E} [ | \hat{\theta} - \theta^{*}|^2 ] <  \widetilde{\varepsilon}_{\text{AL}}.
	\end{equation*}
\end{assumption}
\begin{remark} \label{Control_algorithm}
	As a consequence of Assumption \ref{general_assumption_algorithm}, we have $	\mathbb{E} [ | \hat{\theta} |^2 ] < 2  \widetilde{\varepsilon}_{\text{AL}} + 2 | \theta^{*}|^2$.
\end{remark}




We consider the following assumption on the data distribution.

\begin{assumption} \label{assumption_score_strong_monotonicity}
	The data distribution $\pi_{\mathsf{D}}$ has a finite second moment, is strongly log-concave,  and  \newline \qquad \qquad $\int_{\epsilon}^T   | \nabla \log p_t(0)|^2 \text{d} t < \infty$.
\end{assumption}
\begin{remark} \label{remark_contraction_score}
	As a consequence of Assumption \ref{assumption_score_strong_monotonicity} and the preservation of strong log-concavity under convolution, see, e.g., \citet[Proposition 3.7]{logconcavitybooksaumardwellner},  there exists $L_{\text{MO}} : [0,T] \rightarrow (0,\infty]$  such that for all $t \in [0,T]$ and  $x, \bar{x} \in \mathbb{R}^{\text{M}}$, we have
	\begin{equation}  \label{contraction_score_assumption}
		\langle  \nabla \log p_t(x)   -\nabla \log p_t(\bar{x}), x - \bar{x} \rangle  \le - L_{\text{MO}}(t)  | x - \bar{x}|^2.
	\end{equation}
	The function $L_{\text{MO}}(t)$ in \ref{contraction_score_assumption} has a lower bound for all $t \in [0,T]$, which we denote by $\widehat{L}_{\text{MO}}$. Moreover, Assumption \ref{assumption_score_strong_monotonicity} with the estimate \ref{contraction_score_assumption} implies that the processes in \ref{eq:backwardproc_real_initial_condition_introduction} and in  \ref{Y_hat_auxiliary} have a unique strong solution, see, e.g., \citet[Theorem 1]{krylov1991simple}.
\end{remark}

Next, we consider the following assumption on the approximating function $s$ which is used in Remark \ref{main_theorem_general_relaxed_assumption}.
\stepcounter{assumption}
\begin{intassumption} \label{Assumption_2_without_derivative}
	The function $s : [0,T] \times \mathbb{R}^d \times \mathbb{R}^M \rightarrow \mathbb{R}^M $ is continuously differentiable in $x \in \mathbb{R}^M$. Let $D_1 : \mathbb{R}^d \times \mathbb{R}^d   \rightarrow \mathbb{R}_{+}$, $D_2 :  [0,T]  \times  [0,T]  \rightarrow \mathbb{R}_{+}$ and $D_3 : [0,T] \times [0,T] \rightarrow \mathbb{R}_{+}$ be such that $\int_{\epsilon}^T  \int_{\epsilon}^T D_2(t,\bar{t}) \ \text{d}t \ \text{d}\bar{t} < \infty $ and $\int_{\epsilon}^T  \int_{\epsilon}^T  D_3(t, \bar{t}) \ \text{d}t \ \text{d}\bar{t} < \infty $. For $\alpha \in \left[\frac{1}{2},1 \right]$ and for all $t, \bar{t} \in [0,T]$, $x, \bar{x} \in \mathbb{R}^{\text{M}}$, and $\theta, \bar{\theta} \in \mathbb{R}^d$, we have that
	\begin{equation*}
		\begin{split}
			| s(t, \theta,x) - s(\bar{t}, \bar{\theta},\bar{x}) | & \le  D_1(\theta, \bar{\theta})  |t - \bar{t}|^{\alpha} + D_2(t, \bar{t})  | \theta - \bar{\theta} |
			+ D_3(t,\bar{t})  |x -\bar{x} |,
		\end{split}		
	\end{equation*}
	where $D_1$, $D_2$ and $D_3$ have the following growth in each variable: i.e. there exist $\mathsf{K}_1$, $\mathsf{K}_2$, and $\mathsf{K}_3>0$ such that
	for each $t,\bar{t} \in [0,T]$ and $ \theta, \bar{\theta} \in \mathbb{R}^d$,
	\begin{equation*}
		\begin{split}
			|D_1(\theta, \bar{\theta}) |  & \le \mathsf{K}_1 (1+ | \theta | + | \bar{\theta} | ),
			\qquad 	|D_2(t, \bar{t}) | 	 \le \mathsf{K}_2 (1+ | t |^{\alpha} + | \bar{t}|^{\alpha}),
			\\ 	|D_3(t,\bar{t}) | & \le \mathsf{K}_3 (1+ | t |^{\alpha} + | \bar{t}|^{\alpha}).
		\end{split}
	\end{equation*}
\end{intassumption}
By adding a further condition on the gradient of $s$ in  Assumption \ref{Assumption_2_without_derivative} we are able to achieve the optimal rate of convergence in Theorem \ref{main_theorem_general} below.
\begin{intassumption} \label{Assumption_2}
Let $s$ be as in Assumption \ref{Assumption_2_without_derivative} and there exists $\mathsf{K}_4>0$ such that, for all $x, \bar{x} \in \mathbb{R}^M$ and  for any $k=1, \dots M$,
	\begin{equation*} 
		| \nabla_x s^{(k)}(t,\theta,x) - \nabla_{\bar{x}} s^{(k)} (t,\theta,\bar{x})  | \le  \mathsf{K}_4 (1+2 |t|^{\alpha}) | x -\bar{x}|.
	\end{equation*}
\end{intassumption}

\begin{remark} \label{remark_growth_estimate_neural_network}
	Let $ \mathsf{K}_{\text{Total}} := \mathsf{K}_1+\mathsf{K}_2+\mathsf{K}_3+  |  s(0, 0,0) |>0$. Using Assumption \ref{Assumption_2}, we have
	\begin{equation*}
		\begin{split}
			|s(t, \theta,x)| & \le   \mathsf{K}_{\text{Total}} (1+ | t |^{\alpha} ) (1+ | \theta | + |x| ).
		\end{split}
	\end{equation*}
\end{remark}
\noindent We postpone the proof of Remark \ref{remark_growth_estimate_neural_network} to Appendix \ref{proofs_appendix_preliminary_results_general_case_statements}.
\begin{remark} Assumption~\ref{Assumption_2_without_derivative} and \ref{Assumption_2}  impose Lipschitz continuity on a family of approximating functions $s(\cdot, \cdot, \cdot)$ with respect to the input variables, $t$ and $x$, as well as the parameters $\theta$. It is well-known that the continuity properties of neural networks with respect to $t$, and $x$ are largely determined by the activation function at the last layer. For instance, neural networks with activation functions such as hyberbolic tangent and sigmoid functions at the last layer satisfy the Lipschitz continuity with respect to $t$ and $x$ \citep{virmaux2018lipschitz, fazlyab2019efficient}.
\end{remark}

For the following assumption on the score approximation, we let  $\hat{\theta}$ be as in Assumption \ref{general_assumption_algorithm} and we let $ (Y_t^{\text{aux}})_{t \in [0,T]}$ be the auxiliary process defined in \ref{Y_auxiliary_theta_hat}.
\begin{assumption} \label{assumption_equivalence_global_minimiser_epsilon}
	There exists $\varepsilon_{\text{SN}} >0$ such that
	\begin{equation}\label{score_error_auxiliary}
		\mathbb{E}  \int_0^{T-\epsilon}   | \nabla \log p_{T - r}(Y_r^{\text{aux}}) - s(T-r, \hat{\theta} , Y_r^{\text{aux}})  |^2  \  \text{d} r <  \varepsilon_{\text{SN}} .
	\end{equation}
\end{assumption}

\begin{remark}
	We highlight that the expectation in Assumption \ref{assumption_equivalence_global_minimiser_epsilon} is taken over the auxiliary process \ref{Y_auxiliary_theta_hat}. This density is known since the approximating function $s$ and the estimator $\hat{\theta}$ are known. To the best of authors' knowledge, this is a novelty with respect to previous works \citep{debortoli2022convergence,chen2023sampling,lee2022convergence, lee2023convergence,chen2023improved,conforti2023score,benton2023linear} which consider the unknown density of the forward process (or its numerical discretization).
\end{remark}	

\begin{remark} \label{Remark_8_Assumption_4}
	Assumption \ref{assumption_equivalence_global_minimiser_epsilon} is satisfied, along with Assumption \ref{Assumption_2}, for data distributions satisfying Assumption \ref{assumption_score_strong_monotonicity}, beyond the multivariate Gaussian case discussed in the motivating example. Indeed,
    		\begin{equation} \label{score_approximation_different_scenario}
			\begin{split}
				& \mathbb{E}  \int_0^{T-\epsilon}   | \nabla \log p_{T - r}(Y_r^{\text{aux}}) - s(T-r, \hat{\theta} , Y_r^{\text{aux}})  |^2  \  \text{d} r
				\\ & \le 2 \ \mathbb{E}  \int_0^{T-\epsilon}   | \nabla \log p_{T - r}(Y_r^{\text{aux}}) - s(T-r, \theta^* , Y_r^{\text{aux}})  |^2  \  \text{d} r
				\\ & \quad + 2 \  \mathbb{E}  \int_0^{T-\epsilon}   |  s(T-r, \theta^* , Y_r^{\text{aux}}) -  s(T-r, \hat{\theta} , Y_r^{\text{aux}})   |^2 \  \text{d} r.
                \end{split}
		\end{equation}
If $\theta^*$ is such that $s(t,\theta^*,x)=\nabla \log p_{t}(x)$ as in the motivating example in Section \ref{Motivating_example}, then the first term on the right-hand side of \ref{score_approximation_different_scenario} vanishes.  Otherwise, we expect that the first term on the right-hand side above to be small. Indeed, by the definition of strong log-concavity, see e.g. \citet[Definition 2.9]{logconcavitybooksaumardwellner}, we have
\begin{equation}  \label{score_strongly_log}
	\begin{split}
		\nabla \log p_t(x) & = \nabla \log (g(x)) + \nabla \log (	\phi_t(x) ),
	\end{split}
\end{equation}
 where $g$ is some log-concave function  and $\phi_t$ is a multivariate Gaussian density. If $g$ is a multivariate logistic distribution\footnote{The multivariate logistic distribution is an example of elliptical distribution  widely used in portfolio risk management \citep{xiao2015black,owen1983class}.} \citep{malik1973multivariate}, then its score function is given by
\begin{equation} \label{score_function_logistic}
	\begin{split}
	\frac{\partial}{\partial x_k}	\log(	g(x)) & =  - 1 - (M+1) \frac{- \exp(-x_k)}{ 1+ \sum_{k=1}^M \exp(-x_k) } , \qquad \quad -\infty < x_k < \infty,  \qquad k=1,\dots, M,
	\end{split}
\end{equation}
while the Hessian of $\log(	g(x)) $ is bounded (see Appendix \ref{Additional_discussions_about_Assumption_4_Appendix} for more details).
Therefore, the score function of the multivariate logistic distribution    \ref{score_function_logistic} is Lipschitz, and, as a consequence, $\nabla \log p_t(x)$ in \ref{score_strongly_log} is still Lipschitz. Thus, we expect to have a good control on the first term on the right-hand side of \ref{score_approximation_different_scenario} since the function $s$ satisfying Assumption \ref{Assumption_2} approximates a Lipschitz score function.  In general, there exists a function
\begin{equation} \label{function_c_remark_8}
		c(t,x):=	\nabla \log p_{t}(x) - s(t, \theta^* , x), \qquad t >0, \ x \in \mathbb{R}^M,
		\end{equation}
      and therefore one has to define a log-concave function $g$ such that   the first term on the right-hand side of \ref{score_approximation_different_scenario} is small. Clearly, this is a problem specific challenge and it may not always have a good solution. The second term on the right-hand side of \ref{score_approximation_different_scenario} is controlled by Assumption \ref{Assumption_2}
        and  Assumption \ref{general_assumption_algorithm} (see Appendix \ref{Additional_discussions_about_Assumption_4_Appendix} for more details).

\end{remark}

\begin{remark}
	We conduct a numerical experiment to show the convergence of diffusion-based generative models under Assumption \ref{general_assumption_algorithm}, \ref{assumption_score_strong_monotonicity}, \ref{Assumption_2} and \ref{assumption_equivalence_global_minimiser_epsilon}. We consider the case where  $X_0 \sim \pi_{\mathsf{D}}=\mathcal N (\mu, I_d)$ with $\mu = (-1.2347, -0.89244)$ and $d=2$. We use SGLD algorithm \ref{eqn:SGLDexample} to solve the optimization problem \ref{objective_with_denoising_score_matching_general}, with $\kappa(t) = \sigma_t^2$, $\epsilon=0$, $T=2$, $\lambda = 5\times 10^{-5}$, $\beta=10^{12}$, and $n=4\times 10^{4}$. At each iteration, $128$ mini-batch samples are used to estimate the stochastic gradient. Then, we generate samples using the Euler-Maruyama approximation \ref{eq:backwardprocemdisc_old} with $\gamma = 10^{-3}$ and $s$ given in \ref{eqn:appscoreexample}.
	At each iteration $n$, we evaluate the quality of $100,000$ generated samples using the Wasserstein distance of order two. In addition, we compute the $L^2$ error between the score function and the approximated function $s$ with $\theta_n^\lambda$, using the auxiliary process as in Assumption \ref{assumption_equivalence_global_minimiser_epsilon}. Figure~\ref{fig:toy} (a) demonstrates the error in Assumption \ref{assumption_equivalence_global_minimiser_epsilon} vanishes as the generative model converges, where the degree of convergence is measured by $W_2(\mathcal{L}(Y_K^{\text{EM}}),\pi_{\mathsf{D}})$. This empirical observation justifies Assumption \ref{assumption_equivalence_global_minimiser_epsilon}. Moreover, we explore the relationship between the quality of generated samples and the error using \ref{objective_with_denoising_score_matching_general} via denoising score matching in Figure~\ref{fig:toy} (b).
	
	
	
	\begin{figure}
		\centering
		\begin{subfigure}[b]{0.47\textwidth}
			\includegraphics[width=\textwidth]{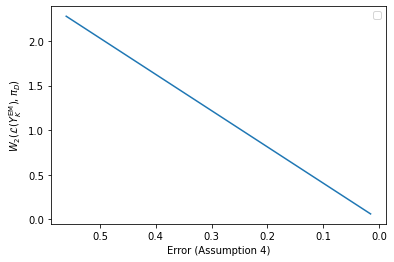}
			\caption{Error with Assumption \ref{assumption_equivalence_global_minimiser_epsilon}.}
		\end{subfigure}
		\begin{subfigure}[b]{0.47\textwidth}
			\includegraphics[width=\textwidth]{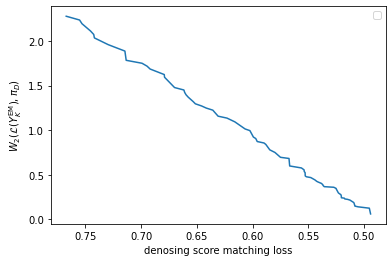}
			\caption{Error using $U(\theta)$ in \ref{objective_with_denoising_score_matching_general}.}
		\end{subfigure}
		\captionsetup{format=hang}
		\caption{The quality of generated samples with respect to (a) the error with Assumption \ref{assumption_equivalence_global_minimiser_epsilon} and (b) the error obtained through denoising score matching using $U(\theta)$ in \ref{objective_with_denoising_score_matching_general}.}
		\label{fig:toy}
	\end{figure}
\end{remark}


\subsubsection{Full Estimates for the General Case}

\noindent	The main result under the general setting is stated as follows. An overview of the proof can be found in Appendix \ref{appendix_proof_main_result_general_case}.
\begin{thm} \label{main_theorem_general}
	Let Assumptions \ref{general_assumption_algorithm}, \ref{assumption_score_strong_monotonicity},  \ref{Assumption_2} and  \ref{assumption_equivalence_global_minimiser_epsilon} hold.
	Then, there exist constants $C_1$, $C_2$, $C_3$ and $C_4 >0$ such that	for any $T>0$ and $\gamma, \epsilon \in (0,1)$, 
	\begin{equation} \label{final_bound_first_inequality_statement_theorem}
		\begin{split}
			W_2(\mathcal{L}(Y_K^{\text{EM}}),\pi_{\mathsf{D}})  & \le C_1 \sqrt{  \epsilon} + C_2 e^{      - 2 \widehat{	L}_{\text{MO}} (T-\epsilon) -\epsilon} +  C_3(T,\epsilon)  \sqrt{\varepsilon_{\text{SN}}} + C_4(T,\epsilon)  \gamma^{\alpha},
		\end{split}
	\end{equation}
	where $C_1$, $C_2$, $C_3$ and $C_4$ are given explicitly in Table \ref{tab:convconst_general} (Appendix \ref{Table_of_constants_appendix}).
	In addition, the  result in \ref{final_bound_first_inequality_statement_theorem} implies that for any $\delta>0$, if we choose $0 \le \epsilon<\epsilon_{\delta}$, $T>T_{\delta}$, $ 0 < \varepsilon_{\text{SN}} < \varepsilon_{\text{SN}, \delta}$ and $0< \gamma < \gamma_{\delta}$ with  $\epsilon_{\delta}$, $T_{\delta}$, $\varepsilon_{\text{SN}, \delta}$, and $\gamma_{\delta}$ given in Table \ref{tab:convconst_general}, then
	\begin{equation*}
		W_2(\mathcal{L}(Y_{K}^{\text{EM}}),\pi_{\mathsf{D}}) <~\delta.
	\end{equation*}		
\end{thm}
\begin{remark}
	The error bounds in  \ref{final_bound_first_inequality_statement_theorem} are not as good as the ones provided in Theorem \ref{main_theorem_toy_example} due to the general form of the approximating function $s$.  We emphasize that the optimal rate of convergence of order $\alpha \in \left[\frac{1}{2},1 \right]$ for the Euler or Milstein scheme of SDEs with constant diffusion coefficients is achieved using the Lipschitz continuity on the derivative of $s$ in Assumption \ref{Assumption_2}. In the explicit expression of $C_4$ in Table \ref{tab:convconst_general}, the dependence of the dimension is $O(M)$ due to numerical techniques from \citet{kumar2019milstein} used in the proof of Theorem \ref{main_theorem_general} to achieve the optimal rate of convergence.  
\end{remark}

\begin{remark}  \label{main_theorem_general_relaxed_assumption}
   If we replace Assumption \ref{Assumption_2} with Assumption \ref{Assumption_2_without_derivative} in Theorem \ref{main_theorem_general}, then the bound \ref{final_bound_first_inequality_statement_theorem} becomes 
    	\begin{equation*} 
		\begin{split}
			W_2(\mathcal{L}(Y_K^{\text{EM}}),\pi_{\mathsf{D}})  & \le C_1 \sqrt{  \epsilon} + C_2 e^{      - 2 \widehat{	L}_{\text{MO}} (T-\epsilon) -\epsilon} +  C_3(T,\epsilon)  \sqrt{\varepsilon_{\text{SN}}} + \widetilde{C}_4(T,\epsilon)  \gamma^{1/2},
					\end{split}
		\end{equation*}
        where $C_1$, $C_2$, $C_3$ are the same as in Theorem \ref{main_theorem_general} and $\widetilde{C}_4(T,\epsilon)$ contains a better dependence on the dimension, namely $O(\sqrt{M})$, than the one achieved in Theorem \ref{main_theorem_general}, and it is explicitly provided in Table \ref{tab:convconst_general} (Appendix \ref{Table_of_constants_appendix}). Although this relaxation achieves the same dependence on the data dimension as in the motivating example in Theorem \ref{main_theorem_toy_example}, it leads to a worse rate of convergence of order $1/2$.
\end{remark}

\section{Related Work and Comparison}\label{sec:related_work}
\noindent We describe assumptions and results of various existing works in our notation and framework to facilitate the comparisons with our results which are provided in 2-Wasserstein distance. Beyond being theoretical relevant, the choice of the use of this metric is motivated by its applications in generative modeling. A popular performance metric currently used to examine the quality of the images produced by generative models is the Fr{\'e}chet Inception Distance (FID) which was originally introduced in \citet{heusel2017gans}. This metric measures the Fr{\'e}chet distance between the distribution of generated samples and the distribution of real samples, assuming Gaussian distributions, which is equivalent to the Wasserstein distance of order two. Our results provided under this metric are therefore also relevant for practical applications.

We can classify the previous approaches based on their assumptions on two key quantities: (i) score approximation error and (ii) assumptions on the data distribution. Based on these approaches, we give a brief overview of some of the most relevant recent contributions in the field.

\subsection{Score Approximation Assumptions} 
\noindent The early work of \citet{de2021diffusion} requires an $L^{\infty}$-bound on the score approximation, which  is in contrast with the $L^2$ nature of the score-matching optimization problem \ref{eq:obj_explicit_score_matching_general}. Most of the recent analysis of score-based generative models, see, e.g., \citet{chen2023sampling,lee2022convergence,lee2023convergence,chen2023improved,conforti2023score,benton2023linear}, have considered assumptions (in $L^2$) on the absolute error of the following type, i.e. for any $k \in \{0, \dots, N-1\}$, there exists $\underline{\varepsilon}>0$
\begin{equation} \label{general_assumptions_related_works}
	\mathbb{E} \left[ |\nabla \log p_{t_k}(X_{t_k}) - s(t_k, \hat{\theta}, X_{t_k}) |^2\right] \leq \underline{\varepsilon},
\end{equation}
where the expectation is taken with respect to the unknown $\{p_t \}_{t \in [0,T]}$ and $\hat{\theta}$ is a deterministic quantity. In \citet{chen2023improved,conforti2023score,benton2023linear}, the assumption \ref{general_assumptions_related_works} is written in integral (averaged) form. In \citet{lee2023convergence}, the bound in \ref{general_assumptions_related_works} is not uniform over $t$, i.e. $\underline{\varepsilon}_t:= \frac{\underline{\varepsilon}}{\sigma_t^4}$ and this allows the score function to grow as  $\frac{1}{\sigma_t^2}$ as $t\rightarrow  0$. The authors in \citet{conforti2023score} use the relative $L^2$-score approximation error such as
\begin{equation} \label{general_assumptions_related_works_relative_score_conforti}
	\mathbb{E} \left[ |2 \nabla \log {\tilde{p}}_{t_k}(X_{t_k}) - \tilde{s}(t_k, \hat{\theta}, X_{t_k}) |^2\right] \leq \underline{\varepsilon} \  \mathbb{E} \left[ | 2 \nabla \log {\tilde{p}}_{t_k}(X_{t_k})   |^2\right],
\end{equation}
where the expectation is with respect to the unknown density of the law of $X_t$ against the Gaussian distribution $\pi_{\infty}(x)$, i.e. $\tilde{p}_t(x): = p_t(x) / \pi_{\infty}(x)$, and $\tilde{s}(t,\theta,x)$ in \ref{general_assumptions_related_works_relative_score_conforti} is the approximating function for the score function of $\tilde{p}_t$.
A pointwise assumption in \citet{debortoli2022convergence}, given by
\begin{equation} \label{eq:de_bortoli_assumption_score_initial}
	| \nabla \log p_t(x) - s(t, \hat{\theta}, x) | \leq C(1 + |x |)/\sigma_t^2,
\end{equation}
for $C \ge 0$, is considered under the manifold (compact) setting. The assumption \ref{eq:de_bortoli_assumption_score_initial} takes into account the explosive behaviour of the score function as $t \rightarrow 0$.  In an attempt to obtain a weaker control than \ref{eq:de_bortoli_assumption_score_initial}, the assumption \citet[A5]{debortoli2022convergence} is used, namely
\begin{align}\label{eq:de_bortoli_assumption_score}
	\mathbb{E}\left[ | \nabla \log p_{T-{t_k}}(Y^\text{EI}_{k}) - s(T-{t_k}, \hat{\theta}, Y^\text{EI}_{k})  |^2\right] \leq C^2 \mathbb{E}\left[\left(1 + |Y^\text{EI}_{k} |^2\right) \right]/\sigma_{T-{t_k}}^4.
\end{align}
We note that, unlike  \ref{general_assumptions_related_works} and \ref{general_assumptions_related_works_relative_score_conforti}, the expectation in \ref{eq:de_bortoli_assumption_score} is taken with respect to the algorithm $Y_{k}^\text{EI}$ given by the exponential integrator (EI) discretization scheme\footnote{This analysis can be extended to the Euler-Maruyama numerical scheme in $Y_{k}^\text{EM}$ \ref{eq:backwardprocemdisc_old}.}, which has known density.
However,  the bounds of \citet[Theorem H.1]{debortoli2022convergence} in Wasserstein distance of order one derived under this assumption scale exponentially in problem parameters such as the diameter of the manifold $\mathfrak{M}$ and the inverse of the early stopping time parameter $\epsilon$, i.e. $\exp(O(\text{diam}(\mathfrak{M})^2/\epsilon))$. Furthermore, an assumption similar to \ref{eq:de_bortoli_assumption_score} is considered in a very recent and concurrent result\footnote{The concurrent paper \citet{gao2023wasserstein} appeared few days earlier when the first draft of this work was made available online.} \citet{gao2023wasserstein}:
\begin{equation*}
	\sup_{k=1,\dots, K} \left( \E \left[ | \nabla \log p_{T-(k-1)\eta}(Y^\text{EI}_{k}) - s(T-(k-1)\eta , \hat{\theta}, Y^\text{EI}_{k}) |^2 \right] \right)^{1/2} \le \underline{\varepsilon},
\end{equation*}
where $\eta>0$ is the stepsize and $T=K \eta$ with $K \in \mathbb{N}$.

We emphasize that the existing results in the literature do not take the expectation with respect to the stochastic optimizer $\hat{\theta}$. This, together with the use of our novel auxiliary process \ref{Y_auxiliary_theta_hat} that uses only known information, allows us to deduce state-of-the-art  bounds in the Wasserstein distance of order two in the following sense. The bounds scale  polynomially in the data dimension $M$, i.e. $O(\sqrt{M})$
as shown in  Theorem \ref{main_theorem_toy_example} and Remark \ref{main_theorem_general_relaxed_assumption}, and achieve the optimal rate of convergence: of order one in Theorem \ref{main_theorem_toy_example} and of order $\alpha \in \left[\frac{1}{2},1 \right]$ in Theorem \ref{main_theorem_general}.



\subsection{Assumptions on the Data Distribution} \label{Assumption_data_distribution_subsection}


\noindent The vast majority of the results available in the literature are in KL divergence and TV distance. For two general data distributions $\mu$ and $\nu$, there is no known relationship between their KL divergence and their $W_2$. However, for strongly log-concave data distributions, a bound on the Wasserstein distance of order two in terms of KL divergence follows from an extension of Talagrand's inequality \citet[Corollary 7.2]{gozlan2010transport}. 

Some convergence results in different metrics can be deduced under the following types of assumptions. Convergence bounds in Wasserstein distance of order one with exponential complexity has been obtained in  \citet{debortoli2022convergence} under the so-called manifold hypothesis, namely assuming that the target distribution is supported on a lower-dimensional manifold or is given by some empirical distribution. Moreover, the results in TV distance in \citet{lee2022convergence} and in KL divergence  \citet{yingxi2022convergence} assumed that the data distribution satisfies a logarithmic Sobolev inequality and the score function is Lipschitz resulting in convergence bounds characterized by polynomial complexity. By replacing the requirement that the data distribution satisfies a functional inequality with the assumption that $\pi_{\text{D}}$ has finite KL divergence with respect to the standard Gaussian and by assuming that the score function for the forward process is Lipschitz, the authors in \citet{chen2023sampling} managed to derive bounds in TV distance which scale polynomially in all the problem parameters. By requiring only the Lipschitzness of the score at the initial time instead of the whole trajectory, the authors in \citet[Theorem 2.5]{chen2023improved} managed to show, using an exponentially decreasing then linear step size, convergence bounds in KL divergence with quadratic dimensional dependence and logarithmic complexity in the Lipschitz constant.  In the work by \citet{benton2023linear}, the authors provide convergence bounds in KL divergence that are linear in the data dimension, up to logarithmic factors, using early stopping and assuming finite second moments of the data distribution. A careful examination of \citet[Proof of Theorem 1 and Corollary 1]{benton2023linear} reveals that the authors still require the uniqueness of solutions for the backward SDE \ref{eq:backwardproc_real_initial_condition_introduction}. For instance, either measurability and boundedness of the drift \citep{stroock1997multidimensional} or integrability of the drift \citep{rockner2023sdes} should be imposed if the uniqueness  of weak solutions is considered. Therefore, additional conditions on the score function, depending on the type of uniqueness of solutions considered, are still needed in their bounds in Theorem 1 and Corollary 1. 

Assuming the finiteness of the second moment of the data distribution and using an EI discretization scheme with constant and exponentially decreasing step sizes, the authors in \citet[Corollary 2.4]{conforti2023score} derive a KL divergence bound with early stopping, which scales linearly in the data dimension up to logarithmic factors. In terms of dependence of the data dimension, this bound is slightly worse than the bound in $W_2$ provided in Remark \ref{main_theorem_general_relaxed_assumption}, which does not include the logarithmic dependence of $M$. Bounds in KL without early stopping have been derived in \citet{conforti2023score} for data distributions with finite Fisher information with respect to the standard Gaussian distribution. The bounds in \citet[Theorem 2.1 and 2.2]{conforti2023score} scale linearly in the Fisher information when an EI discretization scheme with constant step size is used and logarithmically in the Fisher information when an exponential-then-constant step size \citet[Theorem 2.3]{conforti2023score} is chosen. We note that \citet[Theorem 2.1 and 2.2]{conforti2023score} cannot achieve the optimal rate of convergence of Theorem \ref{main_theorem_toy_example} in the motivating example. 

We summarise the results of \citet{chen2023improved, benton2023linear, conforti2023score} and compare them to ours in Table \ref{table_comparison_results_non_early_stopping} and Table \ref{table_comparison_results_early_stopping}, making the distinction based on whether the early stopping rule is applied. In addition, a careful examination of \citet[Proof of Theorem 2.2]{chen2023improved} reveals that the authors require the uniqueness of solutions for the backward SDE \ref{eq:backwardproc_real_initial_condition_introduction}. For instance, when strong solutions are considered, this implies that the score function should be (at least) monotone in the space variable, and a suitable integrability condition in $t$ (similar to our Assumption \ref{assumption_score_strong_monotonicity}) is still needed to guarantee uniqueness of the solution (see, e.g., \citet[Theorem 1]{krylov1991simple}). For weak solutions, we refer to the discussion of \citet[Proof of Theorem 1 and Corollary 1]{benton2023linear} above. 

We emphasize that in Theorem \ref{main_theorem_general}, we do not assume  the score function to be Lipschitz continuous with a uniformly bounded Lipschitz constant. This is particularly useful for future work in nonconvex settings, where the upper bound estimates will be independent of the (potentially large) Lipschitz constant of the score function, which could otherwise hide additional dimensional dependencies. The requirement $\int_{\epsilon}^T   | \nabla \log p_t(0)|^2 \text{d} t < \infty$ in Assumption \ref{assumption_score_strong_monotonicity} is weaker than the Lipschitz assumption on the score function, but it is still difficult to verify in practical applications.

As pointed out in \citet[Section 4]{chen2023sampling}, some type of log-concavity assumption on the data distribution is needed to derive polynomial convergence rates in 2-Wasserstein distance. This justifies the need for our Assumption \ref{assumption_score_strong_monotonicity}. The
concurrent result \citet{gao2023wasserstein} has a similar assumption. For completeness, we also mention that the results in the same metric, which have appeared after the first version of our preprint on arXiv (e.g., \citet{tang2024contractive, strasman2025an}), continue to assume strong log-concavity of the data distribution.


\begin{table}[H]
		\caption{Summary of previous bounds without early stopping and our result in Theorem \ref{main_theorem_toy_example}. Bounds expressed in terms of the number of steps required to guarantee an error of at most $\delta$ in the stated metric. The relative Fisher information of the target $\pi_{\text{D}}$ against standard Gaussian measure $\pi_{\infty}$ is denoted by $\text{FI}(\pi_{\text{D}}|\pi_{\infty})$. All the bounds assume that $\pi_{\text{D}}$ has finite second moments.}
	\begin{tabular}{ |p{2cm} | p{3cm}|p{3cm}| p{4.5cm}| p{2 cm} | } 
		\hline
		Optimization \quad problem solved  &	Regularity  \quad condition	 & Metric & Complexity & Reference  \\
		\hline 
		No & $\forall t$, $\nabla \log p_t$ $L$-Lipschitz & $\sqrt{\text{KL}(\mathcal{L}(\widehat{Y}_{K+1}^{\text{EI}}) || \pi_{\text{D}} )}$ & $\tilde{O}\left(\frac{ML^2}{\delta}  \right)$ & \citet[Theorem  2.1]{chen2023improved} 
		\\ 
		No &    \citet[H2]{conforti2023score}, i.e.	$\text{FI}(\pi_{\text{D}}|\pi_{\infty}) < \infty$   & $	\sqrt{\text{KL}(\mathcal{L}(\widehat{Y}_{K+1}^{\text{EI}})  || \pi_{\text{D}}  )}$  & $	\tilde{O}\left(\sqrt{\frac{M +\E[|X_0|^2]}{\delta} \log^2(\mathfrak{L})} 
		\right) $, with $ \mathfrak{L}: =M^{-1} \text{FI}(\pi_{\text{D}}|\pi_{\infty})$   & \citet[Theorem 2.3]{conforti2023score} 
		\\ 
		Yes &	$\pi_{\text{D}} \sim \mathcal{N}(\mu,I_M)$  & $W_2( \mathcal{L}(\widehat{Y}_{K+1}^{\text{EM}}),\pi_{\mathsf{D}})$  & $\tilde{O}(\frac{\sqrt{M}}{\delta}    ) $ &  Theorem \ref{main_theorem_toy_example}
		\\  
		\hline
	\end{tabular}
	\label{table_comparison_results_non_early_stopping}
\end{table}

\newpage


\begin{table}[H]
		\caption{Summary of previous bounds with early stopping and our results in Remark \ref{main_theorem_general_relaxed_assumption}.   The results in \citet[Theorem 2.2]{chen2023improved}, \citet[Theorem 1]{benton2023linear}, and \citet[Corollary 2.4]{conforti2023score}  are stated for the smoothed version of the data, denoted by $\pi^{\epsilon}_{\text{D}} $ and using  the score approximation assumption \ref{general_assumptions_related_works} with $\underline{\varepsilon}$. Therefore, an additional error should be added to their bounds, as the distance between $\pi^{\epsilon}_{\text{D}} $ and $\pi_{\text{D}} $ scales with $\sqrt{M}$ in $W_2$ (see Appendix \ref{appendix_proof_main_result_general_case}). } 
	\begin{tabular}{ |p{3cm} | p{2.8cm}|p{7cm}| p{2cm}| p{2 cm} | } 
		\hline
			Assumption on the data 	 & Metric & Error bound & Reference  \\
		\hline
		 Finite second moments of $\pi_{\text{D}}$ and \ref{general_assumptions_related_works}. See Section \ref{Assumption_data_distribution_subsection} for conditions on $\nabla \log p_t$ for the uniqueness of $Y_t$.   & $\sqrt{\text{KL}(\mathcal{L}(\widehat{Y}_{K}^{\text{EI}})  || \pi^{\epsilon}_{\text{D}}  )}$ & $ \sqrt{(\mathbb{E}[|X_0|^2] +M)e^{-T} }+ \sqrt{T \underline{\varepsilon}}  +M (T+\log \epsilon^{-1})/\sqrt{N} $,  (+ additional bounds between $\pi^{\epsilon}_{\text{D}}$ and  
    $\pi_{\text{D}}$)  & \citet[Theorem  2.2]{chen2023improved}  \\ 
		   Finite second moments of $\pi_{\text{D}}$ and \ref{general_assumptions_related_works}. See Section \ref{Assumption_data_distribution_subsection} for conditions on $\nabla \log p_t$ for the uniqueness of $Y_t$.   & $\sqrt{\text{KL}(\mathcal{L}(\widehat{Y}_{K}^{\text{EI}})  || \pi^{\epsilon}_{\text{D}}  )}$  & $ \sqrt{\underline{\varepsilon}} + \sqrt{\tilde{O}(M/N)} + \sqrt{M e^{-2 T}}$,
    (+ additional  bounds between $\pi^{\epsilon}_{\text{D}}$ and  
    $\pi_{\text{D}}$) & \citet[Theorem 1]{benton2023linear} 
    \\ Finite second moments of $\pi_{\text{D}}$ and \ref{general_assumptions_related_works}. &  $\sqrt{\text{KL}(\mathcal{L}(\widehat{Y}_{K}^{\text{EI}})  || \pi^{\epsilon}_{\text{D}}  )}$  &  $ \sqrt{(\mathbb{E}[|X_0|^2] +M)e^{-T} }+ \sqrt{T \underline{\varepsilon}} + [c[(M+ \mathbb{E}[|X_0|^2])(\log (M+\mathbb{E}[|X_0|^2]) + \log(\epsilon^{-1}) +1) ] ]^{1/2}$,  with $c,\epsilon \in (0,1/2]$ (+ additional  bounds between $\pi^{\epsilon}_{\text{D}}$ and  
    $\pi_{\text{D}}$) &  \citet[Corollary 2.4]{conforti2023score} 
		\\ 
			Assumption \ref{assumption_score_strong_monotonicity} and Assumption \ref{assumption_equivalence_global_minimiser_epsilon}.  & $	W_2( \mathcal{L}(\widehat{Y}_{K}^{\text{EM}}), \pi_{\mathsf{D}}) $   &  $  O(\sqrt{M}) \sqrt{  \epsilon} + O(\sqrt{M}) e^{      - 2 \widehat{	L}_{\text{MO}} (T-\epsilon) -\epsilon} +   O(e^{(1+  \zeta - 2 \widehat{ L}_{\text{MO}} )(T-\epsilon) } )  \sqrt{\varepsilon_{\text{SN}}} + O(\sqrt{M} e^{T^{2\alpha +1}} T^{2 \alpha +1} \widetilde{\varepsilon}^{1/2}_{\text{AL}})  \gamma^{1/2}
$ &   Remark \ref{main_theorem_general_relaxed_assumption}
        \\ 
		\hline
	\end{tabular}
	\label{table_comparison_results_early_stopping}
\end{table}

\subsubsection*{Acknowledgments}
This work has been supported by The Alan Turing Institute through the Theory and Methods Challenge Fortnights event “Accelerating generative models and nonconvex optimisation”, which took place at The Alan Turing Institute headquarters. This work was made possible by a Research-in-Groups programme funded by the International Centre for Mathematical Sciences, Edinburgh.
This work is supported by the Guangzhou-HKUST(GZ) Joint Funding Programs (No. 2024A03J0630 and No. 2025A03J3322). This work has received funding from the Ministry of Trade, Industry and Energy (MOTIE) and Korea Institute for Advancement of Technology (KIAT) through the International Cooperative R\&D program (No.P0025828). S.S. was supported by the Alan Turing Institute under the EPSRC grant EP/N510129/1.

\bibliography{main}
\bibliographystyle{tmlr}

\appendix
\section*{Appendix}

\section{Objective Function via Denoising Score Matching} \label{full_proof_denoising_score_matching}
\noindent In this section, we show that the objective function $U$ in \ref{eq:obj_explicit_score_matching_general} can be written into \ref{objective_with_denoising_score_matching_general} using denoising score matching \citep{vincent2011connection} and the OU representation
\begin{equation}\label{eq:OUdistribtuion_distribution}
	X_t\overset{\text{d}}{=} m_t X_0+\sigma_t Z, \quad m_t = e^{-  t}, \quad \sigma_t^2 = 1-e^{-2  t}, \quad Z \sim \mathcal{N}(0,I_{M}),
\end{equation}
where $\overset{\text{d}}{=}$ denotes equality in distribution. We start by noticing that
\begin{equation} \label{Inner_product_explicit_score_matching_denoising_score_matching}
	\begin{split}
		&  \int_{\mathbb{R}^M}  \langle \nabla \log p_{t}(x), s(t, \theta , x) \rangle \ p_t(x)  \ \text{d}x
		\\ & = \int_{\mathbb{R}^M}  \langle  \nabla  p_{t}(x), \ s(t, \theta , x) \rangle  \ \text{d}x
		\\ & = \int_{\mathbb{R}^M} \left \langle  \nabla_{x} \int_{\mathbb{R}^M} p_{0}(\tilde{x}) p_{t|0}(x | \tilde{x}) \ \text{d}\tilde{x}, \ s(t, \theta , x) \right \rangle  \ \text{d}x
		\\		& = \int_{\mathbb{R}^M} \left \langle   \int_{\mathbb{R}^M} p_{0}(\tilde{x}) p_{t|0}(x | \tilde{x}) \nabla_{x} \log(p_{t|0}(x | \tilde{x}))  \ \text{d}\tilde{x},  \ s(t, \theta , x) \right \rangle  \ \text{d}x
		\\ & = \int_{\mathbb{R}^M}  \int_{\mathbb{R}^M}   p_{t,0}(x, \tilde{x})  \left \langle  \nabla_{x} \log(p_{t|0}(x | \tilde{x})) ,  \ s(t, \theta ,x) \right \rangle  \ \text{d}x  \ \text{d}\tilde{x},
	\end{split}
\end{equation}
where $p_{t|0}$ is the density of the transition kernel associated with \ref{OU_process_introduction} and $ p_{t,0}$ is the joint density of $X_t$ and $X_0$. Using \ref{Inner_product_explicit_score_matching_denoising_score_matching} in the objective function given in \ref{eq:obj_explicit_score_matching_general} and the OU representation \ref{eq:OUdistribtuion_distribution}, we have
\begin{equation} \label{switch_explicit_to_denosing_score_matching}
	\begin{split}
		U(\theta)  	 & =  \int_{\epsilon}^T \frac{\kappa(t) }{T-\epsilon}  \int_{\mathbb{R}^M}   \left(  | \nabla \log p_{t}(x)|^2   -  2   \langle \nabla \log p_{t}(x), s(t, \theta , x) \rangle  + | s(t, \theta , x)|^2 \right)  \ p_t(x)  \ \text{d}x  \ \text{d} t
		\\	& 	 =   \int_{\epsilon}^T \frac{\kappa(t) }{T-\epsilon }  \int_{\mathbb{R}^M} \int_{\mathbb{R}^M} \Bigg( | \nabla \log p_{t}(x)|^2 - 2 \left \langle  \nabla_{x} \log(p_{t|0}(x |\tilde{x})) ,  \ s(t, \theta , x) \right \rangle
		\\ & \qquad \qquad \qquad \qquad \qquad   + | s(t, \theta , x)|^2 \Bigg)  p_{t,0}(x,\tilde{x}) \   \text{d}x  \ \text{d}\tilde{x} \ \text{d} t
	\\	& = \int_{\epsilon}^T \frac{\kappa(t) }{T-\epsilon }  \int_{\mathbb{R}^M} \int_{\mathbb{R}^M}   | \nabla_{x} \log p_{t|0}(x | \tilde{x}) - s(t,\theta,x)|^2   p_{t,0}(x, \tilde{x})   \  \text{d}x  \ \text{d}\tilde{x}  \ \text{d} t
		\\ & 	\quad  +  \int_{\epsilon}^T \frac{\kappa(t)}{T-\epsilon }      \int_{\mathbb{R}^M}    | \nabla \log p_{t}(x)|^2 p_{t}(x)  \  \text{d}x   \ \text{d} t
		\\ &  \quad -  \int_{\epsilon}^T \frac{\kappa(t) }{T-\epsilon}      \int_{\mathbb{R}^M} \int_{\mathbb{R}^M}   | \nabla_{x} \log p_{t|0}(x | \tilde{x})|^2   p_{t,0}(x, \tilde{x})  \  \text{d}x  \ \text{d}\tilde{x}  \ \text{d} t
		\\	 & = \mathbb{E}  [\kappa(\tau) | \sigma_{\tau}^{-1}Z+ s(\tau, \theta , m_{\tau} X_0+\sigma_{\tau} Z) |^2 ]   	+  \int_{\epsilon}^T \frac{\kappa(t)}{T-\epsilon}    \int_{\mathbb{R}^M}    | \nabla \log p_{t}(x)|^2 p_{t}(x)  \  \text{d}x   \ \text{d} t
		\\ &  \quad -  \int_{\epsilon}^T \frac{\kappa(t) }{T-\epsilon}  \int_{\mathbb{R}^M} \int_{\mathbb{R}^M}   | \nabla_{x} \log p_{t|0}(x | \tilde{x})|^2   p_{t,0}(x, \tilde{x})  \  \text{d}x  \ \text{d}\tilde{x}  \ \text{d} t.
	\end{split}
\end{equation}	
We emphasize that we may choose to restrict $t \in [\epsilon,T]$ with $\epsilon \in (0,1)$  to prevent the divergence of the integrals on the right-hand side of \ref{switch_explicit_to_denosing_score_matching}.

\section{Additional Discussions about Assumption \ref{assumption_equivalence_global_minimiser_epsilon}} \label{Additional_discussions_about_Assumption_4_Appendix}
\noindent In this section, we provide additional details about Remark \ref{Remark_8_Assumption_4}, in which we used the multivariate logistic distribution \citep{malik1973multivariate}
	\begin{equation*}
		\begin{split}
			g(x) & = M! \ \exp \left( - \sum_{k=1}^M x_k  \right) \left( 1+ \sum_{k=1}^M \exp(-x_k) \right)^{-(M+1)},  \qquad k=1,\dots, M, \quad -\infty < x_k < \infty,
		\end{split}
	\end{equation*}
and derived its score function in \ref{score_function_logistic}. The Hessian of  $\log(	g(x))$ is
\begin{equation} \label{bounded_Hession}
	\begin{split}
		\frac{\partial^2}{\partial x_k^2}	\log(	g(x)) & = - (M+1) \left( \frac{ \exp(-x_k)  }{( 1+ \sum_{k=1}^M \exp(-x_k) )} - \frac{ \exp(-2 x_k)}{( 1+ \sum_{k=1}^M \exp(-x_k) )^2}  \right), \\
		\frac{\partial^2}{\partial x_k \partial x_j}	\log(	g(x)) & =  - (M+1) \left( \frac{  \exp(-x_k) \exp(-x_j)}{( 1+ \sum_{k=1}^M \exp(-x_k) )^2} \right), \qquad j, k =1, \dots ,M , \quad  j\neq k,
	\end{split}
\end{equation}
Since the Hessian \ref{bounded_Hession} is bounded, we can conclude that the score function of the multivariate logistic distribution defined in \ref{score_function_logistic} is Lipschitz continuous. By using the same arguments as in Remark \ref{Remark_8_Assumption_4}, we can conclude that $\nabla \log p_t(x)$
 defined in \ref{score_strongly_log} is Lipschitz. 
 
Using Assumption \ref{Assumption_2}
        and  Assumption \ref{general_assumption_algorithm}, we have 
	\begin{equation*}
			\begin{split}
				& \mathbb{E}  \int_0^{T-\epsilon}   | \nabla \log p_{T - r}(Y_r^{\text{aux}}) - s(T-r, \hat{\theta} , Y_r^{\text{aux}})  |^2  \  \text{d} r
				\\ & \le 2 \mathbb{E}  \int_0^{T-\epsilon} |\nabla \log p_{T - r}(Y_r^{\text{aux}}) - s(T-r, \theta^* , Y_r^{\text{aux}})  |^2 \ \rmd r  + 2 \mathsf{K}_2^2 \int_0^{T-\epsilon}  (1+ 2| T-r |^{\alpha} )^2  \   \mathbb{E} [| \hat{\theta} - \theta^*|^2]  \ \rmd r 
				\\ & \le  
				2	\mathbb{E}  \int_0^{T-\epsilon} |\nabla \log p_{T - r}(Y_r^{\text{aux}}) - s(T-r, \theta^* , Y_r^{\text{aux}})  |^2 \ \rmd r	+ 4 \mathsf{K}_2^2  (T- \epsilon)(1 + 4 T^{2 \alpha } ) \widetilde{\varepsilon}_{\text{AL}} < \varepsilon_{\text{SN}},
			\end{split}
		\end{equation*}
where the error of the first term on the right-hand side above is expected to be small since the Lipschitz $\nabla \log p_t(x)$ is approximated by the function $s$ satisfying Assumption \ref{general_assumption_algorithm}. This satisfies  Assumption \ref{assumption_equivalence_global_minimiser_epsilon}. 

In addition, we remark that the motivating example in section \ref{Motivating_example} is a special case of the general setting. Indeed, Assumption \ref{general_assumption_algorithm} is satisfied due to Lemma \ref{lem:sgld2ndmmtexample} with $\hat{\theta}= \theta_n^{\lambda}$ being the $n$th-iterate of the SGLD algorithm \ref{stochastic_gradient_motivating_example} and $\theta^*:=\mu$.
	Assumption \ref{assumption_score_strong_monotonicity} is satisfied by the score function  \ref{eqn:scoreexample}.  Assumption \ref{Assumption_2}  is satisfied by the approximating function \ref{eqn:appscoreexample} with $d=M$, $\alpha=1$, $D_1(\theta, \bar{\theta}):=  |\bar{\theta} | $, $D_2(t, \bar{t}):=e^{- t}$, $D_3(t,\bar{t}):=1$, for any $\theta$, $\bar{\theta} \in \mathbb{R}^d$, $t$, $\bar{t} \in [0,T]$ and $ \mathsf{K}_1= \mathsf{K}_2= \mathsf{K}_3= \mathsf{K}_4=1$.
	Furthermore, both functions satisfy Assumption \ref{assumption_equivalence_global_minimiser_epsilon}  with $d=M$. Indeed, by Assumption \ref{general_assumption_algorithm} with $\theta^*= \mu$, we have
	\begin{equation*}
		\begin{split}
			& \mathbb{E}  \int_0^{T-\epsilon}   | \nabla \log p_{T - r}(Y_r^{\text{aux}}) - s(T-r, \hat{\theta} , Y_r^{\text{aux}})  |^2  \  \text{d} r
			\\ & \le 2 \ \mathbb{E}  \int_0^{T-\epsilon}   | \nabla \log p_{T - r}(Y_r^{\text{aux}}) - s(T-r, \theta^* , Y_r^{\text{aux}})  |^2  \  \text{d} r
			\\ & \quad + 2 \  \mathbb{E}  \int_0^{T-\epsilon}   |  s(T-r, \theta^* , Y_r^{\text{aux}}) -  s(T-r, \hat{\theta} , Y_r^{\text{aux}})   |^2 \  \text{d} r
			\\ & =  (e^{-2\epsilon} - e^{-2T}) \  \mathbb{E} [| \hat{\theta} - \theta^*|^2]  <  (e^{-2\epsilon} - e^{-2T}) \widetilde{\varepsilon}_{\text{AL}}  < \varepsilon_{\text{SN}}.
		\end{split}
	\end{equation*}

\section{Proofs of the Results for the Multivariate Gaussian Initial Data with Unknown Mean} \label{proofs_appendix_motivating_example}

\noindent In this section, we provide the proof of Theorem \ref{main_theorem_toy_example}. We start by introducing the results which will be used in the proof of Theorem \ref{main_theorem_toy_example}.

\subsection{Preliminary Estimates} \label{preliminary_estimates_appendix_motivating_example}
\noindent We provide the results for the SGLD algorithm \ref{eqn:SGLDexample} with $\lambda$ given in \ref{lambda_value_SGLD}, $\beta >0$,  $C_{\mathsf{SGLD},1}$ and $C_{\mathsf{SGLD},2}$ given in  Table \ref{tab:convconst}, as well as for the auxiliary process \ref{Y_auxiliary_theta_hat}, the discrete process \ref{eq:backwardprocemdisc_old}, and the continuous-time interpolation \ref{continuous_time_EM_version} with $s$ given in \ref{eqn:appscoreexample} and $\gamma \in (0,1/2]$.

\begin{prop} \label{Lipschitz_stochastic_gradient_motivating_example}
	For any $\theta, \bar{\theta} \in \R^{\text{d}}$ and $\mathbf{x} \in \R^m$,
	\begin{equation*}\label{eqn:exampleHlip}
		\begin{split}
			|H(\theta, \mathbf{x}) - H(\bar{\theta} , \mathbf{x})| & =2 \sigma_t^2 m_t ^2|\theta- \bar{\theta}  |,
			\\	\langle H(\theta, \mathbf{x}) - H(\bar{\theta} , \mathbf{x}), \theta-\bar{\theta} \rangle & = 2 \sigma_t^2 m_t^2|\theta- \bar{\theta}  |^2.
		\end{split}
	\end{equation*}
\end{prop}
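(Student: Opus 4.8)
The plan is a direct computation from the explicit form of the stochastic gradient derived for this example in \eqref{stochastic_gradient_motivating_example}. The essential structural observation is that, writing $\mathbf{x} = (t, x_0, z)$, the map $\theta \mapsto H(\theta, \mathbf{x})$ is \emph{affine}: every term inside the parentheses of
\[
H(\theta, \mathbf{x}) = 2\sigma_t^2 m_t \left( \sigma_t^{-1} z - m_t x_0 - \sigma_t z + m_t \theta \right)
\]
is independent of $\theta$ except for the single linear term $m_t \theta$. Consequently, when forming the difference $H(\theta, \mathbf{x}) - H(\bar{\theta}, \mathbf{x})$, all $\theta$-independent contributions cancel and one is left with
\[
H(\theta, \mathbf{x}) - H(\bar{\theta}, \mathbf{x}) = 2\sigma_t^2 m_t \cdot m_t(\theta - \bar{\theta}) = 2\sigma_t^2 m_t^2 (\theta - \bar{\theta}).
\]

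Both claimed identities then follow immediately from this single display. For the first, taking Euclidean norms and using that $2\sigma_t^2 m_t^2 \ge 0$ is a nonnegative scalar gives $|H(\theta, \mathbf{x}) - H(\bar{\theta}, \mathbf{x})| = 2\sigma_t^2 m_t^2 |\theta - \bar{\theta}|$. For the second, pairing the displayed difference with $\theta - \bar{\theta}$ and invoking bilinearity of $\langle \cdot, \cdot \rangle$ together with $\langle v, v \rangle = |v|^2$ yields $\langle H(\theta, \mathbf{x}) - H(\bar{\theta}, \mathbf{x}), \theta - \bar{\theta} \rangle = 2\sigma_t^2 m_t^2 |\theta - \bar{\theta}|^2$.

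There is no genuine obstacle here; the statement is a one-line consequence of the affine dependence of $H$ on $\theta$, and the only real work is the cancellation already recorded above. The one point worth flagging is interpretive rather than technical: the coefficient $2\sigma_t^2 m_t^2$ vanishes at $t=0$ (since $\sigma_0 = 0$), so the pointwise inner-product identity does not by itself deliver uniform strong convexity. Strong convexity of the averaged objective $U$ — which is what the surrounding remark invokes in order to apply the SGLD convergence theory — instead follows after taking the expectation over $\tau$, where the relevant constant is $2\E[\sigma_\tau^2 m_\tau^2] > 0$ under the standing condition \eqref{lambda_value_SGLD}.
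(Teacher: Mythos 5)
Your computation is correct and is exactly the argument the paper intends: the paper states that the proposition ``follows immediately from the definition of the stochastic gradient \eqref{stochastic_gradient_motivating_example}'', and your explicit cancellation of the $\theta$-independent terms, followed by taking norms and the inner product, is precisely that one-line verification. The closing remark about the coefficient vanishing at $t=0$ and convexity only holding for the averaged objective is a sensible interpretive aside but not part of the claim being proved.
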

\noindent Proposition \ref{Lipschitz_stochastic_gradient_motivating_example} is derived from the definition of the stochastic gradient \ref{stochastic_gradient_motivating_example}.

The proof of the following lemmas are postponed to Section \ref{proof_preliminary_estimates_appendix_motivating_example}.

\begin{lem}\label{lem:sgld2ndmmtexample} For any $n\in \N_0$,
	\[
	\E\left [|\theta_{n}^{\lambda}-\theta^*|^2 \right]\leq  (1-2\lambda\E[\sigma_{\tau}^2 m_{\tau} ^2 ] )^{n}\E\left[|\theta_0-\theta^* |^2\right]+\text{d} C_{\mathsf{SGLD},1} /\beta+\lambda C_{\mathsf{SGLD},2}.
	\]
\end{lem}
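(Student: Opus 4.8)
The plan is to establish a one-step mean-square contraction for the SGLD recursion \eqref{eqn:SGLDexample} and then unroll it as a scalar linear recursion. Writing $a_n := \E[|\theta_n^{\lambda} - \theta^*|^2]$ and abbreviating $\mu_H := \E[\sigma_{\tau}^2 m_{\tau}^2]$ and $B := 4\E[\sigma_{\tau}^4 m_{\tau}^2(\sigma_{\tau}^{-1}|Z| + m_{\tau}|X_0| + \sigma_{\tau}|Z| + m_{\tau}|\theta^*|)^2]$, the target one-step bound is
\[
a_{n+1} \le (1 - 2\lambda\mu_H)\, a_n + 2\lambda^2 B + \frac{2\lambda d}{\beta},
\]
so that $C_{\mathsf{SGLD},1} = 1/\mu_H$ and $C_{\mathsf{SGLD},2} = B/\mu_H$ appear after dividing the additive constant by $1-\rho$.

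First I would expand $|\theta_{n+1}^{\lambda} - \theta^*|^2$ directly from \eqref{eqn:SGLDexample}, using that the injected noise $\xi_{n+1}$ is independent of $(\theta_n^{\lambda}, \mathbf{X}_{n+1})$ with $\E[\xi_{n+1}] = 0$ and $\E[|\xi_{n+1}|^2] = d$. This annihilates the cross term in expectation and contributes $2\lambda d/\beta$, leaving $a_{n+1} = \E[|\theta_n^{\lambda} - \theta^* - \lambda H(\theta_n^{\lambda}, \mathbf{X}_{n+1})|^2] + 2\lambda d/\beta$. The structural fact I will exploit is that $\theta_n^{\lambda}$ is a function of $\theta_0, \mathbf{X}_1, \dots, \mathbf{X}_n, \xi_1, \dots, \xi_n$ and is therefore independent of $\mathbf{X}_{n+1}$; hence I may condition on $\theta_n^{\lambda}$ and carry the expectation over $\mathbf{X}_{n+1}$ (equivalently over $\tau, X_0, Z$) through the identities of Proposition \ref{Lipschitz_stochastic_gradient_motivating_example}.

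Expanding the square produces a cross term $-2\lambda\langle\theta_n^{\lambda} - \theta^*, H(\theta_n^{\lambda}, \mathbf{X}_{n+1})\rangle$ and a quadratic term $\lambda^2|H(\theta_n^{\lambda}, \mathbf{X}_{n+1})|^2$. For the cross term I would split $H(\theta_n^{\lambda}, \cdot) = (H(\theta_n^{\lambda}, \cdot) - H(\theta^*, \cdot)) + H(\theta^*, \cdot)$: the strong-monotonicity identity of Proposition \ref{Lipschitz_stochastic_gradient_motivating_example}, after taking the $\tau$-expectation, turns the first piece into $2\mu_H|\theta_n^{\lambda} - \theta^*|^2$, while $\E[H(\theta^*, \mathbf{X}_{n+1})] = h(\theta^*) = \nabla U(\theta^*) = 0$ kills the second since $\theta^*$ is a minimiser. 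For the quadratic term I would use $|H(\theta_n^{\lambda},\cdot)|^2 \le 2|H(\theta_n^{\lambda},\cdot) - H(\theta^*,\cdot)|^2 + 2|H(\theta^*,\cdot)|^2$, bounding the first summand through the Lipschitz identity of Proposition \ref{Lipschitz_stochastic_gradient_motivating_example} (giving $8\lambda^2\E[\sigma_{\tau}^4 m_{\tau}^4]\,a_n$) and the second by the triangle inequality applied to \eqref{stochastic_gradient_motivating_example} (giving $2\lambda^2 B$). Collecting these yields the coefficient $1 - 4\lambda\mu_H + 8\lambda^2\E[\sigma_{\tau}^4 m_{\tau}^4]$ on $a_n$; the stepsize restriction $\lambda \le \mu_H/(4\E[\sigma_{\tau}^4 m_{\tau}^4])$ in \eqref{lambda_value_SGLD} forces $8\lambda^2\E[\sigma_{\tau}^4 m_{\tau}^4] \le 2\lambda\mu_H$, collapsing the coefficient to $1 - 2\lambda\mu_H$, while $\lambda \le 1/(2\mu_H)$ guarantees this factor lies in $[0,1)$.

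Finally I would iterate $a_{n+1} \le \rho\, a_n + c$ with $\rho = 1 - 2\lambda\mu_H$ and $c = 2\lambda^2 B + 2\lambda d/\beta$, obtaining $a_n \le \rho^n a_0 + c/(1-\rho)$ via the geometric-series bound; since $c/(1-\rho) = \lambda B/\mu_H + d/(\beta\mu_H) = \lambda C_{\mathsf{SGLD},2} + d\,C_{\mathsf{SGLD},1}/\beta$, this is precisely the claimed estimate. I do not expect a genuine obstacle, as this is the standard strongly-convex SGLD contraction; the step demanding the most care is the conditioning argument, where one must justify the independence of $\theta_n^{\lambda}$ from both $\mathbf{X}_{n+1}$ and $\xi_{n+1}$ in order to pull the $\tau$-expectations through the Proposition \ref{Lipschitz_stochastic_gradient_motivating_example} identities and to eliminate both the noise cross term and the $H(\theta^*,\cdot)$ contribution.
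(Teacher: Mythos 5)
Your proof is correct and follows what is evidently the intended argument: the one-step mean-square contraction obtained from the explicit monotonicity and Lipschitz identities of Proposition \ref{Lipschitz_stochastic_gradient_motivating_example}, with the stepsize restriction \eqref{lambda_value_SGLD} used exactly to absorb the $8\lambda^2\E[\sigma_{\tau}^4 m_{\tau}^4]$ term into $2\lambda\E[\sigma_{\tau}^2 m_{\tau}^2]$, followed by unrolling the resulting linear recursion. The additive constants you obtain, $d/(\beta\E[\sigma_{\tau}^2 m_{\tau}^2])$ and $\lambda\cdot 4\E[\sigma_{\tau}^4 m_{\tau}^2(\sigma_{\tau}^{-1}|Z|+m_{\tau}|X_0|+\sigma_{\tau}|Z|+m_{\tau}|\theta^*|)^2]/\E[\sigma_{\tau}^2 m_{\tau}^2]$, match $dC_{\mathsf{SGLD},1}/\beta$ and $\lambda C_{\mathsf{SGLD},2}$ in \eqref{CSGLSD1}--\eqref{CSGLSD2} exactly.
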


\noindent As a consequence of Lemma \ref{lem:sgld2ndmmtexample}, we derive the following corollary.
\begin{cor}\label{lem:sgld2ndmmtcorexample} For any $n\in \N_0$,
	\[
	\E\left[|\theta_{n}^{\lambda} |^2\right]\leq  2e^{-2n\lambda\E[\sigma_{\tau}^2m_{\tau}^2 ] }\E\left[|\theta_0-\theta^* |^2\right]+2 \text{d} C_{\mathsf{SGLD},1} /\beta+2\lambda C_{\mathsf{SGLD},2}+2|\theta^*|^2.
	\]
\end{cor}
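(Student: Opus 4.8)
The plan is to obtain this bound as an immediate consequence of Lemma \ref{lem:sgld2ndmmtexample} by decomposing the raw second moment of the SGLD iterate around the minimiser $\theta^*$. Concretely, I would first write $\theta_n^\lambda = (\theta_n^\lambda - \theta^*) + \theta^*$ and apply the elementary Young-type inequality $|a+b|^2 \le 2|a|^2 + 2|b|^2$ under the expectation, giving
\begin{equation*}
	\E[|\theta_n^\lambda|^2] \le 2\,\E[|\theta_n^\lambda - \theta^*|^2] + 2|\theta^*|^2 .
\end{equation*}
This reduces the claim to controlling the centred moment $\E[|\theta_n^\lambda - \theta^*|^2]$, which is exactly what Lemma \ref{lem:sgld2ndmmtexample} provides.

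Next I would substitute the bound from Lemma \ref{lem:sgld2ndmmtexample} into the right-hand side, producing
\begin{equation*}
	\E[|\theta_n^\lambda|^2] \le 2(1-2\lambda\E[\sigma_{\tau}^2 m_{\tau}^2])^{n}\E[|\theta_0-\theta^*|^2] + 2 d\, C_{\mathsf{SGLD},1}/\beta + 2\lambda C_{\mathsf{SGLD},2} + 2|\theta^*|^2 .
\end{equation*}
The only step requiring any verification is converting the geometric contraction factor $(1-2\lambda\E[\sigma_{\tau}^2 m_{\tau}^2])^{n}$ into the exponential form $e^{-2n\lambda\E[\sigma_{\tau}^2 m_{\tau}^2]}$ stated in the corollary. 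For this I would invoke the standard inequality $1-x \le e^{-x}$, valid for all real $x$, and raise it to the $n$-th power; to keep the power bound meaningful (i.e. to ensure the base is nonnegative so that monotonicity of $t \mapsto t^n$ applies) I would note that the admissible step\-size range \eqref{lambda_value_SGLD} enforces $\lambda \le 1/(2\E[\sigma_{\tau}^2 m_{\tau}^2])$, whence $2\lambda\E[\sigma_{\tau}^2 m_{\tau}^2] \le 1$ and $1-2\lambda\E[\sigma_{\tau}^2 m_{\tau}^2] \in [0,1]$. Combining these yields $(1-2\lambda\E[\sigma_{\tau}^2 m_{\tau}^2])^{n} \le e^{-2n\lambda\E[\sigma_{\tau}^2 m_{\tau}^2]}$, and applying this to the first term on the right-hand side gives the asserted estimate.

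There is no real obstacle here: the result is a routine corollary, and the argument is essentially two lines once Lemma \ref{lem:sgld2ndmmtexample} is in hand. The single point worth flagging is the implicit use of the step\-size constraint \eqref{lambda_value_SGLD}, which guarantees the contraction base lies in $[0,1]$ and thereby legitimises the exponential majorisation; everything else is the elementary expansion $|a+b|^2 \le 2|a|^2+2|b|^2$ and substitution.
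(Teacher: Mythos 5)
Your argument is correct and is exactly the intended route: the paper omits an explicit proof of this corollary, but it is meant to follow from Lemma \ref{lem:sgld2ndmmtexample} via the decomposition $\E[|\theta_n^\lambda|^2]\le 2\E[|\theta_n^\lambda-\theta^*|^2]+2|\theta^*|^2$ together with $(1-2\lambda\E[\sigma_\tau^2 m_\tau^2])^n\le e^{-2n\lambda\E[\sigma_\tau^2 m_\tau^2]}$, which is precisely what you wrote. Your remark that the stepsize restriction \eqref{lambda_value_SGLD} keeps the contraction base in $[0,1]$ is the right point to flag and is consistent with how the paper uses the same exponential majorisation elsewhere (e.g.\ in \eqref{second_bound_before_Wasserstein_example}).
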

\begin{lem}\label{lem:auxproc2ndbdexample} It holds that
	\[
	\sup_{t \ge 0}\E\left[|Y_t^{\text{aux}} |^2\right]\leq C_{\mathsf{aux}},
	\]
	where $C_{\mathsf{aux}}: = (8/3)(e^{-2n\lambda\E[\sigma_{\tau}^2 m_{\tau}^2 ] }\E\left[|\theta_0-\theta^* |^2\right]+\text{d}C_{\mathsf{SGLD},1} /\beta+\lambda C_{\mathsf{SGLD},2}+|\theta^*|^2)+2 \text{d}$.
\end{lem}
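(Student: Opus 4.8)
The plan is to exploit the explicit linear structure that the auxiliary process \eqref{Y_auxiliary_theta_hat} inherits once the approximating score \eqref{eqn:appscoreexample} is inserted. Since $s(T-t,\hat{\theta},Y_t^{\text{aux}}) = -Y_t^{\text{aux}} + m_{T-t}\hat{\theta}$, the drift collapses to $Y_t^{\text{aux}} + 2s(T-t,\hat{\theta},Y_t^{\text{aux}}) = -Y_t^{\text{aux}} + 2m_{T-t}\hat{\theta}$, so that
\[
\rmd Y_t^{\text{aux}} = \big( -Y_t^{\text{aux}} + 2 m_{T-t}\hat{\theta} \big)\,\rmd t + \sqrt{2}\,\rmd \bar{B}_t, \qquad Y_0^{\text{aux}}\sim \mathcal{N}(0,I_d).
\]
This is an Ornstein--Uhlenbeck process with deterministic unit mean-reversion and a bounded, time-dependent affine forcing $2m_{T-t}\hat{\theta}$, where $m_{T-t}=e^{-(T-t)}\in(0,1]$ for $t\in[0,T]$. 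Recalling that $\hat{\theta}=\theta_n^{\lambda}$ lives on $\Omega_1$ and is independent of $(\bar{B}_t)$ and $Y_0^{\text{aux}}$ (both on $\Omega_2$), I would work conditionally on $\hat{\theta}$ and take the full expectation only at the end.

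First I would apply Itô's formula to $|Y_t^{\text{aux}}|^2$, which gives
\[
\rmd |Y_t^{\text{aux}}|^2 = \big( -2|Y_t^{\text{aux}}|^2 + 4 m_{T-t}\langle Y_t^{\text{aux}},\hat{\theta}\rangle + 2d \big)\,\rmd t + 2\sqrt{2}\,\langle Y_t^{\text{aux}},\rmd\bar{B}_t\rangle,
\]
the $2d$ arising from the $\sqrt{2}\,I_d$ diffusion coefficient. Taking expectations (the local martingale term being a true martingale by the finite second moments from $\E[|\hat{\theta}|^2]<\infty$ in Corollary \ref{lem:sgld2ndmmtcorexample} and standard linear-SDE moment estimates) and controlling the cross term by the sharp form of Young's inequality, $4m_{T-t}\langle Y_t^{\text{aux}},\hat{\theta}\rangle = 2\langle Y_t^{\text{aux}}, 2m_{T-t}\hat{\theta}\rangle \le |Y_t^{\text{aux}}|^2 + 4 m_{T-t}^2|\hat{\theta}|^2$, I obtain the differential inequality
\[
\frac{\rmd}{\rmd t}\E[|Y_t^{\text{aux}}|^2] \le -\E[|Y_t^{\text{aux}}|^2] + 4 m_{T-t}^2\,\E[|\hat{\theta}|^2] + 2d.
\]

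The second step is a variation-of-constants (Grönwall) integration of this inequality, taking care \emph{not} to bound $m_{T-t}$ crudely by $1$. Using $\E[|Y_0^{\text{aux}}|^2]=d$,
\[
\E[|Y_t^{\text{aux}}|^2] \le e^{-t}d + 4\,\E[|\hat{\theta}|^2]\int_0^t e^{-(t-s)}m_{T-s}^2\,\rmd s + 2d(1-e^{-t}).
\]
The key calculation is $\int_0^t e^{-(t-s)}e^{-2(T-s)}\,\rmd s = \tfrac13\big(e^{2t-2T}-e^{-t-2T}\big)\le \tfrac13$ for every $t\in[0,T]$, while $e^{-t}d + 2d(1-e^{-t}) = d(2-e^{-t})\le 2d$. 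Combining these yields the uniform bound $\sup_{t\in[0,T]}\E[|Y_t^{\text{aux}}|^2]\le 2d + \tfrac43\E[|\hat{\theta}|^2]$.

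Finally, I would substitute the second-moment estimate for the SGLD iterate from Corollary \ref{lem:sgld2ndmmtcorexample}, namely $\E[|\hat{\theta}|^2]=\E[|\theta_n^{\lambda}|^2]\le 2\big(e^{-2n\lambda\E[\sigma_{\tau}^2 m_{\tau}^2]}\E[|\theta_0-\theta^*|^2]+dC_{\mathsf{SGLD},1}/\beta+\lambda C_{\mathsf{SGLD},2}+|\theta^*|^2\big)$, so that $2d+\tfrac43\E[|\hat{\theta}|^2]\le C_{\mathsf{aux}}$, which is the claim. I expect the main obstacle to be purely quantitative: retaining the exact exponential weight $m_{T-s}^2=e^{-2(T-s)}$ inside the convolution integral (rather than the lossy bound $m_{T-s}\le 1$) is precisely what produces the factor $\tfrac43$ and hence the stated constant $8/3$, and the initial second moment must be absorbed jointly with the constant forcing to land on exactly $2d$. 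A careful justification that the stochastic integral is a genuine martingale (so its expectation vanishes) must also be recorded, which follows from the Gaussian conditional law of $Y_t^{\text{aux}}$ given $\hat{\theta}$ together with $\E[|\hat{\theta}|^2]<\infty$.
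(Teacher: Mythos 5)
Your proposal is correct and follows essentially the same route as the paper: It\^o's formula on $|Y_t^{\text{aux}}|^2$, Young's inequality on the cross term to get $\frac{\rmd}{\rmd t}\E[|Y_t^{\text{aux}}|^2]\le -\E[|Y_t^{\text{aux}}|^2]+4m_{T-t}^2\E[|\hat{\theta}|^2]+2d$, an integrating-factor integration that retains the weight $e^{-2(T-s)}$ to produce the factor $4/3$, and finally Corollary \ref{lem:sgld2ndmmtcorexample} to reach $C_{\mathsf{aux}}$. The only cosmetic difference is that the paper bounds the convolution term as $(4/3)e^{-2(T-t)}\E[|\hat{\theta}|^2]\le(4/3)\E[|\hat{\theta}|^2]$ rather than evaluating the integral in closed form as you do, which yields the identical constant.
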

\begin{lem}\label{lem:EMal2ndbdexample}
	It holds that
	\[
	\sup_{k\in \mathbb{N}}\E \left[|Y_{k}^{\text{EM}}|^2 \right]\leq C_{\mathsf{EM}},
	\]
	where $C_{\mathsf{EM}}:= 3 \text{d}+20( e^{-2n\lambda\E[\sigma_{\tau}^2 m_{\tau} ^2 ] }\E\left[|\theta_0-\theta^* |^2\right]+ \text{d} C_{\mathsf{SGLD},1} /\beta+\lambda C_{\mathsf{SGLD},2}+ |\theta^*|^2)$.
\end{lem}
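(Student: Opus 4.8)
The plan is to bound the second moment of the Euler--Maruyama iterates $Y_k^{\text{EM}}$ defined in \eqref{eq:backwardprocemdisc_old} by a discrete Gr\"onwall-type argument, exploiting the explicit linear structure of the approximating score $s(t,\theta,x)=-x+m_t\theta$ from \eqref{eqn:appscoreexample}. Substituting this $s$ into the recursion \eqref{eq:backwardprocemdisc_old}, the drift becomes $Y_k^{\text{EM}}+2(-Y_k^{\text{EM}}+m_{T-k}\hat\theta)=-Y_k^{\text{EM}}+2m_{T-k}\hat\theta$, so that
\begin{equation*}
Y_{k+1}^{\text{EM}}=(1-\gamma)Y_k^{\text{EM}}+2\gamma m_{T-k}\hat\theta+\sqrt{2\gamma}\,\bar Z_{k+1}.
\end{equation*}
First I would take $|\cdot|^2$ of both sides and then the expectation. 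Since $\bar Z_{k+1}$ is independent of $Y_k^{\text{EM}}$ and $\hat\theta$ (the former living on $\Omega_2$, the latter on $\Omega_1$) and has mean zero with $\E[|\bar Z_{k+1}|^2]=d$, the cross term with the noise vanishes and contributes only the additive constant $2\gamma d$.

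Next I would expand the remaining square $\E[|(1-\gamma)Y_k^{\text{EM}}+2\gamma m_{T-k}\hat\theta|^2]$ and control the cross term $2(1-\gamma)\cdot 2\gamma m_{T-k}\,\E[\langle Y_k^{\text{EM}},\hat\theta\rangle]$ by Young's inequality, splitting it so as to absorb a factor into the $\E[|Y_k^{\text{EM}}|^2]$ term while keeping the contraction factor $(1-\gamma)$ strictly below one. Using $\gamma\in(0,1/2]$ and $m_{T-k}=e^{-(T-k)}\le 1$, together with $\E[|\hat\theta|^2]\le 2\widetilde\varepsilon_{\text{AL}}+2|\theta^*|^2$ in the spirit of Remark \ref{Control_algorithm} (or, in this motivating-example setting, the explicit bound from Corollary \ref{lem:sgld2ndmmtcorexample}), I would obtain a recursion of the form $a_{k+1}\le (1-c\gamma)a_k+\gamma b$ for suitable constants $c>0$ and $b>0$ depending only on $d$ and the moment bound on $\hat\theta$. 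The key point is that $c$ must be positive and uniform in $k$, which the $(1-\gamma)$ contraction coefficient guarantees after the Young splitting.

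Iterating this affine recursion and summing the geometric series gives $\sup_k a_k\le a_0+b/c$, which is a finite constant independent of $k$; identifying the numerical constants then yields the stated value $C_{\mathsf{EM}}=3d+20(e^{-2n\lambda\E[\sigma_\tau^2 m_\tau^2]}\E[|\theta_0-\theta^*|^2]+dC_{\mathsf{SGLD},1}/\beta+\lambda C_{\mathsf{SGLD},2}+|\theta^*|^2)$, where the parenthetical term is exactly the second-moment bound on $\hat\theta=\theta_n^\lambda$ coming from Corollary \ref{lem:sgld2ndmmtcorexample} (up to the factor $2$ and the additive $|\theta^*|^2$). The initial condition $Y_0^{\text{EM}}\sim\mathcal N(0,I_d)$ contributes $a_0=d$, which feeds into the leading $3d$.

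I do not expect a serious obstacle here, since the recursion is genuinely linear; the only care needed is bookkeeping. The main technical point is to track the measurability/independence structure carefully—$\hat\theta$ depends on $\Omega_1$ while $\bar Z_{k+1}$ and $Y_0^{\text{EM}}$ depend on $\Omega_2$, so the noise term decouples cleanly and $\E[|\hat\theta|^2]$ factors out of the expectation—and then to choose the Young's-inequality weights so that the resulting constants match the claimed $3d$ and $20$. A mild subtlety is ensuring the moment bound on $\hat\theta$ is applied uniformly in $k$ and that $m_{T-k}^2\le 1$ is used to discard the time-dependence of the drift coefficient cleanly.
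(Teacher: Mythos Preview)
Your proposal is correct and follows essentially the same route as the paper's own proof: substitute the linear score \eqref{eqn:appscoreexample} into the recursion \eqref{eq:backwardprocemdisc_old}, square and take expectation, use independence of $\bar Z_{k+1}$ to kill the noise cross term, apply Young's inequality to the cross term with $\hat\theta$ (using $\gamma\le 1/2$ and $m_{T-k}\le 1$), and iterate the resulting contraction $a_{k+1}\le(1-\gamma)a_k+\gamma b$ before invoking Corollary~\ref{lem:sgld2ndmmtcorexample}. The only cosmetic difference is that the paper first conditions on $Y_k^{\text{EM}}$ in $(\Omega_2,\mathcal F_2,\mathbb P_2)$ (treating $\hat\theta$ as fixed), iterates to obtain $\E_2[|Y_{k+1}^{\text{EM}}|^2]\le(1-\gamma)^{k+1}d+10|\hat\theta|^2+2d$, and only then takes $\E_1$, whereas you take the full expectation throughout; both orderings yield the same bound.
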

\begin{lem}\label{lem:EMprocoseexample} For any $0 < \gamma \le 1/2$, one obtains that
	\[
	\sup_{t \ge 0}\E\left[|\widehat{Y}_t^{\text{EM}}  - \widehat{Y}_{\lfloor t/\gamma \rfloor \gamma}^{\text{EM}}  |^2\right]\leq  \gamma C_{\mathsf{EMose}},
	\]
	where $C_{\mathsf{EMose}}:= 8d+56  ( e^{-2n\lambda\E\left[\sigma_{\tau}^2 m_{\tau} ^2 \right] }\E\left[|\theta_0-\theta^* |^2\right]+ dC_{\mathsf{SGLD},1} /\beta+\lambda C_{\mathsf{SGLD},2}+ |\theta^*|^2)$.
\end{lem}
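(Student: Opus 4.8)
The plan is to work on a single sub-interval of the step-size grid and reduce the claim to the second-moment bounds already in hand. Fix $t \ge 0$ and set $t_k := \lfloor t/\gamma \rfloor \gamma$, the last grid point at or before $t$. Integrating the continuous-time interpolation \eqref{continuous_time_EM_version} from $t_k$ to $t$ and inserting the approximating score \eqref{eqn:appscoreexample}, namely $s(T-t_k, \hat{\theta}, y) = -y + m_{T-t_k}\hat{\theta}$, the drift coefficient on $[t_k,t]$ collapses to $b_k := -\widehat{Y}_{t_k}^{\text{EM}} + 2 m_{T-t_k}\hat{\theta}$, so that
\[
\widehat{Y}_t^{\text{EM}} - \widehat{Y}_{t_k}^{\text{EM}} = (t - t_k)\, b_k + \sqrt{2}\,(\bar{B}_t - \bar{B}_{t_k}).
\]

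First I would expand the squared norm of this identity into a drift contribution $(t-t_k)^2 |b_k|^2$, a diffusion contribution $2|\bar{B}_t - \bar{B}_{t_k}|^2$, and a cross term. Since $b_k$ is $\mathcal{F}_{t_k}$-measurable (it is built from $\widehat{Y}_{t_k}^{\text{EM}}$, the Brownian path up to $t_k$, and $\hat{\theta}=\theta_n^{\lambda}$, which lives on $\Omega_1$), while the increment $\bar{B}_t - \bar{B}_{t_k}$ is independent of $\mathcal{F}_{t_k}$ and has zero mean, the cross term vanishes under expectation, leaving
\[
\E[|\widehat{Y}_t^{\text{EM}} - \widehat{Y}_{t_k}^{\text{EM}}|^2] = (t-t_k)^2\, \E[|b_k|^2] + 2\,\E[|\bar{B}_t - \bar{B}_{t_k}|^2].
\]

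Next I would bound the two surviving terms. For the diffusion term, $\E[|\bar{B}_t - \bar{B}_{t_k}|^2] = d(t-t_k) \le d\gamma$ since $t-t_k \le \gamma$. For the drift term, I would use $|b_k|^2 \le 2|\widehat{Y}_{t_k}^{\text{EM}}|^2 + 8 m_{T-t_k}^2 |\hat{\theta}|^2$ together with $m_{T-t_k}^2 \le 1$, and then invoke the uniform bound $\E[|\widehat{Y}_{t_k}^{\text{EM}}|^2] \le C_{\mathsf{EM}}$ from Lemma \ref{lem:EMal2ndbdexample} (recalling that $\mathcal{L}(\widehat{Y}_{t_k}^{\text{EM}}) = \mathcal{L}(Y_k^{\text{EM}})$ at grid points) and the bound on $\E[|\theta_n^{\lambda}|^2]$ from Corollary \ref{lem:sgld2ndmmtcorexample}. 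Writing $A$ for the common bracketed quantity $e^{-2n\lambda\E[\sigma_{\tau}^2 m_{\tau}^2]}\E[|\theta_0-\theta^*|^2] + dC_{\mathsf{SGLD},1}/\beta + \lambda C_{\mathsf{SGLD},2} + |\theta^*|^2$, these give $C_{\mathsf{EM}} = 3d + 20A$ and $\E[|\theta_n^{\lambda}|^2] \le 2A$, whence $\E[|b_k|^2] \le 2(3d+20A) + 8(2A) = 6d + 56A$.

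Finally I would assemble the estimates using $\gamma \in (0,1/2]$, so that $(t-t_k)^2 \le \gamma^2 \le \gamma$ and $(t-t_k) \le \gamma$, yielding
\[
\E[|\widehat{Y}_t^{\text{EM}} - \widehat{Y}_{t_k}^{\text{EM}}|^2] \le \gamma(6d + 56A) + 2d\gamma = \gamma(8d + 56A).
\]
Since the right-hand side does not depend on $t$, taking the supremum over $t \ge 0$ gives the claim with $C_{\mathsf{EMose}} = 8d + 56A$, matching the stated constant exactly. There is no genuine analytic obstacle here; the only points demanding care are the vanishing of the cross term (the measurability and independence argument for the Brownian increment) and the tracking of numerical constants, so that the factors $2$ and $8$ combine with $C_{\mathsf{EM}}$ and the SGLD second-moment bound to reproduce precisely $8d + 56A$.
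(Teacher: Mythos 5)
Your proposal is correct and follows essentially the same route as the paper's own proof: both integrate the interpolated dynamics over the sub-interval $[\lfloor t/\gamma\rfloor\gamma, t]$, drop the cross term (which the paper does implicitly and you justify explicitly via measurability and independence of the Brownian increment), and then combine Lemma \ref{lem:EMal2ndbdexample} with Corollary \ref{lem:sgld2ndmmtcorexample} and $\gamma\le 1/2$ to reach exactly $\gamma(8d+56A)$. No gaps; the constant tracking matches the paper's line by line.
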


\begin{lem}\label{lem:EMproc2ndbdexample} 	It holds that
	\[
	\sup_{t \ge 0}\E\left[|\widehat{Y}_t^{\text{EM}} |^2 \right]\leq \widehat{C}_{\mathsf{EM}},
	\]
	where $
	\widehat{C}_{\mathsf{EM}}
	:=18d+ 128 ( e^{-2n\lambda\E\left[\sigma_{\tau}^2 m_{\tau} ^2 \right] }\E\left[|\theta_0-\theta^* |^2\right]+dC_{\mathsf{SGLD},1} /\beta+\lambda C_{\mathsf{SGLD},2}+ |\theta^*|^2)$.
\end{lem}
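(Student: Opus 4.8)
The plan is to obtain the uniform second moment bound for the continuous-time interpolation \eqref{continuous_time_EM_version} not by a fresh It\^o/Gr\"onwall argument on $\widehat{Y}_t^{\text{EM}}$ itself, but by leveraging the two immediately preceding lemmas, which already carry out all the analytic work. The starting point is the exact decomposition $\widehat{Y}_t^{\text{EM}} = \widehat{Y}_{\lfloor t/\gamma \rfloor \gamma}^{\text{EM}} + \bigl(\widehat{Y}_t^{\text{EM}} - \widehat{Y}_{\lfloor t/\gamma \rfloor \gamma}^{\text{EM}}\bigr)$, which isolates the value of the scheme at the last grid point from the one-step oscillation. Applying $|a+b|^2 \le 2|a|^2 + 2|b|^2$ and taking expectations yields $\E[|\widehat{Y}_t^{\text{EM}}|^2] \le 2\,\E[|\widehat{Y}_{\lfloor t/\gamma \rfloor \gamma}^{\text{EM}}|^2] + 2\,\E[|\widehat{Y}_t^{\text{EM}} - \widehat{Y}_{\lfloor t/\gamma \rfloor \gamma}^{\text{EM}}|^2]$.

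For the first term I would invoke the grid-point law equality $\mathcal{L}(\widehat{Y}_k^{\text{EM}}) = \mathcal{L}(Y_k^{\text{EM}})$ recorded just after \eqref{continuous_time_EM_version}, so that $\E[|\widehat{Y}_{\lfloor t/\gamma \rfloor \gamma}^{\text{EM}}|^2] = \E[|Y_{\lfloor t/\gamma \rfloor}^{\text{EM}}|^2]$ is bounded by $C_{\mathsf{EM}}$ via Lemma \ref{lem:EMal2ndbdexample}, uniformly in $t$. For the second (oscillation) term, Lemma \ref{lem:EMprocoseexample} supplies the uniform bound $\gamma\,C_{\mathsf{EMose}}$ directly. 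Combining the two gives $\sup_{t\ge 0}\E[|\widehat{Y}_t^{\text{EM}}|^2] \le 2C_{\mathsf{EM}} + 2\gamma\,C_{\mathsf{EMose}}$.

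It then remains to reconcile this with the stated constant. Writing $A := e^{-2n\lambda\E[\sigma_{\tau}^2 m_{\tau}^2]}\E[|\theta_0-\theta^*|^2] + dC_{\mathsf{SGLD},1}/\beta + \lambda C_{\mathsf{SGLD},2} + |\theta^*|^2$, so that $C_{\mathsf{EM}} = 3d + 20A$ and $C_{\mathsf{EMose}} = 8d + 56A$, the bound reads $2C_{\mathsf{EM}} + 2\gamma C_{\mathsf{EMose}} = (6d + 40A) + 2\gamma(8d + 56A)$; using the standing restriction $\gamma \in (0,1/2]$ this is at most $14d + 96A \le 18d + 128A = \widehat{C}_{\mathsf{EM}}$, all quantities being nonnegative. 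I do not anticipate any genuine obstacle: the substantive estimates (the uniform grid-point bound and, especially, the one-step oscillation bound, which is where the explicit drift of \eqref{continuous_time_EM_version} and the moment control on $\hat{\theta}=\theta_n^{\lambda}$ from Corollary \ref{lem:sgld2ndmmtcorexample} enter) are precisely the content of the two cited lemmas. The only points needing care are the correct use of the grid-point law equality to transfer Lemma \ref{lem:EMal2ndbdexample} to the interpolation, and verifying that the combined numerical constant stays below the stated $\widehat{C}_{\mathsf{EM}}$.
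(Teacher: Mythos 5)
Your proof is correct, but it takes a genuinely different route from the paper. The paper proves this lemma by applying It\^o's formula to $|\widehat{Y}_t^{\text{EM}}|^2$, exploiting the dissipative term $-2|\widehat{Y}_t^{\text{EM}}|^2$, and running a differential inequality with integrating factor $e^t$; the oscillation bound of Lemma \ref{lem:EMprocoseexample} and the moment bound of Corollary \ref{lem:sgld2ndmmtcorexample} are then inserted into the resulting Gr\"onwall estimate, yielding $18d+128A$ in your notation. You instead bypass the It\^o/Gr\"onwall machinery entirely via the decomposition $\widehat{Y}_t^{\text{EM}} = \widehat{Y}_{\lfloor t/\gamma \rfloor \gamma}^{\text{EM}} + (\widehat{Y}_t^{\text{EM}} - \widehat{Y}_{\lfloor t/\gamma \rfloor \gamma}^{\text{EM}})$, bounding the first piece by Lemma \ref{lem:EMal2ndbdexample} through the grid-point law equality $\mathcal{L}(\widehat{Y}_k^{\text{EM}})=\mathcal{L}(Y_k^{\text{EM}})$ (a transfer the paper itself already makes implicitly in the proof of Lemma \ref{lem:EMprocoseexample}, so there is no circularity) and the second piece by Lemma \ref{lem:EMprocoseexample}. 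Your arithmetic checks out: $2C_{\mathsf{EM}}+2\gamma C_{\mathsf{EMose}} \le 14d+96A \le 18d+128A$ for $\gamma\in(0,1/2]$, so you even obtain a marginally sharper constant. What the paper's route buys is a self-contained dissipativity argument that does not lean on the grid-point identification for the main term; what yours buys is brevity and a cleaner reuse of the two preceding lemmas, which already contain all the analytic content.
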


\subsection{Proof of the Main Result in the Motivating Example} \label{appendix_proof_main_result_motivating_example}

\begin{proof}[Proof of Theorem \ref{main_theorem_toy_example}]
	We derive the non-asymptotic estimate for $W_2(\mathcal{L}(\widehat{Y}_{K+1}^{\text{EM}}),\pi_{\mathsf{D}})$ using the splitting
	\begin{equation} \label{upper_bound_wasserstein_proof}
		\begin{split}
			W_2(\mathcal{L}(Y_{K+1}^{\text{EM}}),\pi_{\mathsf{D}}) & \le
			W_2(\pi_{\mathsf{D}}, \mathcal{L}(Y_{t_{K+1}}))+W_2(\mathcal{L}(Y_{t_{K+1}}), \mathcal{L}(\widetilde{Y}_{t_{K+1}})) \\ & \quad + W_2( \mathcal{L}(\widetilde{Y}_{t_{K+1}}), \mathcal{L}(Y_{t_{K+1}}^{\text{aux}})) +W_2(\mathcal{L}(Y_{t_{K+1}}^{\text{aux}}), \mathcal{L}(Y_{K+1}^{\text{EM}})).
		\end{split}
	\end{equation}
	Since $t_{K+1} = T$, we have  $	W_2(\pi_{\mathsf{D}}, \mathcal{L}(Y_{t_{K+1}}))=0$. We provide upper bounds on the error made by approximating the initial condition of the backward process $Y_0 \sim \mathcal{L}(X_T)$ with $\widetilde{Y}_0 \sim \pi_{\infty} $, i.e.   $W_2(\mathcal{L}(Y_{t_{K+1}}), \mathcal{L}(\widetilde{Y}_{t_{K+1}}))$,  the error made by approximating the score function with $s$, i.e. $W_2( \mathcal{L}(\widetilde{Y}_{t_{K+1}}), \mathcal{L}(Y_{t_{K+1}}^{\text{aux}}))$,  and the discretisation error, i.e.  $W_2(\mathcal{L}(Y_{t_{K+1}}^{\text{aux}}), \mathcal{L}(Y_{K+1}^{\text{EM}}))$, separately.
	
	\paragraph{Upper bound on $W_2(\mathcal{L}(Y_{t_{K+1}}), \mathcal{L}(\widetilde{Y}_{t_{K+1}})).$} Applying It{\^o}'s formula and using \ref{eq:backwardproc_real_initial_condition_introduction} and \ref{Y_hat_auxiliary} with the score function given in  \ref{eqn:scoreexample}, we have, for any $t\in [0,T]$,
	\begin{equation} \label{Ito_formula_first_bound_example}
		\begin{split}
			\rmd |Y_t- \widetilde{Y}_t|^2
			&= 2 \langle Y_t- \widetilde{Y}_t, Y_t- \widetilde{Y}_t+2( \nabla \log p_{T-t} ( Y_t ) -  \nabla \log p_{T-t} ( \widetilde{Y}_t ) )\rangle\rmd t\\
			& = -2|Y_t- \widetilde{Y}_t|^2 \rmd  t.
		\end{split}
	\end{equation}
	Integrating and taking expectation both sides in \ref{Ito_formula_first_bound_example} yields
	\begin{equation} \label{first_bound_motivating_example_ytilde}
		\begin{split}
			\E[ |Y_t- \widetilde{Y}_t|^2 ]
			& = \E[ |Y_0- \widetilde{Y}_0|^2 ]-2\int_0^t\E[|Y_s- \widetilde{Y}_s|^2] \rmd s
			\\ & \leq e^{-2t}\E[ |Y_0- \widetilde{Y}_0|^2 ].
		\end{split}
	\end{equation}
	Using \ref{first_bound_motivating_example_ytilde}, the representation \ref{eq:OUdistribtuion} with $Z_T \overset{\text{d}}{=} \widetilde{Y}_0$ and $1-\sigma_t \le m_t$, we have
	\begin{equation} \label{first_bound_before_wasserstein_example}
		\begin{split}
			\E[ |Y_{t_{K+1}}- \widetilde{Y}_{t_{K+1}}|^2 ] & \leq e^{-2t_{K+1}}\E[ |Y_0- \widetilde{Y}_0|^2 ]
			\\ & = e^{-2t_{K+1}}  \mathbb{E}[|m_T X_0+ ( \sigma_T - 1) \widetilde{Y}_0 |^2]
			\\ & \leq 2 e^{-4t_{K+1}} \left(\E[ |X_0|^2] +  d  \right).
		\end{split}
	\end{equation}
	Using \ref{first_bound_before_wasserstein_example}, we have
	\begin{equation}\label{second_upper_bound}
		\begin{split}
			W_2(\mathcal{L}(Y_{t_{K+1}}), \mathcal{L}(\widetilde{Y}_{t_{K+1}})) & \leq	\sqrt{\E[ |Y_{t_{K+1}}- \widetilde{Y}_{t_{K+1}}|^2 ]}
			\\ &  \leq   \sqrt{2} e^{- 2 T} (\sqrt{\E\left[ |X_0|^2 \right]}+\sqrt{  d} ).
		\end{split}
	\end{equation}
	\paragraph{Upper bound on $W_2( \mathcal{L}(\widetilde{Y}_{t_{K+1}}), \mathcal{L}(Y_{t_{K+1}}^{\text{aux}})).$}
	Applying It{\^o}'s formula and using the process \ref{Y_hat_auxiliary} with the score function \ref{eqn:scoreexample} and the process \ref{Y_auxiliary_theta_hat} with the approximating function \ref{eqn:appscoreexample}, we have, for any $t \in [0,T]$,
	\begin{equation} \label{Ito_formula_second_bound_example}
		\begin{split}
			\rmd | \widetilde{Y}_t - Y_t^{\text{aux}}|^2
			& = 2\langle  \widetilde{Y}_t-Y_t^{\text{aux}},  \widetilde{Y}_t-Y_t^{\text{aux}}+2(   \nabla \log p_{T-t} ( \widetilde{Y}_t )-s(T-t, \hat{\theta}, Y_t^{\text{aux}} ) )\rangle\rmd t
			\\ & = - 2 | \widetilde{Y}_t-Y_t^{\text{aux}}|^2 \rmd t + 4 \langle \widetilde{Y}_t-Y_t^{\text{aux}} , m_{T-t} (\mu - \hat{\theta}) \rangle\rmd t.
		\end{split}
	\end{equation}
	Integrating and taking expectation on both sides in \ref{Ito_formula_second_bound_example} and using that the minimiser $\theta^{\star}=\mu$, we have
	\begin{equation} \label{Ito_formula_third_bound_example}
		\begin{split}
			\E[ | \widetilde{Y}_t - Y_t^{\text{aux}}|^2 ]
			&=  -2\int_0^t\E[|\widetilde{Y}_s - Y_s^{\text{aux}}|^2] \rmd s +4\int_0^t\E[\langle\widetilde{Y}_s - Y_s^{\text{aux}}, m_{T-s}(\theta^*-\hat{\theta})\rangle] \rmd s.
		\end{split}
	\end{equation}
	Differentiating both sides of \ref{Ito_formula_third_bound_example} and using Young's inequality, we obtain
	\[
	\begin{aligned}
		\frac{\rmd}{\rmd t}\E[ | \widetilde{Y}_t - Y_t^{\text{aux}}|^2 ]
		&=  -2\E[|\widetilde{Y}_t - Y_t^{\text{aux}}|^2] +4\E[ \langle \widetilde{Y}_t - Y_t^{\text{aux}}, m_{T-t}(\theta^*-\hat{\theta}) \rangle]\\
		&\leq -\E[|\widetilde{Y}_t - Y_t^{\text{aux}}|^2] +4e^{-2T}e^{2t}\E[|\theta^*-\hat{\theta}|^2],
	\end{aligned}
	\]
	which implies that
	\[
	\frac{\rmd}{\rmd t}(e^t\E[ | \widetilde{Y}_t - Y_t^{\text{aux}}|^2 ])\leq 4e^{-2T}e^{3t}\E[|\theta^*-\hat{\theta}|^2].
	\]
	Integrating both sides and using Lemma \ref{lem:sgld2ndmmtexample} yields
	\begin{equation} \label{second_bound_before_Wasserstein_example}
		\begin{split}
			&\E[ | \widetilde{Y}_t - Y_t^{\text{aux}}|^2 ]\\
			&\leq (4/3)e^{-2T}(e^{2t}-e^{-t})\E[|\theta^*-\hat{\theta}|^2]\\
			&\leq (4/3)e^{-2(T-t)} (e^{-2n\lambda\E[\sigma_{\tau}^2 m_{\tau} ^2 ] }\E[|\theta_0-\theta^* |^2]+ dC_{\mathsf{SGLD},1} /\beta+\lambda C_{\mathsf{SGLD},2}).
		\end{split}
	\end{equation}
	Using \ref{second_bound_before_Wasserstein_example}, we have
	\begin{equation}\label{third_upper_bound}
		\begin{split}
			W_2( \mathcal{L}(\widetilde{Y}_{t_{K+1}}), \mathcal{L}(Y_{t_{K+1}}^{\text{aux}})) & \leq \sqrt{\E[ | \widetilde{Y}_{t_{K+1}} - Y_{t_{K+1}}^{\text{aux}}|^2 ]} \\
			& \leq \sqrt{4/3} (e^{-2n\lambda\E[\sigma_{\tau}^2 m_{\tau} ^2 ] }\E[|\theta_0-\theta^* |^2]+dC_{\mathsf{SGLD},1} /\beta+\lambda C_{\mathsf{SGLD},2})^{1/2}.
		\end{split}
	\end{equation}
	\paragraph{Upper bound on $W_2(\mathcal{L}(Y_{t_{K+1}}^{\text{aux}}),  \mathcal{L}(\widehat{Y}_{K+1}^{\text{EM}})).$}	Applying It{\^o}'s formula  and using the processes \ref{Y_auxiliary_theta_hat} and \ref{continuous_time_EM_version} with the approximating function $s$ given in \ref{eqn:appscoreexample}, we have, for any $t\in [0,T]$,
	\begin{equation}  \label{Ito_formula_fourth_bound_example} 
		\begin{split}
			\rmd |   Y_t^{\text{aux}} -\widehat{Y}_t^{\text{EM}}|^2
			&= 2 \langle Y_t^{\text{aux}} -\widehat{Y}_t^{\text{EM}}, Y_t^{\text{aux}}- \widehat{Y}_{\lfloor t/\gamma \rfloor \gamma }^{\text{EM}} \rangle\rmd t\\
			&\quad +4 \langle Y_t^{\text{aux}} -\widehat{Y}_t^{\text{EM}},  s(T-t, \hat{\theta}, Y_t^{\text{aux}} ) - s(T-\lfloor t/\gamma \rfloor \gamma, \hat{\theta} , \widehat{Y}_{\lfloor t/\gamma \rfloor \gamma}^{\text{EM}}) \rangle\rmd t\\
			&=-2 \langle Y_t^{\text{aux}} -\widehat{Y}_t^{\text{EM}}, Y_t^{\text{aux}}- \widehat{Y}_{\lfloor t/\gamma \rfloor \gamma }^{\text{EM}}  \rangle\rmd t\\
			&\quad +4 \langle Y_t^{\text{aux}} -\widehat{Y}_t^{\text{EM}},  (m_{T-t}- m_{T-\lfloor t/\gamma \rfloor \gamma})\hat{\theta}  \rangle\rmd t.
		\end{split}
	\end{equation}
	Integrating both sides and taking expectation in \ref{Ito_formula_fourth_bound_example} yields
	\begin{equation} \label{Ito_formula_fourth_bound_example_second_part}
		\begin{split}
			\E\left[ |Y_t^{\text{aux}} -\widehat{Y}_t^{\text{EM}}|^2 \right]
			&=- 2\int_0^t\E\left[|Y_s^{\text{aux}} -\widehat{Y}_s^{\text{EM}}|^2\right] \rmd s - 2\int_0^t\E\left[\left\langle Y_s^{\text{aux}} -\widehat{Y}_s^{\text{EM}},  \widehat{Y}_s^{\text{EM}}- \widehat{Y}_{\lfloor s/\gamma \rfloor \gamma }^{\text{EM}} \right\rangle\right] \rmd s \\
			&\quad +4\int_0^t\E\left[\left\langle Y_s^{\text{aux}} -\widehat{Y}_s^{\text{EM}}, (m_{T-s}- m_{T-\lfloor s/\gamma \rfloor \gamma})\hat{\theta} \right\rangle\right] \rmd s.
		\end{split}
	\end{equation}
	Differentiating both sides in \ref{Ito_formula_fourth_bound_example_second_part}, using Young's inequality and $m_{T-t}- m_{T-\lfloor t/\gamma \rfloor \gamma}\leq \gamma m_{T-t}$, we have
\begin{equation}\label{eqn:w1con4thub}
		\begin{split}
			\frac{\rmd}{\rmd t}\E[ |Y_t^{\text{aux}} -\widehat{Y}_t^{\text{EM}}|^2 ]
			&=- 2\E[|Y_t^{\text{aux}} -\widehat{Y}_t^{\text{EM}}|^2] -  2\E\left[\left\langle Y_t^{\text{aux}} -\widehat{Y}_t^{\text{EM}},  \sqrt{2 } \int_{\lfloor t/\gamma \rfloor \gamma}^t\rmd\overline{B}_s \right\rangle\right]\\
			&\quad-  2\E\left[\left\langle Y_t^{\text{aux}} -\widehat{Y}_t^{\text{EM}},  \int_{\lfloor t/\gamma \rfloor \gamma}^t  (-\widehat{Y}_{\lfloor s/\gamma \rfloor \gamma }^{\text{EM}} + 2  m_{T-\lfloor s/\gamma \rfloor \gamma} \hat{\theta}) \rmd s \right\rangle\right] \\
			&\quad +4\E[\langle Y_t^{\text{aux}} -\widehat{Y}_t^{\text{EM}}, (m_{T-t}- m_{T-\lfloor t/\gamma \rfloor \gamma})\hat{\theta} \rangle] \\
			&\leq -\E[|Y_t^{\text{aux}} -\widehat{Y}_t^{\text{EM}}|^2] -  2\E\left[\left\langle Y_t^{\text{aux}} -\widehat{Y}_t^{\text{EM}},  \sqrt{2 } \int_{\lfloor t/\gamma \rfloor \gamma}^t\rmd\overline{B}_s \right\rangle\right]\\
			&\quad +2\gamma^2\E[ (-\widehat{Y}_{\lfloor t/\gamma \rfloor \gamma }^{\text{EM}} + 2  m_{T-\lfloor t/\gamma \rfloor \gamma} \hat{\theta})^2] +8\gamma^2e^{-2(T-t)} \E[|\hat{\theta} |^2]\\
			&\leq -\E[|Y_t^{\text{aux}} -\widehat{Y}_t^{\text{EM}}|^2] -  2\E\left[\left\langle Y_t^{\text{aux}} -\widehat{Y}_t^{\text{EM}},  \sqrt{2 } \int_{\lfloor t/\gamma \rfloor \gamma}^t\rmd\overline{B}_s \right\rangle\right]\\
			&\quad +4\gamma^2\E[ |\widehat{Y}_{\lfloor t/\gamma \rfloor \gamma }^{\text{EM}} |^2] +24\gamma^2  \E[|\hat{\theta} |^2].
		\end{split}
	\end{equation}
	We derive an upper bound for the second term on the right-hand side in \ref{eqn:w1con4thub}. Using Cauchy-Schwarz inequality, It\^o formula applied to $tB_t$ and Young's inequality, we have
	\begin{equation*} 
    \begin{split}
		&-  2\E\left[\left\langle Y_t^{\text{aux}} -\widehat{Y}_t^{\text{EM}},  \sqrt{2 } \int_{\lfloor t/\gamma \rfloor \gamma}^t\rmd\overline{B}_s \right\rangle\right]\\
		&=-  2\E\left[\left\langle (Y_t^{\text{aux}}-Y_{\lfloor t/\gamma \rfloor \gamma}^{\text{aux}}) -(\widehat{Y}_t^{\text{EM}}-\widehat{Y}_{\lfloor t/\gamma \rfloor \gamma}^{\text{EM}}),  \sqrt{2 } \int_{\lfloor t/\gamma \rfloor \gamma}^t\rmd\overline{B}_s \right\rangle\right] \\
		&=-  2\E\left[\left\langle \int_{\lfloor t/\gamma \rfloor \gamma}^t (-Y_s^{\text{aux}} +2 m_{T-s} \hat{\theta})\rmd s,  \sqrt{2 } \int_{\lfloor t/\gamma \rfloor \gamma}^t\rmd\overline{B}_s \right\rangle\right]
        \\
		&\quad + 2\E\left[\left\langle  \int_{\lfloor t/\gamma \rfloor \gamma}^t  (-\widehat{Y}_{\lfloor s/\gamma \rfloor \gamma }^{\text{EM}} + 2  m_{T-\lfloor s/\gamma \rfloor \gamma} \hat{\theta}) \rmd s ,  \sqrt{2 } \int_{\lfloor t/\gamma \rfloor \gamma}^t\rmd\overline{B}_s \right\rangle\right]
		 \\ &=2\E\left[\left\langle \int_{\lfloor t/\gamma \rfloor \gamma}^t ( Y_s^{\text{aux}} -\widehat{Y}_{\lfloor s/\gamma \rfloor \gamma }^{\text{EM}})\rmd s,  \sqrt{2 } \int_{\lfloor t/\gamma \rfloor \gamma}^t\rmd\overline{B}_s \right\rangle\right]
         \\
		&=2\E\left[\left\langle \int_{\lfloor t/\gamma \rfloor \gamma}^t ( Y_s^{\text{aux}} - Y_{\lfloor s/\gamma \rfloor \gamma }^{\text{aux}})\rmd s,  \sqrt{2 } \int_{\lfloor t/\gamma \rfloor \gamma}^t\rmd\overline{B}_s \right\rangle\right]	
		\\	&=2\E\left[\left\langle \int_{\lfloor t/\gamma \rfloor \gamma}^t\int_{\lfloor s/\gamma \rfloor \gamma}^s (- Y_{\nu}^{\text{aux}} )\rmd\nu \rmd s,  \sqrt{2 } \int_{\lfloor t/\gamma \rfloor \gamma}^t\rmd\overline{B}_s \right\rangle\right]
        \\
		&\quad +2\E\left[\left\langle \int_{\lfloor t/\gamma \rfloor \gamma}^t \sqrt{2 } \int_{\lfloor s/\gamma \rfloor \gamma}^s \rmd\overline{B}_\nu \rmd s,  \sqrt{2 } \int_{\lfloor t/\gamma \rfloor \gamma}^t\rmd\overline{B}_s \right\rangle\right]
        \end{split}
	\end{equation*}
	\begin{equation} \label{eqn:w1con4thub2}
    \begin{split}
		&\leq 2\sqrt{2}\left(\E\left[\left|\int_{\lfloor t/\gamma \rfloor \gamma}^t\int_{\lfloor s/\gamma \rfloor \gamma}^s (- Y_{\nu}^{\text{aux}} )\rmd\nu \rmd s\right|^2\right]\right)^{1/2}\left(\E\left[\left|\int_{\lfloor t/\gamma \rfloor \gamma}^t\rmd\overline{B}_s \right|^2\right]\right)^{1/2}
		\\	& +4\E\left[\left\langle  t\int_{\lfloor t/\gamma \rfloor \gamma}^t \rmd\overline{B}_s -\int_{\lfloor t/\gamma \rfloor \gamma}^t s \rmd\overline{B}_s,    \int_{\lfloor t/\gamma \rfloor \gamma}^t\rmd\overline{B}_s \right\rangle\right]
		\\	&\leq  \sqrt{2 } \gamma^{5/2}\left(\sup_{t\geq 0}\E\left[| Y_t^{\text{aux}}|^2\right]+d\right)+2d\gamma^2.
        \end{split}
	\end{equation}
	Substituting \ref{eqn:w1con4thub2} into \ref{eqn:w1con4thub} yields
	\[
	\begin{aligned}
		\frac{\rmd}{\rmd t}\E[ |Y_t^{\text{aux}} -\widehat{Y}_t^{\text{EM}}|^2 ]
		&\leq -\E[|Y_t^{\text{aux}} -\widehat{Y}_t^{\text{EM}}|^2]+\sqrt{2} \gamma^{5/2}\sup_{t\geq 0}\E[| Y_t^{\text{aux}}|^2]\\
		&\quad + \sqrt{2 }d \gamma^{5/2} +2d\gamma^2 +4\gamma^2\E[ |\widehat{Y}_{\lfloor t/\gamma \rfloor \gamma }^{\text{EM}} |^2] +24\gamma^2  \E[|\hat{\theta} |^2].
	\end{aligned}
	\]
	Thus,
	\begin{equation} \label{fourth_bound_integrating_factor_example}
		\frac{\rmd}{\rmd t}(e^t\E[ |Y_t^{\text{aux}} -\widehat{Y}_t^{\text{EM}}|^2 ])
		\leq  e^t\gamma^2\left( \sqrt{2} \sup_{t\geq 0}\E[| Y_t^{\text{aux}}|^2]+3d +4\E[ |\widehat{Y}_{\lfloor t/\gamma \rfloor \gamma }^{\text{EM}} |^2] +24 \E[|\hat{\theta} |^2]\right).
	\end{equation}
	Integrating both sides in \ref{fourth_bound_integrating_factor_example} and using Lemma \ref{lem:auxproc2ndbdexample}, Lemma \ref{lem:EMal2ndbdexample} 	  and Corollary \ref{lem:sgld2ndmmtcorexample}, yield
	\begin{equation} \label{third_bound_before_wasserstein_example}
		\begin{split}
			&\E[ |Y_t^{\text{aux}} -\widehat{Y}_t^{\text{EM}}|^2 ]\\
			&\leq  \sqrt{2}\gamma^2 (2d+(8/3)(e^{-2n\lambda\E[\sigma_{\tau}^2 m_{\tau} ^2 ] }\E[|\theta_0-\theta^* |^2]+dC_{\mathsf{SGLD},1} /\beta+\lambda C_{\mathsf{SGLD},2}+|\theta^*|^2))\\
			&\quad +3d\gamma^2 +4\gamma^2(3d+ 20 ( e^{-2n\lambda\E[\sigma_{\tau}^2 m_{\tau} ^2 ] }\E[|\theta_0-\theta^* |^2]+ dC_{\mathsf{SGLD},1} /\beta+\lambda C_{\mathsf{SGLD},2}+ |\theta^*|^2))\\
			&\quad+ 48\gamma^2( e^{-2n\lambda\E[\sigma_{\tau}^2 m_{\tau} ^2 ] }\E[|\theta_0-\theta^* |^2]+ dC_{\mathsf{SGLD},1} /\beta+ \lambda C_{\mathsf{SGLD},2}+ |\theta^*|^2)\\
			&\leq 18d\gamma^2+132\gamma^2  ( e^{-2n\lambda\E[\sigma_{\tau}^2 m_{\tau} ^2 ] }\E[|\theta_0-\theta^* |^2]+ dC_{\mathsf{SGLD},1} /\beta+\lambda C_{\mathsf{SGLD},2}+ |\theta^*|^2).
		\end{split}
	\end{equation}
	By using \ref{third_bound_before_wasserstein_example}, we have
	\begin{equation}\label{fourth_upper_bound}
		\begin{aligned}
			&W_2(\mathcal{L}(Y_{t_{K+1}}^{\text{aux}}), \mathcal{L}(\widehat{Y}_{K+1}^{\text{EM}}))\\
			& \leq \sqrt{ \E[ |Y_{t_{K+1}}^{\text{aux}} -\widehat{Y}_{t_{K+1}}^{\text{EM}}|^2 ] } \\
			& \leq \gamma (18 d+132  ( e^{-2n\lambda\E[\sigma_{\tau}^2 m_{\tau} ^2 ] }\E[|\theta_0-\theta^* |^2]+ dC_{\mathsf{SGLD},1} /\beta+\lambda C_{\mathsf{SGLD},2}+ |\theta^*|^2))^{1/2}.
		\end{aligned}
	\end{equation}
	\paragraph{Final upper bound on $W_2(\mathcal{L}(\widehat{Y}_{K+1}^{\text{EM}}),\pi_{\mathsf{D}}).$} Substituting \ref{second_upper_bound}, \ref{third_upper_bound}, \ref{fourth_upper_bound} into \ref{upper_bound_wasserstein_proof} yields
	\begin{equation} \label{final_upper_bound_example_before_choosing_parameters}
		\begin{split}
			&W_2(\mathcal{L}(\widehat{Y}_{K+1}^{\text{EM}}),\pi_{\mathsf{D}})
			\\ & \leq  \sqrt{2} e^{- 2 T} (\sqrt{\E\left[ |X_0|^2 \right]}+\sqrt{  d} ) \\
			&\quad + \sqrt{4/3} (e^{-2n\lambda\E[\sigma_{\tau}^2 m_{\tau} ^2 ] }\E[|\theta_0-\theta^* |^2]+dC_{\mathsf{SGLD},1} /\beta+\lambda C_{\mathsf{SGLD},2})^{1/2} \\
			&\quad +\gamma (18 d+132 ( e^{-2n\lambda\E[\sigma_{\tau}^2 m_{\tau} ^2 ] }\E[|\theta_0-\theta^* |^2]+ dC_{\mathsf{SGLD},1} /\beta+\lambda C_{\mathsf{SGLD},2}+ |\theta^*|^2))^{1/2}\\
			& \le  \sqrt{2} e^{- 2 T} (\sqrt{\E\left[ |X_0|^2 \right]}+\sqrt{  d} ) \\
			&\quad +(\sqrt{4/3}+2 \sqrt{33}) (e^{-2n\lambda\E[\sigma_{\tau}^2 m_{\tau} ^2 ] }\E[|\theta_0-\theta^* |^2]+ dC_{\mathsf{SGLD},1} /\beta+\lambda C_{\mathsf{SGLD},2})^{1/2}\\
			&\quad +\gamma (18 d+132|\theta^*|^2 )^{1/2}.
		\end{split}
	\end{equation}
	The bound for $W_2(\mathcal{L}(\widehat{Y}_{K+1}^{\text{EM}}),\pi_{\mathsf{D}}) $ in  \ref{final_upper_bound_example_before_choosing_parameters} can be made arbitrarily small by appropriately choosing parameters including $T,\beta,\lambda,n$ and $\gamma$. More precisely, for any $\delta>0$,  we first choose $T > T_{\delta}$ with $T_{\delta}$ given explicitly in Table \ref{tab:convconst}  such that
	\begin{equation} \label{second_delta_4_example}
		\sqrt{2} e^{- 2 T} (\sqrt{\E\left[ |X_0|^2 \right]}+\sqrt{  d} )  <\delta/4.
	\end{equation}
	Next, we choose $ \beta \geq \beta_{\delta}$ and $	0 <\lambda \le \lambda_{\delta}$ with $\beta_{\delta}$ and $ \lambda_{\delta}$ given explicitly in Table \ref{tab:convconst},
	and, in the case where $\lambda= \lambda_{\delta}$, we choose
	$n\geq n_{\delta}$ with $n_{\delta}$ given explicitly in Table \ref{tab:convconst}
	such that
	\begin{equation} \label{third_delta_4_example}
		\begin{split}
			&(\sqrt{4/3}+ 2 \sqrt{33})
			(e^{-2n\lambda\E[\sigma_{\tau}^2 m_{\tau}^2 ] }\E[|\theta_0-\theta^* |^2]+ dC_{\mathsf{SGLD},1} /\beta+\lambda C_{\mathsf{SGLD},2})^{1/2}\\
			&\leq (\sqrt{4/3} +2 \sqrt{33}) \sqrt{d /(\beta \E[\sigma_{\tau}^2 m_{\tau}^2])}\\
			&\quad +(\sqrt{4/3}+ 2 \sqrt{33}) (4 \lambda \E[ \sigma_{\tau}^4 m_{\tau}^2(\sigma_{\tau}^{-1}|Z|+ m_{\tau}  |X_0| +\sigma_{\tau} |Z|+m_{\tau}  |\theta^*| )^2 ]/ (\E[\sigma_{\tau}^2 m_{\tau}^2] ))^{1/2}\\
			&\quad +(\sqrt{4/3}+2 \sqrt{33})  e^{-n\lambda\E[\sigma_{\tau}^2m_{\tau} ^2]} \sqrt{ \E[|\theta_0-\theta^* |^2]}\\
			&\leq \delta/12+\delta/12+\delta/12 = \delta/4.
		\end{split}
	\end{equation}
	Finally, we choose $ 	0 < \gamma < \gamma_{\delta}$  with $\gamma_{\delta}$ given explicitly in Table \ref{tab:convconst} such that
	\begin{equation} \label{fourth_delta_4_example}
		\gamma (18 d+132|\theta^*|^2 )^{1/2}<\delta/4.
	\end{equation}
	Using \ref{second_delta_4_example}, \ref{third_delta_4_example} and \ref{fourth_delta_4_example} in \ref{final_upper_bound_example_before_choosing_parameters}, we obtain $ W_2(\mathcal{L}(\widehat{Y}_{K+1}^{\text{EM}}),\pi_{\mathsf{D}}) <\delta$.
\end{proof}

\subsection{Proof of the Preliminary Results} \label{proof_preliminary_estimates_appendix_motivating_example}

\noindent We provide the proofs of the results of Appendix \ref{preliminary_estimates_appendix_motivating_example}.

\begin{proof} [Proof of Lemma \ref{lem:sgld2ndmmtexample}] \label{proof:sgld2ndmmtexample}
	By \ref{eqn:SGLDexample}, we have for any  $n\in\N_0$,
	\begin{equation} \label{distance_SGLD_minimiser_first_proof_preliminary_results_appendix}
		\begin{split}
			|\theta_{n+1}^{\lambda}-\theta^*|^2
			&= \left| \theta_{n}^{\lambda} -\theta^*- \lambda H(\theta_n^{\lambda},\mathbf{X}_{n+1}) + \sqrt{2 \lambda /\beta} \ \xi_{n+1}\right|^2
			\\ &= \left| \theta_{n}^{\lambda} -\theta^*\right|^2+2\left\langle \theta_{n}^{\lambda} -\theta^*, - \lambda H(\theta_n^{\lambda},\mathbf{X}_{n+1}) + \sqrt{2 \lambda /\beta} \ \xi_{n+1} \right\rangle
			\\	& \quad  + \left|   - \lambda H(\theta_n^{\lambda},\mathbf{X}_{n+1}) + \sqrt{2 \lambda /\beta}  \xi_{n+1} \right|^2
			\\	&= \left| \theta_{n}^{\lambda} -\theta^*\right|^2-2\lambda \left\langle \theta_{n}^{\lambda} -\theta^*, H(\theta_n^{\lambda},\mathbf{X}_{n+1}) -H(\theta^*,\mathbf{X}_{n+1}) \right\rangle
			\\	&  \quad -2\lambda \left\langle \theta_{n}^{\lambda} -\theta^*, H(\theta^*,\mathbf{X}_{n+1}) \right\rangle+2\sqrt{2 \lambda /\beta} \left\langle \theta_{n}^{\lambda} -\theta^*,   \xi_{n+1} \right\rangle + \lambda^2\left| H(\theta_n^{\lambda},\mathbf{X}_{n+1}) \right|^2
			\\	& \quad -2\lambda\sqrt{2 \lambda /\beta}\left\langle H(\theta_n^{\lambda},\mathbf{X}_{n+1}) ,\xi_{n+1} \right\rangle+ (2 \lambda /\beta)\left| \xi_{n+1} \right|^2.
		\end{split}
	\end{equation}
	Taking conditional expectation on both sides in \ref{distance_SGLD_minimiser_first_proof_preliminary_results_appendix} yields
	\begin{equation*}
		\begin{split}
			\E\left[|\theta_{n+1}^{\lambda}-\theta^*|^2\mid \theta_n^{\lambda}\right]
			&= \left| \theta_{n}^{\lambda} -\theta^*\right|^2-2\lambda\E\left[ \left\langle \theta_{n}^{\lambda} -\theta^*, H(\theta_n^{\lambda},\mathbf{X}_{n+1}) -H(\theta^*,\mathbf{X}_{n+1}) \right\rangle\mid \theta_n^{\lambda}\right] \\
			&\quad  +2 \lambda^2\E\left[\left| H(\theta_n^{\lambda},\mathbf{X}_{n+1}) - H(\theta^*,\mathbf{X}_{n+1}) \right|^2\mid \theta_n^{\lambda}\right]\\
			&\quad +2 \lambda^2\E\left[\left|   H(\theta^*,\mathbf{X}_{n+1}) \right|^2 \mid \theta_n^{\lambda} \right]+2 \lambda d /\beta.
		\end{split}
	\end{equation*}
	Recall that $0< \lambda \leq \min\{\E[\sigma_{\tau}^2 m_{\tau} ^2]/(4\E[\sigma_{\tau}^4m_{\tau}^4]), 1/(2\E[\sigma_{\tau}^2m_{\tau}^2])\}$. Using
	Proposition \ref{Lipschitz_stochastic_gradient_motivating_example} and the stochastic gradient \ref{stochastic_gradient_motivating_example}, we have
	\begin{align*}
		\E\left[|\theta_{n+1}^{\lambda}-\theta^*|^2\mid \theta_n^{\lambda}\right]
		&\leq \left(1-2\lambda\E\left[\sigma_{\tau}^2 m_{\tau} ^2 \right] \right)|\theta_{n}^{\lambda} -\theta^* |^2+2 \lambda d /\beta\\
		&\quad  -2\lambda\E\left[\sigma_{\tau}^2 m_{\tau} ^2 \right] |\theta_{n}^{\lambda} -\theta^* |^2+8 \lambda^2\E\left[\sigma_{\tau}^4 m_{\tau} ^4 \right]\left| \theta_n^{\lambda}- \theta^* \right|^2\\
		&\quad +8 \lambda^2\E\left[ \sigma_{\tau}^4 m_{\tau}^2\left(\sigma_{\tau}^{-1}|Z|+ m_{\tau}  |X_0| +\sigma_{\tau} |Z|+m_{\tau}  |\theta^*| \right)^2 \right]
		\\	&\leq  \left(1-2\lambda\E\left[\sigma_{\tau}^2 m_{\tau} ^2 \right] \right)|\theta_{n}^{\lambda} -\theta^* |^2+2 \lambda d /\beta\\
		&\quad +8 \lambda^2\E\left[ \sigma_{\tau}^4 m_{\tau}^2\left(\sigma_{\tau}^{-1}|Z|+ m_{\tau}  |X_0| +\sigma_{\tau} |Z|+m_{\tau}  |\theta^*| \right)^2 \right].
	\end{align*}
	This implies that
	\[
	\E\left[|\theta_{n+1}^{\lambda}-\theta^*|^2 \right]\leq \left(1-2\lambda\E\left[\sigma_{\tau}^2 m_{\tau} ^2 \right] \right)^{n+1}\E\left[|\theta_0-\theta^* |^2\right]+dC_{\mathsf{SGLD},1} /\beta+\lambda C_{\mathsf{SGLD},2}.
	\]
\end{proof}

\begin{proof} [Proof of Lemma \ref{lem:auxproc2ndbdexample}] \label{proof:auxproc2ndbdexample}
	Applying It{\^o}'s formula and using the process \ref{Y_auxiliary_theta_hat} with $s$ given in \ref{eqn:appscoreexample}, we have, for any $t\in[0,T]$,
	\[
	\begin{aligned}
		\rmd |Y_t^{\text{aux}} |^2
		&=-2|  Y_t^{\text{aux}}|^2\rmd t+2 \langle  Y_t^{\text{aux}} , 2m_{T-t} \hat{\theta} \rangle \rmd t+2 \langle  Y_t^{\text{aux}} ,  \sqrt{2 }  \rmd \overline{B}_t \rangle +2d  \rmd t.
	\end{aligned}
	\]
	Integrating both sides and taking expectation, we have
	\[
	\begin{aligned}
		\E\left[|Y_t^{\text{aux}}|^2\right]
		&=  -2\int_0^t \E\left[|Y_s^{\text{aux}} |^2\right]\rmd s+2\int_0^t \E[\langle  Y_s^{\text{aux}} , 2m_{T-s} \hat{\theta} \rangle ]\rmd s  + \E\left[|Y_0^{\text{aux}}|^2\right]+2dt.
	\end{aligned}
	\]
	Then, differentiating both sides, we have
	\[
	\begin{aligned}
		\frac{\rmd}{\rmd t}\E\left[|Y_t^{\text{aux}}|^2\right]
		&=  -2  \E\left[|Y_t^{\text{aux}} |^2\right]  +2 \E[ \langle  Y_t^{\text{aux}} , 2m_{T-t} \hat{\theta} \rangle ]  + 2d\\
		&\leq - \E\left[|Y_t^{\text{aux}} |^2\right]+4m_{T-t}^2\E [| \hat{\theta} |^2 ]  + 2d,
	\end{aligned}
	\]
	which, by rearranging the terms, yields
	\[
	\begin{aligned}
		\frac{\rmd}{\rmd t} (e^t\E\left[|Y_t^{\text{aux}}|^2\right]  )
		&\leq  4e^{-2T}e^{3t}\E[| \hat{\theta} |^2 ]  + 2de^t.
	\end{aligned}
	\]
	Integrating both side and using Corollary \ref{lem:sgld2ndmmtcorexample}, we obtain
	\[
	\begin{aligned}
		\E\left[|Y_t^{\text{aux}}|^2\right]
		&\leq e^{-t} ( \E\left[|Y_0^{\text{aux}}|^2\right]+ (4/3)e^{-2T}(e^{3t}-1)\E [| \hat{\theta} |^2 ]  + 2d(e^t-1))\\
		&\leq d(2-e^{-t})+(4/3)e^{-2T}(e^{2t}-e^{-t})\E [| \hat{\theta} |^2 ]  \\
		&\leq 2d+e^{-2(T-t)}(8/3)(e^{-2n\lambda\E\left[\sigma_{\tau}^2 m_{\tau} ^2 \right] }\E [|\theta_0-\theta^* |^2 ]+dC_{\mathsf{SGLD},1} /\beta+\lambda C_{\mathsf{SGLD},2}+|\theta^*|^2).
	\end{aligned}
	\]
\end{proof}

\begin{proof}[Proof of Lemma \ref{lem:EMal2ndbdexample}]\label{proof:EMal2ndbdexample}
	Using the process \ref{eq:backwardprocemdisc_old} with the approximating function $s$ given in  \ref{eqn:appscoreexample}, we have
	\begin{equation} \label{Y_K_motivating_exaple_first_line_proof}
		\begin{split}
			|Y_{k+1}^{\text{EM}}|^2
			&=|	Y_{k}^{\text{EM}} +  \gamma (-Y_{k}^{\text{EM}} + 2m_{T-k} \hat{\theta})|^2+ 2  \gamma| \bar{Z}_{k+1}|^2\\
			&\quad +2\langle Y_{k}^{\text{EM}} +  \gamma (-Y_{k}^{\text{EM}} + 2m_{T-k} \hat{\theta}),  \sqrt{2  \gamma   }   \bar{Z}_{k+1} \rangle .
		\end{split}
	\end{equation}
	Taking conditional expectation on both sides in \ref{Y_K_motivating_exaple_first_line_proof}, using the independence of $Y_{k}^{\text{EM}}$ and $\bar{Z}_{k+1}$, Young's inequality and  $0< \gamma \leq 1/2$, we obtain
	\begin{equation*}
		\begin{split}
			\E \left[|Y_{k+1}^{\text{EM}}|^2\mid Y_{k}^{\text{EM}}\right]
			&=| Y_{k}^{\text{EM}}|^2+2\langle Y_{k}^{\text{EM}},  \gamma (-Y_{k}^{\text{EM}} + 2m_{T-t_{k}} \hat{\theta}) \rangle  \\
			&\quad +\gamma^2| Y_{k}^{\text{EM}}|^2+4\gamma^2m_{T-k}^2|\hat{\theta}|^2-2\gamma^2 \langle Y_{k}^{\text{EM}}, 2m_{T-t_{k}} \hat{\theta} \rangle+2  \gamma d\\
			&=(1-2\gamma)| Y_{k}^{\text{EM}}|^2+4\gamma^2m_{T-k}^2|\hat{\theta}|^2+2  \gamma d\\
			&\quad +\gamma^2| Y_{k}^{\text{EM}}|^2+2\gamma(1-\gamma) \langle Y_{k}^{\text{EM}},   2m_{T-k} \hat{\theta} \rangle  \\
			&\leq (1- \gamma)| Y_{k}^{\text{EM}}|^2+4\gamma(\gamma+2)m_{T-k}^2|\hat{\theta}|^2+2  \gamma d\\
			&\quad - \gamma| Y_{k}^{\text{EM}}|^2+\gamma^2| Y_{k}^{\text{EM}}|^2+ (\gamma/2)| Y_{k}^{\text{EM}}|^2  \\
			&\leq (1- \gamma)| Y_{k}^{\text{EM}}|^2+4\gamma(\gamma+2)|\hat{\theta}|^2+2  \gamma d.
		\end{split}
	\end{equation*}
	Thus,
	\begin{equation} \label{second_moment_EM_second_probability_space}
		\E \left[|Y_{k+1}^{\text{EM}}|^2 \right]\leq (1- \gamma)^{k+1}d+10|\hat{\theta}|^2+2   d.
	\end{equation}
	Using  Corollary \ref{lem:sgld2ndmmtcorexample} in \ref{second_moment_EM_second_probability_space} yields
	\[
	\E \left[|Y_{k}^{\text{EM}}|^2 \right]\leq (1- \gamma)^{k}d+2   d+20( e^{-2n\lambda\E\left[\sigma_{\tau}^2 m_{\tau} ^2 \right] }\E\left[|\theta_0-\theta^* |^2\right]+ dC_{\mathsf{SGLD},1} /\beta+\lambda C_{\mathsf{SGLD},2}+ |\theta^*|^2).
	\]
\end{proof}

\begin{proof}[Proof of Lemma \ref{lem:EMprocoseexample}] \label{proof:EMprocoseexample}
	Using the process \ref{continuous_time_EM_version} with the approximating function $s$ given in  \ref{eqn:appscoreexample}, Lemma \ref{lem:EMal2ndbdexample}, Corollary \ref{lem:sgld2ndmmtcorexample} and $\gamma \in (0,1/2]$, we have, for any $t\in[0,T]$,
	\[
	\begin{aligned}
		&\E\left[ |\widehat{Y}_t^{\text{EM}}  - \widehat{Y}_{\lfloor t/\gamma \rfloor \gamma}^{\text{EM}} |^2\right]  \\
		& =\E\left[ \left|\int_{\lfloor t/\gamma \rfloor \gamma}^t  (-\widehat{Y}_{\lfloor s/\gamma \rfloor \gamma }^{\text{EM}} + 2  m_{T-\lfloor s/\gamma \rfloor \gamma} \hat{\theta}) \rmd s + \sqrt{2 } \int_{\lfloor t/\gamma \rfloor \gamma}^t\rmd\overline{B}_s\right|^2\right]  \\
		&\leq  \gamma^2 \E\left[ \left|-\widehat{Y}_{\lfloor t/\gamma \rfloor \gamma }^{\text{EM}} + 2  m_{T-\lfloor t/\gamma \rfloor \gamma} \hat{\theta}\right|^2\right]+2d\gamma\\
		& \leq 2 \gamma^2 \E \left[ | \widehat{Y}_{\lfloor t/\gamma \rfloor \gamma}^{\text{EM}} |^2\right]  +8\gamma^2 \E\left[ | \hat{\theta}|^2\right]  +2d\gamma  \\
		&\leq 6\gamma^2d+40\gamma^2( e^{-2n\lambda\E\left[\sigma_{\tau}^2 m_{\tau} ^2 \right] }\E\left[|\theta_0-\theta^* |^2\right]+dC_{\mathsf{SGLD},1} /\beta+\lambda C_{\mathsf{SGLD},2}+ |\theta^*|^2)\\
		&\quad +16\gamma^2( e^{-2n\lambda\E\left[\sigma_{\tau}^2 m_{\tau} ^2 \right] }\E\left[|\theta_0-\theta^* |^2\right]+ dC_{\mathsf{SGLD},1} /\beta+\lambda C_{\mathsf{SGLD},2}+ |\theta^*|^2)+2d\gamma\\
		& \leq \gamma C_{\mathsf{EMose}},
	\end{aligned}
	\]
	where $C_{\mathsf{EMose}}= 8d+56  ( e^{-2n\lambda\E\left[\sigma_{\tau}^2 m_{\tau} ^2 \right] }\E\left[|\theta_0-\theta^* |^2\right]+ dC_{\mathsf{SGLD},1} /\beta+\lambda C_{\mathsf{SGLD},2}+ |\theta^*|^2)$.
\end{proof}

\begin{proof}[Proof of Lemma \ref{lem:EMproc2ndbdexample}] \label{proof:EMproc2ndbdexample}
	Applying It{\^o}'s formula to the process \ref{continuous_time_EM_version} with the approximating function $s$ given in \ref{eqn:appscoreexample}, we have, for any $t\in[0,T]$,
	\[
	\begin{aligned}
		\rmd |\widehat{Y}_t^{\text{EM}}|^2
		& =  2 \langle \widehat{Y}_t^{\text{EM}}, -\widehat{Y}_{\lfloor t/\gamma \rfloor \gamma }^{\text{EM}}   + 2  m_{T-\lfloor t/\gamma \rfloor \gamma} \hat{\theta} \rangle\rmd t   +2 \langle \widehat{Y}_t^{\text{EM}} ,  \sqrt{2 }  \rmd \overline{B}_t \rangle +2d  \rmd t
		\\ & = - 2 |\widehat{Y}_t^{\text{EM}}|^2 \rmd t +  2 \langle \widehat{Y}_t^{\text{EM}} , \widehat{Y}_t^{\text{EM}} -\widehat{Y}_{\lfloor t/\gamma \rfloor \gamma }^{\text{EM}}   \rangle\rmd t +  2 \langle \widehat{Y}_t^{\text{EM}} ,  2  m_{T-\lfloor t/\gamma \rfloor \gamma} \hat{\theta} \rangle\rmd t
		\\ & \quad +2 \langle \widehat{Y}_t^{\text{EM}} ,  \sqrt{2 }  \rmd \overline{B}_t \rangle +2d  \rmd t.
	\end{aligned}
	\]
	Integrating both sides and taking expectation, we have
	\[
	\begin{aligned}
		\E\left[ |\widehat{Y}_t^{\text{EM}}|^2\right]
		&=-2\int_0^t \E \left[| \widehat{Y}_s^{\text{EM}}|^2 \right]\rmd s +2\int_0^t \E \left[\langle \widehat{Y}_s^{\text{EM}}, \widehat{Y}_s^{\text{EM}}-\widehat{Y}_{\lfloor s/\gamma \rfloor \gamma }^{\text{EM}}   \rangle \right]\rmd s\\
		&\quad +2\int_0^t \E \left [ \langle \widehat{Y}_s^{\text{EM}},   2  m_{T-\lfloor s/\gamma \rfloor \gamma} \hat{\theta}\rangle \right]\rmd s+\E \left[ |\widehat{Y}_0^{\text{EM}}|^2 \right] +2d  t.
	\end{aligned}
	\]
	Then, differentiating both sides and using Young's inequality, yield
	\[
	\begin{aligned}
		\frac{\rmd}{\rmd t}\E \left[ |\widehat{Y}_t^{\text{EM}}|^2 \right]
		&=-2 \E \left[| \widehat{Y}_t^{\text{EM}}|^2 \right] +2\E \left[\langle \widehat{Y}_t^{\text{EM}}, \widehat{Y}_t^{\text{EM}}-\widehat{Y}_{\lfloor t/\gamma \rfloor \gamma }^{\text{EM}}   \rangle \right]\\
		&\quad +2\E \left[\langle \widehat{Y}_t^{\text{EM}},   2  m_{T-\lfloor t/\gamma \rfloor \gamma} \hat{\theta} \rangle \right] +2d \\
		&\leq -  \E \left[| \widehat{Y}_t^{\text{EM}}|^2  \right] +2\E \left[|\widehat{Y}_t^{\text{EM}}-\widehat{Y}_{\lfloor t/\gamma \rfloor \gamma }^{\text{EM}}   |^2 \right]+8\E [| \hat{\theta}|^2] +2d.
	\end{aligned}
	\]
	Using Lemma \ref{lem:EMprocoseexample}, we have
	\[
	\frac{\rmd}{\rmd t}\left(e^t\E \left[ |\widehat{Y}_t^{\text{EM}}|^2 \right] \right)\leq e^t (2 \gamma C_{\mathsf{EMose}}+8\E [| \hat{\theta}|^2 ]+2d ).
	\]
	Integrating both sides, using $\gamma \in (0,1/2]$ and Corollary \ref{lem:sgld2ndmmtcorexample}  yields
	\[
	\begin{aligned}
		\E \left[ |\widehat{Y}_t^{\text{EM}}|^2 \right]
		&\leq e^{-t}d+(1-e^{-t}) (2 \gamma C_{\mathsf{EMose}}+8 \E[| \hat{\theta}|^2]+2d )\\
		& \leq 18d+ 128 ( e^{-2n\lambda\E\left[\sigma_{\tau}^2 m_{\tau} ^2 \right] }\E\left[|\theta_0-\theta^* |^2\right]+dC_{\mathsf{SGLD},1} /\beta+\lambda C_{\mathsf{SGLD},2}+ |\theta^*|^2).
	\end{aligned}
	\]
\end{proof}

\section{Proofs of the Results in the General Case} \label{proofs_appendix_preliminary_results}

\noindent In this section, we provide the proof of Theorem \ref{main_theorem_general}. We start by introducing the results which will be used in the proof of Theorem \ref{main_theorem_general}.

\subsection{Preliminary Estimates for the General Case} \label{proofs_appendix_preliminary_results_general_case_statements}

\noindent Throughout this section, we fix $\epsilon \in (0,1)$. The following auxiliary results will be used in the proof of Theorem \ref{main_theorem_general} and their proofs are postponed to Appendix \ref{proof_preliminary_estimates_appendix_general_case}.

We provide an upper bound for the moments of $(\widehat{Y}_t^{\text{EM}})_{t \in [0,T]}$ defined in \ref{continuous_time_EM_version}.
\begin{lem}\label{lem:EMproc2ndbdgeneral}
	Let Assumptions \ref{general_assumption_algorithm} and \ref{Assumption_2} hold.
	For any $p \in [2, 4]$ and $t\in[0,T-\epsilon]$,
	\begin{equation*}
		\sup_{0\leq s\leq t} \mathbb{E} \left[|\widehat{Y}_s^{\text{EM}} |^p \right]\leq C_{\mathsf{EM},p}(t),
	\end{equation*}
	where
	\begin{equation*}
		\begin{split}
			C_{\mathsf{EM},p}(t)
			&	: =   e^{t(3p-1 - \frac{2}{p} + 2^{2p-1} \mathsf{K}^p_{\text{Total}} (1+T^{\alpha p}))}
			\\ & \quad \times   \left( \mathbb{E} \left[ |\widehat{Y}_0^{\text{EM}}|^p \right] +   2^{3p-2}\mathsf{K}^p_{\text{Total}} t (1+ \mathbb{E} [  |  \hat{\theta} |^p  ]) (1+T^{\alpha p})  + \frac{2}{p} (pM+ p(p-2))^{\frac{p}{2}} t  \right) .
		\end{split}
	\end{equation*}
\end{lem}
\noindent The following result provides an estimate for the one step error associated with $(\widehat{Y}_t^{\text{EM}})_{t \in [0,T]}$ defined in \ref{continuous_time_EM_version}.
\begin{lem}\label{lem:distance_EM_scheme}
	Let Assumptions \ref{general_assumption_algorithm} and \ref{Assumption_2} hold.
	For any $p \in [2, 4]$ and $t\in[0,T-\epsilon]$,
	\begin{equation*}
		\E\left[|\widehat{Y}_t^{\text{EM}}  - \widehat{Y}_{\lfloor t/\gamma \rfloor \gamma}^{\text{EM}}  |^p \right]\leq  \gamma^{\frac{p}{2}}  C_{\mathsf{EMose},p},
	\end{equation*}
	where
	\begin{equation*}
		\begin{split}
			C_{\mathsf{EMose},p} & :=   2^{p-1}  (C_{\mathsf{EM},p}(T) + \mathsf{K}^p_{\text{Total}} (1+  T^{\alpha p} )  (2^{3p-2}    C_{\mathsf{EM},p}(T) + 2^{4p-3} (1+ \E [|\hat{\theta}|^p])  ))
			\\ & \quad +   (M p(p-1))^{\frac{p}{2}}.
		\end{split}
	\end{equation*}
\end{lem}
\noindent The following result is a modification of \citet[Lemma 4.1]{kumar2019milstein}.
\begin{lem} \label{Improved_rate_drift_growth}
	Let Assumption \ref{Assumption_2} hold and let $b : [0,T] \times \mathbb{R}^d \times \mathbb{R}^M \rightarrow \mathbb{R}^M$ such that
	\begin{equation} \label{expression_b_drift_corrollary_lemma_appendix}
		b(t,\theta,x) := x+ 2 s(t,\theta,x).
	\end{equation}
	Then, for any  $x, \bar{x} \in \mathbb{R}^M$, $t \in [0,T]$, $\theta \in \mathbb{R}^d$, $ \alpha \in [\frac{1}{2},1]$, and $k=1, \dots M$,
	\begin{equation*}
		\left|  b^{(k)}(t,\theta,x) - b^{(k)}(t,\theta,\bar{x}) - \sum_{i=1}^M \frac{\partial b^{(k)}(t,\theta,\bar{x})}{\partial y^i} (x^i - \bar{x}^{i})   \right| \le  \mathsf{K}_4  (1+2 |t|^{\alpha})  | x - \bar{x}|^2.
	\end{equation*}
\end{lem}
\begin{lem} \label{growth_estimate_gradient_theta_neural_network}
	Let Assumption \ref{Assumption_2} hold and let $b$ be as in \ref{expression_b_drift_corrollary_lemma_appendix}. Then, one obtains that, for any $(t,\theta,x) \in [0,T] \times \mathbb{R}^{d} \times \mathbb{R}^{M}$ and $k=1, \dots M$,
	\begin{equation} \label{bound_gradient_s}
		| \nabla_{x}  b^{(k)}(t,\theta,x)|  \le 1+ 2 \mathsf{K}_3 (1+ 2 | t |^{\alpha} ).
	\end{equation}
	
\end{lem}

\subsection{Proof of the Main Result for the General Case} \label{appendix_proof_main_result_general_case}
\begin{proof}[Proof of Theorem \ref{main_theorem_general}]
	We proceed as in the proof of Theorem \ref{main_theorem_toy_example} using the splitting
	\begin{equation} \label{upper_bound_wasserstein_general}
		\begin{split}
			W_2(\mathcal{L}(Y_{K}^{\text{EM}}),\pi_{\mathsf{D}}) & \le
			W_2(\pi_{\mathsf{D}}, \mathcal{L}(Y_{t_{K}}))+W_2(\mathcal{L}(Y_{t_{K}}), \mathcal{L}(\widetilde{Y}_{t_{K}})) \\ & \quad + W_2( \mathcal{L}(\widetilde{Y}_{t_{K}}), \mathcal{L}(Y_{t_{K}}^{\text{aux}})) +W_2(\mathcal{L}(Y_{t_{K}}^{\text{aux}}), \mathcal{L}(Y_{K}^{\text{EM}})),
		\end{split}
	\end{equation}
	where the first term on the right-hand side in \ref{upper_bound_wasserstein_general} corresponds to the error made by the early stopping and the remaining terms have the same interpretation of the corresponding ones in \ref{upper_bound_wasserstein_proof}.
	
	\paragraph{Upper bound on $W_2(\pi_{\mathsf{D}}, \mathcal{L}(Y_{t_K})).$}
	For any $t \in [0,T]$, note that
	\begin{equation} \label{mean_standard_deviation_OU_inequality}
		1-m_t \leq \sigma_t,  \qquad 	\sigma_{t}^2 = 1-e^{-2  t}  \leq 2   t.
	\end{equation}
	Recall that $t_K = T- \epsilon$. Using the representation of the OU process \ref{eq:OUdistribtuion} and the inequalities  \ref{mean_standard_deviation_OU_inequality}, we have
	\begin{equation} \label{first_upper_bound_general}
		\begin{split}
			W_2(\pi_{\mathsf{D}}, \mathcal{L}(Y_{t_K})) & =		W_2(\pi_{\mathsf{D}}, \mathcal{L}(X_{T-t_K})) \\
			&\leq \sqrt{ \mathbb{E}\left[|X_0 - X_{T-t_K} |^2 \right]}
			\\ & \leq \sqrt{ 2 } \left[(1-m_{T-t_K}) \sqrt{  \mathbb{E}[|X_0  |^2]} +\sigma_{T-t_K} \sqrt{M} \right]
			\\ & \leq \sqrt{ 2 } \sigma_{T-t_K} (\sqrt{  \mathbb{E}[|X_0  |^2]} + \sqrt{M} )
			\\ & \leq 2 \sqrt{  \epsilon}  (\sqrt{  \mathbb{E}[|X_0  |^2]} + \sqrt{M} ).
		\end{split}
	\end{equation}
	
	\paragraph{Upper bound on $W_2(\mathcal{L}(Y_{t_K}), \mathcal{L}(\widetilde{Y}_{t_K})).$}
	Using It\^o's formula,  we have, for any $t \in [0,T-\epsilon]$,
	\begin{equation} \label{Ito_formula_inequality_second_bound_new_second}
		\begin{split}
			\text{d} |Y_t -  \widetilde{Y}_t|^2
			&= 2\langle Y_t -  \widetilde{Y}_t,  Y_t +2  \nabla \log p_{T-t}(Y_t) -  \widetilde{Y}_t -2  \nabla \log p_{T-t}( \widetilde{Y}_t) \rangle \ \text{d}  t\\
			& = 2  | Y_t -  \widetilde{Y}_t|^2 \ \text{d}  t +4   \langle Y_t -  \widetilde{Y}_t,  \nabla \log p_{T-t}(Y_t)   -\nabla \log p_{T-t}( \widetilde{Y}_t)\rangle \ \text{d}  t.
		\end{split}
	\end{equation}
	By integrating, taking expectations on both sides in \ref{Ito_formula_inequality_second_bound_new_second}, and using Remark \ref{remark_contraction_score} with the lower bound $\widehat{	L}_{\text{MO}}$ in the estimate \ref{contraction_score_assumption}, we have
	\begin{equation}\label{eq:backwardsdecontr_Lipschitz_new_second}
		\begin{split}
			\mathbb{E}[|Y_{t_K} - \widetilde{Y}_{t_K} |^2]  &\leq 	\mathbb{E}[|Y_0-\widetilde{Y}_0| ^2]
			+ \int_0^{t_K} 2 (1   - 2  \widehat{	L}_{\text{MO}}  )	 \ \mathbb{E}\left[ | Y_t - \widetilde{Y}_t |^2\right]\text{d}  t
			\\ &\leq 	\mathbb{E}[|Y_0- \widetilde{Y}_0 |^2]e^{    2 (1   - 2 \widehat{	L}_{\text{MO}}  )t_K	 }.
		\end{split}
	\end{equation}
	Using \ref{eq:backwardsdecontr_Lipschitz_new_second},  the representation \ref{eq:OUdistribtuion} with $Z_T \overset{\text{d}}{=} \widetilde{Y}_0$ and \ref{mean_standard_deviation_OU_inequality}, we have
	\begin{equation} \label{second_bound_difference_y_ytilde_new_second}
		\begin{split}
			\mathbb{E}[|Y_{t_K} - \widetilde{Y}_{t_K} |^2]
			& \leq		\mathbb{E}[|Y_0- \widetilde{Y}_0 |^2]e^{  2 (1   - 2 \widehat{	L}_{\text{MO}}  )t_K }	
			\\ & = 	\mathbb{E}[|m_T X_0+ ( \sigma_T - 1) \widetilde{Y}_0 |^2] \ e^{    2 (1   - 2 \widehat{	L}_{\text{MO}}  )t_K	 }
			\\ & \leq 2  \left( \mathbb{E}[|X_0|^2] + M \right) e^{     2 (1   - 2 \widehat{	L}_{\text{MO}}  )t_K	  -2T}.
		\end{split}
	\end{equation}
	Using \ref{second_bound_difference_y_ytilde_new_second}, we have
	\begin{equation} \label{final_second_bound_Wasserstein_general_new_second}
		\begin{split}
			W_2(\mathcal{L}(Y_{t_K}), \mathcal{L}(\widetilde{Y}_{t_K})) & \le \sqrt{	\mathbb{E} [ | Y_{t_K} - \widetilde{Y}_{t_K}  |^2] }
			\\ & \leq \sqrt{2}  ( 	\sqrt{\mathbb{E}[|X_0|^2]}+ \sqrt{M} ) e^{      - 2 \widehat{	L}_{\text{MO}} (T-\epsilon) -\epsilon}.
		\end{split}
	\end{equation}

	\paragraph{Upper bound on $W_2( \mathcal{L}(\widetilde{Y}_{t_K}), \mathcal{L}(Y_{t_K}^{\text{aux}})).$ }
	Using It{\^o}'s formula, we have, for $t \in [0,T-\epsilon]$,
	\begin{equation}\label{difference_aux_backward_process}
		\begin{split}
			\text{d} | \widetilde{Y}_t - Y_{t}^{\text{aux}} |^2 & = 2 \langle \widetilde{Y}_t - Y^{\text{aux}}_{t}, \widetilde{Y}_t + 2 \  \nabla \log p_{T - t}(\widetilde{Y}_t)  - Y^{\text{aux}}_{t}  -  2 \ s(T-t, \hat{\theta} , Y_t^{\text{aux}}) \rangle \  \text{d} t
			\\ & = 2 | \widetilde{Y}_t - Y^{\text{aux}}_{t}|^2 \ \text{d} t + 4 \ \langle \widetilde{Y}_t - Y^{\text{aux}}_{t},  \nabla \log p_{T - t}(\widetilde{Y}_t) -  \nabla \log p_{T - t}(Y_t^{\text{aux}})   \rangle \  \text{d} t
			\\ & \quad + 4 \ \langle \widetilde{Y}_t - Y^{\text{aux}}_{t},    \nabla \log p_{T - t}(Y_t^{\text{aux}}) -s(T-t, \hat{\theta} , Y_t^{\text{aux}})  \rangle \  \text{d} t.
		\end{split}
	\end{equation}
	By integrating and taking the expectation on both sides in \ref{difference_aux_backward_process}, and using the Remark \ref{remark_contraction_score} with the lower bound $\widehat{	L}_{\text{MO}}$ in the estimate \ref{contraction_score_assumption}, Young's inequality with $\zeta \in (0,1)$ and Assumption \ref{assumption_equivalence_global_minimiser_epsilon}, we have
	\begin{equation} \label{toward_third_error_bound_inequality}
		\begin{split}
			\mathbb{E} [  | \widetilde{Y}_{T-\epsilon} - Y_{T-\epsilon}^{\text{aux}} |^2 ] & = 2 \int_0^{T-\epsilon} 	\mathbb{E} [  | \widetilde{Y}_s - Y^{\text{aux}}_{s}|^2 ] \ \text{d} s
			\\ & \quad + 4 \int_0^{T-\epsilon} 	\mathbb{E} [   \langle \widetilde{Y}_s - Y^{\text{aux}}_{s},  \nabla \log p_{T - s}(\widetilde{Y}_s) -  \nabla \log p_{T - s}(Y_s^{\text{aux}})   \rangle ]  \  \text{d} s
			\\ & \quad + 4 \int_0^{T-\epsilon} 	\mathbb{E} [   \langle \widetilde{Y}_s - Y^{\text{aux}}_{s},    \nabla \log p_{T - s}(Y_s^{\text{aux}}) - s(T-s, \hat{\theta} , Y_s^{\text{aux}})      \rangle ]  \  \text{d} s
			\\	 & \le  \int_0^{T-\epsilon}  2(1+  \zeta  - 2 \widehat{	L}_{\text{MO}}) 	\mathbb{E} [  | \widetilde{Y}_s - Y^{\text{aux}}_{s}|^2 ] \ \text{d} s
			\\ & \quad  + 2  \zeta^{-1} \   \int_0^{T-\epsilon} 	\mathbb{E} [  | \nabla \log p_{T - s}(Y_s^{\text{aux}}) - s(T-s, \hat{\theta} , Y_s^{\text{aux}})  |^2 ]  \  \text{d} s
			\\ & \le  \int_0^{T-\epsilon} 2(1+  \zeta  - 2 \widehat{	L}_{\text{MO}})   \	\mathbb{E} [  | \widetilde{Y}_s - Y^{\text{aux}}_{s}|^2 ] \ \text{d} s +  2 \zeta^{-1} \varepsilon_{\text{SN}}
			\\ & \le 2e^{2(1+  \zeta - 2 \widehat{ L}_{\text{MO}} )(T-\epsilon) }   \zeta^{-1} \varepsilon_{\text{SN}}.
		\end{split}
	\end{equation}
	Using \ref{toward_third_error_bound_inequality} and $t_K = T- \epsilon$, we have
	\begin{equation} \label{final_third_upper_bound}
		\begin{split}
			W_2(\mathcal{L}(\widetilde{Y}_{t_K}),\mathcal{L}(Y^{\text{aux}}_{t_K}))  & \le \sqrt{	\mathbb{E} [ |\widetilde{Y}_{t_K} - Y_{t_K}^{\text{aux}} |^2] }
			\\ &  \le   \sqrt{2  \zeta^{-1}} e^{(1+  \zeta - 2 \widehat{ L}_{\text{MO}} )(T-\epsilon) }  \sqrt{   \varepsilon_{\text{SN}} }.
		\end{split}
	\end{equation}
	\paragraph{Upper bound on $W_2(\mathcal{L}(Y_{t_K}^{\text{aux}}), \mathcal{L}(Y_{K}^{\text{EM}})).$}
	The following bound is derived by modifying \citet[Lemma 4.7]{kumar2019milstein}. For the sake of presentation, let $b : [0,T] \times \mathbb{R}^d \times \mathbb{R}^M \rightarrow \mathbb{R}^M$ such that
	\begin{equation} \label{b_drift_continuous_interpolation_proof_general_case}
		b(t,\theta,x) = x+ 2 s(t,\theta,x).
	\end{equation}
	Consequently, $(\widehat{Y}_t^{\text{EM}})_{t \in [0,T]}$ can be expressed using \ref{b_drift_continuous_interpolation_proof_general_case} as
	\begin{equation} \label{Y_EM_continuous_interpolation_proof_general_case}
		\rmd \widehat{Y}_t^{\text{EM}} =  b(T-\lfloor t/\gamma \rfloor \gamma, \hat{\theta} , \widehat{Y}_{\lfloor t/\gamma \rfloor \gamma}^{\text{EM}}) \ \text{d}t + \sqrt{2 } \ \text{d}\bar{B}_t, \qquad \widehat{Y}_0^{\text{EM}} \sim \pi_{\infty} = \mathcal{N}(0,I_{M}).
	\end{equation}
	Using It{\^o}'s formula, we have, for $t \in [0,T-\epsilon]$,
	\begin{equation} \label{Ito_formula_improved_rate}
		\begin{split}
			&	\text{d}	|Y_t^{\text{aux}} - \widehat{Y}_t^{\text{EM}} |^2
			\\ &= 2 \langle Y_t^{\text{aux}} - \widehat{Y}_t^{\text{EM}} , Y_t^{\text{aux}} + 2 \ s(T-t, \hat{\theta},Y_t^{\text{aux}}  ) - \widehat{Y}^{\text{EM}}_{\lfloor t/ \gamma\rfloor \gamma } - 2 \ s(T- \lfloor t/ \gamma\rfloor \gamma , \hat{\theta}, \widehat{Y}^{\text{EM}}_{\lfloor t/ \gamma\rfloor \gamma }  ) \rangle \ 	\text{d}t
			\\ & = 2 | Y_t^{\text{aux}} - \widehat{Y}_t^{\text{EM}}|^2  \ 	\text{d}t  +  4 \langle Y_t^{\text{aux}} - \widehat{Y}_t^{\text{EM}} ,   s(T-t, \hat{\theta},Y_t^{\text{aux}}  ) -  s(T-t, \hat{\theta},\widehat{Y}_t^{\text{EM}}  ) \rangle \ 	\text{d}t
			\\ &  \quad + 4 \langle Y_t^{\text{aux}} - \widehat{Y}_t^{\text{EM}} ,   s(T-t, \hat{\theta}, \widehat{Y}^{\text{EM}}_{\lfloor t/ \gamma\rfloor \gamma } ) -  s(T- \lfloor t/ \gamma\rfloor \gamma, \hat{\theta},\widehat{Y}^{\text{EM}}_{\lfloor t/ \gamma\rfloor \gamma }  ) \rangle \ 	\text{d}t
			\\ & \quad + 2 \langle Y_t^{\text{aux}} - \widehat{Y}_t^{\text{EM}} ,   b(T-  t, \hat{\theta},\widehat{Y}_t^{\text{EM}} ) -  b(T- t, \hat{\theta},\widehat{Y}^{\text{EM}}_{\lfloor t/ \gamma\rfloor \gamma }  )  ) \rangle \ 	\text{d}t.
		\end{split}
	\end{equation}
	Integrating and taking the expectation on both sides in \ref{Ito_formula_improved_rate}, using Cauchy--Schwarz inequality, Young's inequality  with $\zeta \in (0,1)$, Assumption \ref{Assumption_2} and Remark \ref{Control_algorithm}, yield
	\begin{equation} \label{after_Ito_formula_improved_rate} 
		\begin{split}
			&	\E \left[|  Y_t^{\text{aux}} - \widehat{Y}_t^{\text{EM}} |^2\right]
			\\ & = 2 \int_0^t \E \left[ | Y_s^{\text{aux}} - \widehat{Y}_s^{\text{EM}}|^2 \right]  \ 	\text{d}s  +  4 \int_0^t \E \left[ \langle Y_s^{\text{aux}} - \widehat{Y}_s^{\text{EM}} ,   s(T-s, \hat{\theta},Y_s^{\text{aux}}  ) -  s(T-s, \hat{\theta},\widehat{Y}_s^{\text{EM}}  ) \rangle \right] \	\text{d}s
			\\ & \quad + 4 \int_0^t \E  \left[  \langle Y_s^{\text{aux}} - \widehat{Y}_s^{\text{EM}} ,   s(T-s, \hat{\theta}, \widehat{Y}^{\text{EM}}_{\lfloor s/ \gamma\rfloor \gamma } ) -  s(T- \lfloor s/ \gamma\rfloor \gamma, \hat{\theta},\widehat{Y}^{\text{EM}}_{\lfloor s/ \gamma\rfloor \gamma }  ) \rangle \right] \ 	\text{d}s
			\\ &  \quad + 2 \int_0^t \E  \left[  \langle Y_s^{\text{aux}} - \widehat{Y}_s^{\text{EM}} ,   b(T-s, \hat{\theta},\widehat{Y}_s^{\text{EM}} ) -  b(T-s, \hat{\theta},\widehat{Y}^{\text{EM}}_{\lfloor s/ \gamma\rfloor \gamma }  )  \rangle  \right] \  	\text{d}s
			\\ & \le 	 \int_0^t 2 (1+\zeta + 2  \mathsf{K}_3 (1+2 |T-s|^{\alpha})) \  \E\left[| Y_s^{\text{aux}} - \widehat{Y}_s^{\text{EM}}|^2\right]  \ \text{d}s
			\\ & \quad + 2 \zeta^{-1} \int_0^t \E \left[ | s(T- s , \hat{\theta} , \widehat{Y}^{\text{EM}}_{\lfloor s/ \gamma\rfloor \gamma } ) -  \ s(T- \lfloor s/ \gamma\rfloor \gamma  , \hat{\theta}, \widehat{Y}^{\text{EM}}_{\lfloor s/ \gamma\rfloor \gamma }  ) |^2 \right]    \ \text{d}s
			\\ & \quad + 2 \int_0^t \E  \left[  \langle Y_s^{\text{aux}} - \widehat{Y}_s^{\text{EM}} ,   b(T-s, \hat{\theta},\widehat{Y}_s^{\text{EM}} ) -  b(T- s, \hat{\theta},\widehat{Y}^{\text{EM}}_{\lfloor s/ \gamma\rfloor \gamma }  )  ) \rangle  \right] \  	\text{d}s
			\\	 & \le 	 \int_0^t 2(1+\zeta + 2 \mathsf{K}_3 (1+2 T^{\alpha})) \  \E\left[| Y_s^{\text{aux}} - \widehat{Y}_s^{\text{EM}}|^2\right]  \ \text{d}s
			\\ & \quad + 2 \zeta^{-1} \mathsf{K}_1^2 \  \E \left[(1+ 2  | \hat{\theta}|)^2\right] \int_0^t  |s - \lfloor s/ \gamma\rfloor \gamma |^{2 \alpha}   \ \text{d}s
			\\ & \quad + 2 \int_0^t \E  \left[  \langle Y_s^{\text{aux}} - \widehat{Y}_s^{\text{EM}} ,   b(T-s, \hat{\theta},\widehat{Y}_s^{\text{EM}} ) -  b(T-s, \hat{\theta},\widehat{Y}^{\text{EM}}_{\lfloor s/ \gamma\rfloor \gamma }  )  ) \rangle  \right] \  	\text{d}s
			\\ & \le 	 \int_0^t 2(1+\zeta + 2 \mathsf{K}_3 (1+2 T^{\alpha})) \  \E\left[| Y_s^{\text{aux}} - \widehat{Y}_s^{\text{EM}}|^2 \right]  \ \text{d}s + 4 \zeta^{-1} \mathsf{K}_1^2 (1+ 8(   \widetilde{\varepsilon}_{\text{AL}} +  | \theta^{*}|^2) ) \gamma^{2\alpha} t
			\\ & \quad + 2 \int_0^t \E  \left[  \langle Y_s^{\text{aux}} - \widehat{Y}_s^{\text{EM}} ,   b(T- s, \hat{\theta},\widehat{Y}_s^{\text{EM}} ) -  b(T- s, \hat{\theta},\widehat{Y}^{\text{EM}}_{\lfloor s/ \gamma\rfloor \gamma }  )   \rangle  \right] \  	\text{d}s.
		\end{split}
	\end{equation}
	We proceed estimating the third term on the right-hand side of \ref{after_Ito_formula_improved_rate}. Using \ref{Y_EM_continuous_interpolation_proof_general_case}, Young's inequality  with $\zeta \in (0,1)$,  Lemma \ref{Improved_rate_drift_growth}  and Lemma \ref{lem:distance_EM_scheme}, yields, for any $t \in [0,T-\epsilon]$
	\begin{equation*}
		\begin{split}
			&  \int_0^t \E \left[ \langle Y_s^{\text{aux}} - \widehat{Y}_s^{\text{EM}} , b(T- s , \hat{\theta} , \widehat{Y}_s^{\text{EM}} ) -  \ b(T- s , \hat{\theta}, \widehat{Y}^{\text{EM}}_{\lfloor s/ \gamma\rfloor \gamma }  ) \rangle \right]   \rmd s
			\\ & = \sum_{k=1}^M \E \Bigg[ \int_0^t \left( Y_s^{\text{aux}, (k)} - \widehat{Y}_s^{\text{EM}, (k)} \right) \Bigg( b^{(k)}(T- s , \hat{\theta} , \widehat{Y}_s^{\text{EM}} ) - b^{(k)}(T- s, \hat{\theta}, \widehat{Y}^{\text{EM}}_{\lfloor s/ \gamma\rfloor \gamma }  )
			\\ &  \qquad  \qquad \qquad \qquad \qquad  \qquad \qquad \quad  - \sum_{j=1}^M \frac{\partial b^{(k)}(T- s , \hat{\theta}, \widehat{Y}^{\text{EM}}_{\lfloor s/ \gamma\rfloor \gamma }  )}{\partial x^{j}} ( \widehat{Y}^{\text{EM} , (j)}_{s} - \widehat{Y}^{\text{EM} , (j)}_{\lfloor s/ \gamma\rfloor \gamma } ) \Bigg)  \ \rmd s \Bigg]
			\\ & \quad + \sum_{k=1}^M \E  \left[	\int_0^t  \left( Y_s^{\text{aux}, (k)} - \widehat{Y}_s^{\text{EM}, (k)} \right) \left(  \sum_{j=1}^M  \frac{\partial b^{(k)}(T- s , \hat{\theta}, \widehat{Y}^{\text{EM}}_{\lfloor s/ \gamma\rfloor \gamma }  )}{\partial x^{j}}  ( \widehat{Y}^{\text{EM} , (j)}_{s} - \widehat{Y}^{\text{EM} , (j)}_{\lfloor s/ \gamma\rfloor \gamma } ) \right)  \ \rmd s \right]
						\\ & \le \frac{\zeta}{2}	 \int_0^t \E \left[|Y_s^{\text{aux}} - \widehat{Y}_s^{\text{EM}}  |^2 \right] \ \rmd s
			\\ & \quad  + \frac{1}{2 \zeta} \sum_{k=1}^M \E \Bigg[ \int_0^t  | b^{(k)}(T- s , \hat{\theta} , \widehat{Y}_s^{\text{EM}} ) - b^{(k)}(T- s , \hat{\theta}, \widehat{Y}^{\text{EM}}_{\lfloor s/ \gamma\rfloor \gamma }  )
		\end{split}
	\end{equation*}
	\begin{equation} \label{T1_T2_T3_estimate}
		\begin{split}	
			 &  \qquad  \qquad \qquad \qquad \qquad  \qquad \qquad \qquad \  - \sum_{j=1}^M \frac{\partial b^{(k)}(T-s, \hat{\theta}, \widehat{Y}^{\text{EM}}_{\lfloor s/ \gamma\rfloor \gamma }  )}{\partial x^{j}}  \left( \widehat{Y}^{\text{EM} , (j)}_{s} - \widehat{Y}^{\text{EM} , (j)}_{\lfloor s/ \gamma\rfloor \gamma } \right)  |^2  \ \rmd s \Bigg]
			\\ & \quad + \sum_{k=1}^M \E  \Bigg[	\int_0^t  \left( Y_s^{\text{aux}, (k)} - \widehat{Y}_s^{\text{EM}, (k)} \right)
			\\ & \qquad \qquad \qquad \quad  \times \left( \sum_{j=1}^M  \frac{\partial b^{(k)}(T-s , \hat{\theta}, \widehat{Y}^{\text{EM}}_{\lfloor s/ \gamma\rfloor \gamma }  )}{\partial x^{j}}
			\int_{\lfloor s/\gamma \rfloor \gamma}^s   b^{(j)}(T- \lfloor r/\gamma \rfloor \gamma , \hat{\theta}, \widehat{Y}^{\text{EM}}_{\lfloor r/ \gamma\rfloor \gamma }  ) \ \rmd r \right)    \rmd s \Bigg]
			\\ & \quad +  \sum_{k=1}^M \E  \left[ 	\int_0^t  \left( Y_s^{\text{aux}, (k)} - \widehat{Y}_s^{\text{EM}, (k)} \right) \left( \sum_{j=1}^M \ \frac{\partial b^{(k)}(T- s , \hat{\theta}, \widehat{Y}^{\text{EM}}_{\lfloor s/ \gamma\rfloor \gamma }  )}{\partial x^{j}}     \int_{\lfloor s/\gamma \rfloor \gamma}^s  \sqrt{2} \ \rmd \bar{B}_r^{(j)} \right)   \rmd s \right]
			\\ &  \le \frac{\zeta}{2}	 \int_0^t \E\left[ |Y_s^{\text{aux}} - \widehat{Y}_s^{\text{EM}}  |^2 \right] \ \rmd s + \frac{\mathsf{K}_4^2 }{2 \zeta}      \int_0^t   (1+2 |T-s|^{\alpha})^2 \ \E \left[|\widehat{Y}^{\text{EM}}_{s} - \widehat{Y}^{\text{EM}}_{\lfloor s/ \gamma\rfloor \gamma } |^4 \right]  \ \rmd s
			\\ & \quad + \sum_{k=1}^M \E  \Bigg[	\int_0^t  \left( Y_s^{\text{aux}, (k)} - \widehat{Y}_s^{\text{EM}, (k)} \right)
			\\ & \qquad \qquad \qquad \quad \times  \left(  \sum_{j=1}^M  \frac{\partial b^{(k)}(T-s , \hat{\theta}, \widehat{Y}^{\text{EM}}_{\lfloor s/ \gamma\rfloor \gamma }  )}{\partial x^{j}}
			\int_{\lfloor s/\gamma \rfloor \gamma}^s   b^{(j)}(T- \lfloor r/\gamma \rfloor \gamma , \hat{\theta}, \widehat{Y}^{\text{EM}}_{\lfloor r/ \gamma\rfloor \gamma }  ) \ \rmd r \right) \   \rmd s \Bigg]
			\\ & \quad +  \sum_{k=1}^M \E \left[ 	\int_0^t  \left( Y_s^{\text{aux}, (k)} - \widehat{Y}_s^{\text{EM}, (k)} \right) \left(  \sum_{j=1}^M \ \frac{\partial b^{(k)}(T- s , \hat{\theta}, \widehat{Y}^{\text{EM}}_{\lfloor s/ \gamma\rfloor \gamma }  )}{\partial x^{j}}     \int_{\lfloor s/\gamma \rfloor \gamma}^s  \sqrt{2} \ \rmd \bar{B}^{(j)}_r \right)  \ \rmd s \right]
			\\		& \le \frac{\zeta}{2}	 \int_0^t \E \left[|Y_s^{\text{aux}} - \widehat{Y}_s^{\text{EM}}  |^2 \right] \ \rmd s +  \gamma^2 \frac{ \mathsf{K}_4^2}{\zeta} t(1+4 T^{2\alpha})  C_{\mathsf{EMose},4}
			\\ & \quad + \sum_{k=1}^M \E  \Bigg[	\int_0^t  \left( Y_s^{\text{aux}, (k)} - \widehat{Y}_s^{\text{EM}, (k)} \right) 
			\\ &  \qquad \qquad \qquad \quad \times \left(  \sum_{j=1}^M  \frac{\partial b^{(k)}(T-s , \hat{\theta}, \widehat{Y}^{\text{EM}}_{\lfloor s/ \gamma\rfloor \gamma }  )}{\partial x^{j}}
			\int_{\lfloor s/\gamma \rfloor \gamma}^s   b^{(j)}(T- \lfloor r/\gamma \rfloor \gamma , \hat{\theta}, \widehat{Y}^{\text{EM}}_{\lfloor r/ \gamma\rfloor \gamma }  ) \ \rmd r \right)    \rmd s \Bigg]
			\\ & \quad +  \sum_{k=1}^M \E  \left[	\int_0^t  \left( Y_s^{\text{aux}, (k)} - \widehat{Y}_s^{\text{EM}, (k)} \right) \left(  \sum_{j=1}^M \ \frac{\partial b^{(k)}(T- s , \hat{\theta}, \widehat{Y}^{\text{EM}}_{\lfloor s/ \gamma\rfloor \gamma }  )}{\partial x^{j}}     \int_{\lfloor s/\gamma \rfloor \gamma}^s  \sqrt{2} \ \rmd \bar{B}_r^{(j)} \right)  \rmd s \right].
		\end{split}
	\end{equation}
	We proceed estimating the third and fourth term on the right-hand side of \ref{T1_T2_T3_estimate}, separately.

	The third term is estimated using Young's inequality  with $\zeta \in (0,1)$, Lemma \ref{growth_estimate_gradient_theta_neural_network}, Remark  \ref{remark_growth_estimate_neural_network}, Remark \ref{Control_algorithm} and Lemma \ref{lem:EMproc2ndbdgeneral}, and for any $t \in [0,T-\epsilon]$, we obtain that
	\begin{equation*}
		\begin{split}
			& \sum_{k=1}^M \E \Bigg[ 	\int_0^t  \left( Y_s^{\text{aux}, (k)} - \widehat{Y}_s^{\text{EM}, (k)} \right) \\ &  \qquad \qquad \qquad   \times \left(  \sum_{j=1}^M  \frac{\partial b^{(k)}(T-s , \hat{\theta}, \widehat{Y}^{\text{EM}}_{\lfloor s/ \gamma\rfloor \gamma }  )}{\partial x^{j}}
			\int_{\lfloor s/\gamma \rfloor \gamma}^s b^{(j)}(T- \lfloor r/\gamma \rfloor \gamma , \hat{\theta}, \widehat{Y}^{\text{EM}}_{\lfloor r/ \gamma\rfloor \gamma }  )  \rmd r \right)   \rmd s \Bigg]
			\\ & \le \frac{\zeta}{2}     	\int_0^t \E \left[ | Y_s^{\text{aux}} - \widehat{Y}_s^{\text{EM}} |^2 \right] \ \rmd s
			\\ & \quad +  \frac{2 \gamma}{ \zeta }  M (1 +8 \mathsf{K}^2_3 (1+ 4  T^{2\alpha} ) )   \  \E \left[	\int_0^t  \int_{\lfloor s/\gamma \rfloor \gamma}^s  \sum_{j =1}^M | b^{(j)}(T- \lfloor r/\gamma \rfloor \gamma , \hat{\theta}, \widehat{Y}^{\text{EM}}_{\lfloor r/ \gamma\rfloor \gamma }  )|^2   \rmd r  \ \rmd s \right]
					\end{split}
		\end{equation*}
		\begin{equation} \label{estimate_T2}
		\begin{split}
			 & \le \frac{\zeta}{2}     	\int_0^t \E \left[ | Y_s^{\text{aux}} - \widehat{Y}_s^{\text{EM}} |^2 \right] \  \rmd s
			\\	 & \quad 	+  \frac{4 \gamma}{ \zeta}      	M (1 +8 \mathsf{K}^2_3 (1+ 4  T^{2\alpha} ) ) 	\\ &\qquad \qquad \qquad   \times  \int_0^t \int_{\lfloor s/\gamma \rfloor \gamma}^s
			\left[ (1+ 16 \mathsf{K}^2_{\text{Total}} (1+T^{2 \alpha}))   \E[ |  \widehat{Y}^{\text{EM}}_{\lfloor r/ \gamma\rfloor \gamma } |^2] + 32 \mathsf{K}^2_{\text{Total}}  (1+T^{2 \alpha})(1+ \E [|\hat{\theta}|^2]) \right]	 \rmd r  \ \rmd s
			\\	&  \le \frac{\zeta}{2}     	\int_0^t \E \left[ | Y_s^{\text{aux}} - \widehat{Y}_s^{\text{EM}} |^2 \right]   \ \rmd s
			\\ & \quad  + \frac{\gamma^2}{ \zeta}     4 M (1 +8 \mathsf{K}^2_3 (1+ 4  T^{2\alpha} ) )
			\\ & \qquad \times t \left[ (1+ 16 \mathsf{K}^2_{\text{Total}} (1+T^{2 \alpha}))  \sup_{0 \le s \le t}  \E[ |  \widehat{Y}^{\text{EM}}_{s} |^2] + 32 \mathsf{K}^2_{\text{Total}}  (1+T^{2 \alpha})(1+  2  \widetilde{\varepsilon}_{\text{AL}} + 2 | \theta^{*}|^2) \right]
			\\ &  \le \frac{\zeta}{2}     	\int_0^t \E \left[ | Y_s^{\text{aux}} - \widehat{Y}_s^{\text{EM}} |^2 \right]  \ \rmd s
			\\ & \quad + t \frac{\gamma^2}{ \zeta}    4 M(1 +8 \mathsf{K}^2_3 (1+ 4  T^{2\alpha} ) )
			\\ & \qquad \times \left[ (1+ 16 \mathsf{K}^2_{\text{Total}} (1+T^{2 \alpha})) C_{\mathsf{EM},2}(T) + 32 \mathsf{K}^2_{\text{Total}}  (1+T^{2 \alpha})(1+  2  \widetilde{\varepsilon}_{\text{AL}} + 2 | \theta^{*}|^2) \right].
		\end{split}
	\end{equation}
	We proceed with the estimate of the fourth term on the right-hand side of \ref{T1_T2_T3_estimate}. For $k=1, \dots, M$, we note
	\begin{equation} \label{split_T3}
		\begin{split}
			Y_s^{\text{aux}, (k)} - \widehat{Y}_s^{\text{EM}, (k)} & = Y_{\lfloor s/ \gamma\rfloor \gamma}^{\text{aux}, (k)} - \widehat{Y}_{\lfloor s/ \gamma\rfloor \gamma}^{\text{EM}, (k)}
			\\ & + \int_{\lfloor s/ \gamma\rfloor \gamma}^s (b^{(k)}(T-r , \hat{\theta}, Y_{r}^{\text{aux}}) - b^{(k)}(T-r , \hat{\theta}, \widehat{Y}_{r}^{\text{EM}}) \  \rmd r
			\\ & + \int_{\lfloor s/ \gamma\rfloor \gamma}^s (b^{(k)}(T-r , \hat{\theta}, \widehat{Y}_{r}^{\text{EM}}) - b^{(k)}(T-r , \hat{\theta}, \widehat{Y}_{\lfloor r/ \gamma\rfloor \gamma}^{\text{EM}})) \  \rmd r
			\\ & + \int_{\lfloor s/ \gamma\rfloor \gamma}^s 2 (s^{(k)}(T-r , \hat{\theta}, \widehat{Y}_{\lfloor r/ \gamma\rfloor \gamma}^{\text{EM}}) - s^{(k)}(T-\lfloor r/ \gamma\rfloor \gamma , \hat{\theta}, \widehat{Y}_{\lfloor r/ \gamma\rfloor \gamma}^{\text{EM}})) \  \rmd r .
		\end{split}
	\end{equation}
	By using \ref{split_T3},  Cauchy--Schwarz inequality, Young's inequality, Assumption \ref{Assumption_2} and Lemma \ref{growth_estimate_gradient_theta_neural_network}, we have
	\begin{equation*}
		\begin{split}
			&  \sum_{k=1}^M \E  \left[	\int_0^t  \left( Y_s^{\text{aux}, (k)} - \widehat{Y}_s^{\text{EM}, (k)} \right) \left(  \sum_{j=1}^M \ \frac{\partial b^{(k)}(T- s , \hat{\theta}, \widehat{Y}^{\text{EM}}_{\lfloor s/ \gamma\rfloor \gamma }  )}{\partial x^{j}}     \int_{\lfloor s/\gamma \rfloor \gamma}^s  \sqrt{2} \ \rmd \bar{B}^{(j)}_r \right)   \rmd s \right]
			\\ & =  \sum_{k=1}^M \E   \left[	\int_0^t  \left( Y_{\lfloor s/ \gamma\rfloor \gamma}^{\text{aux}, (k)} - \widehat{Y}_{\lfloor s/ \gamma\rfloor \gamma}^{\text{EM}, (k)} \right) \left(  \sum_{j=1}^M \ \frac{\partial b^{(k)}(T- s , \hat{\theta}, \widehat{Y}^{\text{EM}}_{\lfloor s/ \gamma\rfloor \gamma }  )}{\partial x^{j}}     \int_{\lfloor s/\gamma \rfloor \gamma}^s  \sqrt{2} \ \rmd \bar{B}^{(j)}_r \right)  \rmd s \right]
			\\ & \qquad +  \sum_{k=1}^M \E   \Bigg[	\int_0^t \left(   \int_{\lfloor s/ \gamma\rfloor \gamma}^s (b^{(k)}(T-r , \hat{\theta}, Y_{r}^{\text{aux}}) - b^{(k)}(T-r , \hat{\theta}, \widehat{Y}_{r}^{\text{EM}})) \  \rmd r \right)
			\\ &  \qquad \qquad \qquad \qquad \times  \left( \sum_{j=1}^M \ \frac{\partial b^{(k)}(T- s , \hat{\theta}, \widehat{Y}^{\text{EM}}_{\lfloor s/ \gamma\rfloor \gamma }  )}{\partial x^{j}}     \int_{\lfloor s/\gamma \rfloor \gamma}^s  \sqrt{2} \ \rmd \bar{B}^{(j)}_r \right)  \ \rmd s  \Bigg]
										\end{split}	
					\end{equation*}
				\begin{equation*}
			\begin{split}
			& \qquad +  \sum_{k=1}^M \E  \Bigg[		\int_0^t  \left(  \int_{\lfloor s/ \gamma\rfloor \gamma}^s (b^{(k)}(T-r , \hat{\theta}, \widehat{Y}_{r}^{\text{EM}}) - b^{(k)}(T-r , \hat{\theta}, \widehat{Y}_{\lfloor r/ \gamma\rfloor \gamma}^{\text{EM}})) \  \rmd r \right)
			\\ &  \qquad \qquad \qquad \qquad \times  \left( \sum_{j=1}^M \ \frac{\partial b^{(k)}(T- s , \hat{\theta}, \widehat{Y}^{\text{EM}}_{\lfloor s/ \gamma\rfloor \gamma }  )}{\partial x^{j}}     \int_{\lfloor s/\gamma \rfloor \gamma}^s  \sqrt{2} \ \rmd \bar{B}^{(j)}_r  \right)  \ \rmd s \Bigg]
			\\ & \qquad +  \sum_{k=1}^M \E  \Bigg[	\int_0^t  \left(  \int_{\lfloor s/ \gamma\rfloor \gamma}^s 2 (s^{(k)}(T-r , \hat{\theta}, \widehat{Y}_{\lfloor r/ \gamma\rfloor \gamma}^{\text{EM}}) - s^{(k)}(T-\lfloor r/ \gamma\rfloor \gamma , \hat{\theta}, \widehat{Y}_{\lfloor r/ \gamma\rfloor \gamma}^{\text{EM}})) \  \rmd r \right)
			\\ &  \qquad \qquad \qquad \qquad \times \left( \sum_{j=1}^M \ \frac{\partial b^{(k)}(T- s , \hat{\theta}, \widehat{Y}^{\text{EM}}_{\lfloor s/ \gamma\rfloor \gamma }  )}{\partial x^{j}}     \int_{\lfloor s/\gamma \rfloor \gamma}^s  \sqrt{2} \ \rmd \bar{B}^{(j)}_r \right)  \ \rmd s \Bigg]
			\\	& \le  \sum_{k=1}^M \E  \Bigg[	\int_0^t \int_{\lfloor s/ \gamma\rfloor \gamma}^s \gamma^{- 1/2}  |  b^{(k)}(T- r , \hat{\theta}, Y_{r}^{\text{aux}}) - b^{(k)}(T- r , \hat{\theta}, \widehat{Y}_{ r}^{\text{EM}})  |   \rmd r
			\\ & \qquad \qquad \qquad \qquad  \times \gamma^{ 1/2}   \left | \int_{\lfloor s/\gamma \rfloor \gamma}^s  \sum_{j=1}^M   \frac{\partial b^{(k)}(T- s , \hat{\theta}, \widehat{Y}^{\text{EM}}_{\lfloor s/ \gamma\rfloor \gamma }  )}{\partial x^{j}}    \sqrt{2} \ \rmd \bar{B}_r^{(j)}      \right |    \rmd s \Bigg]
			\\		& \qquad +  \sum_{k=1}^M \E  \Bigg[	\int_0^t \int_{\lfloor s/ \gamma\rfloor \gamma}^s   |  b^{(k)}(T- r , \hat{\theta}, \widehat{Y}_{ r}^{\text{EM}}) -  b^{(k)}(T- r , \hat{\theta},\widehat{Y}_{\lfloor r/ \gamma\rfloor \gamma}^{\text{EM}})   |   \rmd r
			\\ & \qquad \qquad \qquad \quad   \times   \left | \int_{\lfloor s/\gamma \rfloor \gamma}^s  \sum_{j=1}^M   \frac{\partial b^{(k)}(T- s , \hat{\theta}, \widehat{Y}^{\text{EM}}_{\lfloor s/ \gamma\rfloor \gamma }  )}{\partial x^{j}}    \sqrt{2} \ \rmd \bar{B}_r^{(j)}      \right |    \rmd s \Bigg]
			\\	 & \qquad +  \sum_{k=1}^M \E  \Bigg[	\int_0^t \int_{\lfloor s/ \gamma\rfloor \gamma}^s 2   |  s^{(k)}(T- r , \hat{\theta},\widehat{Y}_{\lfloor r/ \gamma\rfloor \gamma}^{\text{EM}}) -  s^{(k)}(T- \lfloor r/ \gamma\rfloor \gamma , \hat{\theta}, \widehat{Y}_{\lfloor r/ \gamma\rfloor \gamma}^{\text{EM}})   |    \rmd r
			\\ & \qquad \qquad \qquad \quad \times  \left | \int_{\lfloor s/\gamma \rfloor \gamma}^s  \sum_{j=1}^M   \frac{\partial b^{(k)}(T- s , \hat{\theta}, \widehat{Y}^{\text{EM}}_{\lfloor s/ \gamma\rfloor \gamma }  )}{\partial x^{j}}    \sqrt{2} \ \rmd \bar{B}_r^{(j)}      \right |    \rmd s \Bigg]
			\\ & \le  \frac{\gamma^{- 1}}{2}  \sum_{k=1}^M \E  \left[	\int_0^t \int_{\lfloor s/ \gamma\rfloor \gamma}^s   |  b^{(k)}(T- r , \hat{\theta}, Y_{r}^{\text{aux}}) - b^{(k)}(T- r , \hat{\theta}, \widehat{Y}_{ r}^{\text{EM}})  |^2   \rmd r \   \rmd s \right]
			\\ & \quad + \frac{\gamma}{2}  \sum_{k=1}^M  \int_0^t \E \left[ \left | \int_{\lfloor s/\gamma \rfloor \gamma}^s  \sum_{j=1}^M   \frac{\partial b^{(k)}(T- s , \hat{\theta}, \widehat{Y}^{\text{EM}}_{\lfloor s/ \gamma\rfloor \gamma }  )}{\partial x^{j}}    \sqrt{2} \ \rmd \bar{B}_r^{(j)}      \right |^2    \rmd s \right]
			\\ & \quad +   \sum_{k=1}^M 	\int_0^t \left[ \E  \left(   \int_{\lfloor s/ \gamma\rfloor \gamma}^s  | b^{(k)}(T- r , \hat{\theta}, \widehat{Y}_{ r}^{\text{EM}}) -  b^{(k)}(T-  r, \hat{\theta},\widehat{Y}_{\lfloor r/ \gamma\rfloor \gamma}^{\text{EM}})  |  \rmd r  \right)^2 \right]^{1/2}
			\\ 	& \qquad \qquad \quad \times  \left[ \E \left | \int_{\lfloor s/\gamma \rfloor \gamma}^s  \sum_{j=1}^M   \frac{\partial b^{(k)}(T- s , \hat{\theta}, \widehat{Y}^{\text{EM}}_{\lfloor s/ \gamma\rfloor \gamma }  )}{\partial x^{j}}    \sqrt{2} \ \rmd \bar{B}_r^{(j)}      \right |^2 \right]^{1/2}    \rmd s
			\\ & \quad +  \sum_{k=1}^M  	\int_0^t \left[ \E  \left(   \int_{\lfloor s/ \gamma\rfloor \gamma}^s  2 | s^{(k)}(T- r , \hat{\theta}, \widehat{Y}^{\text{EM}}_{\lfloor r/ \gamma\rfloor \gamma }  ) -  s^{(k)}(T-  \lfloor r/ \gamma\rfloor \gamma, \hat{\theta},\widehat{Y}_{\lfloor r/ \gamma\rfloor \gamma}^{\text{EM}})  | \ \rmd r  \right)^2 \right]^{1/2}
			\\ 	& \qquad \qquad  \quad \times  \left[ \E \left | \int_{\lfloor s/\gamma \rfloor \gamma}^s  \sum_{j=1}^M   \frac{\partial b^{(k)}(T- s , \hat{\theta}, \widehat{Y}^{\text{EM}}_{\lfloor s/ \gamma\rfloor \gamma }  )}{\partial x^{j}}    \sqrt{2} \ \rmd \bar{B}_r^{(j)}      \right |^2 \right]^{1/2}    \rmd s
										\end{split}	
					\end{equation*}
				\begin{equation}   \label{estimate_T3}
				\begin{split}
				& 	 \le   \E \left[ 	\int_0^t \int_{\lfloor s/ \gamma\rfloor \gamma}^s \gamma^{- 1}  (1+8 \mathsf{K}_3^2 (1+ 4 T^{2\alpha}))  |  Y_{r}^{\text{aux}} - \widehat{Y}_{ r}^{\text{EM}}  |^2   \rmd r \   \rmd s \right]
			\\ &  \quad + 2 \gamma^2 \sum_{k=1}^M \E \left[  \int_0^t    \sum_{j=1}^M \left |   \frac{\partial b^{(k)}(T- s , \hat{\theta}, \widehat{Y}^{\text{EM}}_{\lfloor s/ \gamma\rfloor \gamma }  )}{\partial x^{j}}      \right |^2     \rmd s \right]
			\\ & \quad + \gamma^{1/2} 2  (1+ 8 \mathsf{K}_3^2(1+ 4T^{2 \alpha}))^{1/2} 
			\\ & \qquad \times \sum_{k=1}^M  	\int_0^t \left[      \int_{\lfloor s/ \gamma\rfloor \gamma}^s \E[  |  \widehat{Y}_{ r}^{\text{EM}} -  \widehat{Y}_{\lfloor r/ \gamma\rfloor \gamma}^{\text{EM}}   |^2 ] \   \rmd r \right]^{1/2}  \left[ \E  \int_{\lfloor s/\gamma \rfloor \gamma}^s  \left|  \sum_{j=1}^M   \frac{\partial b^{(k)}(T- s , \hat{\theta}, \widehat{Y}^{\text{EM}}_{\lfloor s/ \gamma\rfloor \gamma }  )}{\partial x^{j}}    \right|^2    \rmd r       \right]^{1/2}    \rmd s	
			\\ & \quad +   \gamma^{1+ \alpha} 4  \mathsf{K}_1  (1+ 8  \widetilde{\varepsilon}_{\text{AL}} + 8 | \theta^{*}|^2)^{1/2}   \sum_{k=1}^M  	\int_0^t  \left[ \E  \int_{\lfloor s/\gamma \rfloor \gamma}^s \left |  \sum_{j=1}^M   \frac{\partial b^{(k)}(T- s , \hat{\theta}, \widehat{Y}^{\text{EM}}_{\lfloor s/ \gamma\rfloor \gamma }  )}{\partial x^{j}}   \right |^2    \rmd r       \right]^{1/2}    \rmd s	
			\\ & 	 \le    (1+8 \mathsf{K}_3^2 (1+ 4 T^{2\alpha})) 	\int_0^t   \sup_{0 \le r \le s}  \E \left[  |  Y_{r}^{\text{aux}} - \widehat{Y}_{ r}^{\text{EM}}  |^2 \right]        \rmd s
			+  4 \gamma^2 t  M (1 +8 \mathsf{K}^2_3 (1+ 4  T^{2\alpha} ) )
			\\ & \quad  +  \gamma^{3/2} 2 (1+ 8 \mathsf{K}_3^2(1+ 4T^{2 \alpha}))^{1/2} C^{1/2}_{\mathsf{EMose},2}
			\sum_{k=1}^M  	\int_0^t        \left[ \E  \int_{\lfloor s/\gamma \rfloor \gamma}^s \left |  \sum_{j=1}^M   \frac{\partial b^{(k)}(T- s , \hat{\theta}, \widehat{Y}^{\text{EM}}_{\lfloor s/ \gamma\rfloor \gamma }  )}{\partial x^{j}}   \right |^2   \rmd r       \right]^{1/2}    \rmd s	
			\\ & \quad +   \gamma^{1+ \alpha} 4  \mathsf{K}_1  (1+ 8  \widetilde{\varepsilon}_{\text{AL}} + 8 | \theta^{*}|^2)^{1/2}   \sum_{k=1}^M  	\int_0^t  \left[ \E  \int_{\lfloor s/\gamma \rfloor \gamma}^s \left |  \sum_{j=1}^M   \frac{\partial b^{(k)}(T- s , \hat{\theta}, \widehat{Y}^{\text{EM}}_{\lfloor s/ \gamma\rfloor \gamma }  )}{\partial x^{j}}   \right |^2    \rmd r       \right]^{1/2}    \rmd s	
		\\	& 	 \le     (1+8 \mathsf{K}_3^2 (1+ 4 T^{2\alpha})) 	\int_0^t   \sup_{0 \le r \le s}  \E \left[  |  Y_{r}^{\text{aux}} - \widehat{Y}_{ r}^{\text{EM}}  |^2 \right]   \   \rmd s
			+  4 \gamma^2 t  M (1 +8 \mathsf{K}^2_3 (1+ 4  T^{2\alpha} ) )
			\\ & \quad +  [\gamma^{2}  2   (1+ 8 \mathsf{K}_3^2(1+ 4T^{2 \alpha}))^{1/2} C^{1/2}_{\mathsf{EMose},2} + \gamma^{3/2+ \alpha}  4 \mathsf{K}_1       (1+ 8  \widetilde{\varepsilon}_{\text{AL}} + 8 | \theta^{*}|^2)^{1/2}]
			\\ & \qquad \times [ t M \sqrt{2} (  1+ 8  \mathsf{K}_3^2 (1+ 4  T^{2\alpha} )            )^{1/2}].
		\end{split}	
	\end{equation}	
	Using \ref{T1_T2_T3_estimate}, \ref{estimate_T2} and \ref{estimate_T3} in \ref{after_Ito_formula_improved_rate}, we have
	\begin{equation*}
		\begin{split}
			&	\E \left[|  Y_t^{\text{aux}} - \widehat{Y}_t^{\text{EM}} |^2 \right]
			\\ & \le \int_0^t 4(1+  \zeta +  \mathsf{K}_3  (1+2 T^{\alpha} + 4  \mathsf{K}_3  (1+ 4T^{2 \alpha}) ))  \sup_{0 \le r \le s}  \E\left[| Y_r^{\text{aux}} - \widehat{Y}_r^{\text{EM}}|^2 \right]  \ \text{d}s
			\\ & \quad + 8 \gamma^2 t  \zeta^{-1}   M  (1 + 8 \mathsf{K}^2_3 (1+ 4  T^{2\alpha} ) )
			\\ & \qquad  \times  \left[ (1+ 16 \mathsf{K}^2_{\text{Total}} (1+T^{2 \alpha})) C_{\mathsf{EM},2}(T) + 32 \mathsf{K}^2_{\text{Total}}  (1+T^{2 \alpha})(1+  2  \widetilde{\varepsilon}_{\text{AL}} + 2 | \theta^{*}|^2) \right]
			\\ & \quad + 2 \gamma^2 t  \mathsf{K}_4^2 \zeta^{-1} (1+4 T^{2\alpha})  C_{\mathsf{EMose},4}
			+  8 \gamma^2 t M  (1 +8 \mathsf{K}^2_3 (1+ 4  T^{2\alpha} ) )
			\\ & \quad 	+  \gamma^{2\alpha} t 4 \zeta^{-1} \mathsf{K}_1^2 (1+ 8(   \widetilde{\varepsilon}_{\text{AL}} +  | \theta^{*}|^2) )
			\\			 &  \quad + 4  [\gamma^{2}     (1+ 8 \mathsf{K}_3^2(1+ 4T^{2 \alpha}))^{1/2} C^{1/2}_{\mathsf{EMose},2} + \gamma^{3/2+ \alpha}  2 \mathsf{K}_1       (1+ 8  \widetilde{\varepsilon}_{\text{AL}} + 8 | \theta^{*}|^2)^{1/2}]
			\\ & \qquad \times  [ t M \sqrt{2} (  1+ 8  \mathsf{K}_3^2 (1+ 4  T^{2\alpha} )            )^{1/2}].
		\end{split}
	\end{equation*}
	Thus,
	\begin{equation*} 
		\begin{split}
			& \sup_{0 \le s \le t}	\E \left[|  Y_s^{\text{aux}} - \widehat{Y}_s^{\text{EM}} |^2 \right]
			\\ & \le \int_0^t 4(1+  \zeta +  \mathsf{K}_3  (1+2 T^{\alpha} + 4  \mathsf{K}_3  (1+ 4T^{2 \alpha}) ))  \sup_{0 \le r \le s}  \E \left[| Y_r^{\text{aux}} - \widehat{Y}_r^{\text{EM}}|^2 \right]  \ \text{d}s
				\end{split}
	\end{equation*}	
	\begin{equation} \label{after_Ito_formula_improved_rate_final_estimates}
	\begin{split}
			\\ & \quad + \gamma^{2\alpha} t 2 \Bigg(    \mathsf{K}_4^2 \zeta^{-1} (1+4 T^{2\alpha})  C_{\mathsf{EMose},4}
			+  4  M (1 +8 \mathsf{K}^2_3 (1+ 4  T^{2\alpha} ) )
			+    2 \zeta^{-1} \mathsf{K}_1^2 (1+ 8(   \widetilde{\varepsilon}_{\text{AL}} +  | \theta^{*}|^2) )
			\\ & \qquad \qquad  \quad + 4   \zeta^{-1}   M  (1 +8 \mathsf{K}^2_3 (1+ 4  T^{2\alpha} ) )
			\\ & \qquad \qquad  \qquad  \times [ (1+ 16 \mathsf{K}^2_{\text{Total}} (1+T^{2 \alpha})) C_{\mathsf{EM},2}(T) + 32 \mathsf{K}^2_{\text{Total}}  (1+T^{2 \alpha})(1+  2  \widetilde{\varepsilon}_{\text{AL}} + 2 | \theta^{*}|^2) ]
			\\ & \qquad \qquad  \quad   + 2  [    (1+ 8 \mathsf{K}_3^2(1+ 4T^{2 \alpha}))^{1/2} C^{1/2}_{\mathsf{EMose},2} +   2 \mathsf{K}_1       (1+ 8  \widetilde{\varepsilon}_{\text{AL}} + 8 | \theta^{*}|^2)^{1/2}]
			\\ & \qquad \qquad  \qquad  \times   [  M \sqrt{2} (  1+ 8  \mathsf{K}_3^2 (1+ 4  T^{2\alpha} )            )^{1/2}] \Bigg)
			\\ &  \le 2 e^{ 4(1+  \zeta +  \mathsf{K}_3  (1+2 T^{\alpha} + 4  \mathsf{K}_3  (1+ 4T^{2 \alpha}) ))t } \gamma^{2\alpha} t
			\\	& \times  \Bigg(    \mathsf{K}_4^2 \zeta^{-1} (1+4 T^{2\alpha})  C_{\mathsf{EMose},4}  + 4  M (1 +8 \mathsf{K}^2_3 (1+ 4  T^{2\alpha} ) ) +    2 \zeta^{-1} \mathsf{K}_1^2 (1+ 8(   \widetilde{\varepsilon}_{\text{AL}} +  | \theta^{*}|^2) )
			\\ & \qquad + 4   \zeta^{-1}  M  (1 +8 \mathsf{K}^2_3 (1+ 4  T^{2\alpha} ) )
			\\ & \qquad \quad \times [ (1+ 16 \mathsf{K}^2_{\text{Total}} (1+T^{2 \alpha})) C_{\mathsf{EM},2}(T) + 32 \mathsf{K}^2_{\text{Total}}  (1+T^{2 \alpha})(1+  2  \widetilde{\varepsilon}_{\text{AL}} + 2 | \theta^{*}|^2) ]
			\\ &  \qquad + 2  [    (1+ 8 \mathsf{K}_3^2(1+ 4T^{2 \alpha}))^{1/2} C^{1/2}_{\mathsf{EMose},2} +   2 \mathsf{K}_1       (1+ 8  \widetilde{\varepsilon}_{\text{AL}} + 8 | \theta^{*}|^2)^{1/2}]
			\\ & \qquad \quad  \times   [  M \sqrt{2} (  1+ 8  \mathsf{K}_3^2 (1+ 4  T^{2\alpha} )            )^{1/2}] \Bigg).
		\end{split}
	\end{equation}
	Using \ref{after_Ito_formula_improved_rate_final_estimates} and $t_K=T-\epsilon$, we have
	\begin{equation} \label{fourth_upper_bound_general_improved_rate_convergence} 
		\begin{split}
			& W_2(\mathcal{L}(Y_{t_K}^{\text{aux}}), \mathcal{L}(Y_{K}^{\text{EM}}))
			\\ & \le \sqrt{	\mathbb{E}\left[ |Y_{t_K}^{\text{aux}} -\widehat{Y}_{t_K}^{\text{EM}}|^2 \right]}
			\\ & \le  \sqrt{2} e^{ 2(1+  \zeta +  \mathsf{K}_3  (1+2 T^{\alpha} + 4  \mathsf{K}_3  (1+ 4T^{2 \alpha}) )) (T-\epsilon)} \gamma^{\alpha} \sqrt{T-\epsilon}
			\\ & \quad \times    \Bigg(    \mathsf{K}_4^2 \zeta^{-1} (1+4 T^{2\alpha})  C_{\mathsf{EMose},4}  +  4  M (1 +8 \mathsf{K}^2_3 (1+ 4  T^{2\alpha} ) )  +    2 \zeta^{-1} \mathsf{K}_1^2 (1+ 8(   \widetilde{\varepsilon}_{\text{AL}} +  | \theta^{*}|^2) )
			\\ & \qquad \quad +4  \zeta^{-1}  M   (1 +8 \mathsf{K}^2_3 (1+ 4  T^{2\alpha} ) )
			\\ & \qquad \qquad   \times [ (1+ 16 \mathsf{K}^2_{\text{Total}} (1+T^{2 \alpha})) C_{\mathsf{EM},2}(T) + 32 \mathsf{K}^2_{\text{Total}}  (1+T^{2 \alpha})(1+  2  \widetilde{\varepsilon}_{\text{AL}} + 2 | \theta^{*}|^2) ]
		\\	& \qquad \quad   + 2  [    (1+ 8 \mathsf{K}_3^2(1+ 4T^{2 \alpha}))^{1/2} C^{1/2}_{\mathsf{EMose},2} +   2 \mathsf{K}_1       (1+ 8  \widetilde{\varepsilon}_{\text{AL}} + 8 | \theta^{*}|^2)^{1/2}]
			\\ & \qquad \qquad  \quad  \times   [  M \sqrt{2} (  1+ 8  \mathsf{K}_3^2 (1+ 4  T^{2\alpha} )            )^{1/2}] \Bigg)^{1/2}.
		\end{split}
	\end{equation}

	\paragraph{Final upper bound on $W_2(\mathcal{L}(Y_K^{\text{EM}}),\pi_{\mathsf{D}}).$} Substituting \ref{first_upper_bound_general}, \ref{final_second_bound_Wasserstein_general_new_second}, \ref{final_third_upper_bound}, and \ref{fourth_upper_bound_general_improved_rate_convergence}  into \ref{upper_bound_wasserstein_general}, we have
	\begin{equation*} 
		\begin{split}
			&	W_2(\mathcal{L}(Y_K^{\text{EM}}),\pi_{\mathsf{D}}) \\ & \le  (\sqrt{  \mathbb{E}[|X_0  |^2]} + \sqrt{M} ) 2 \sqrt{  \epsilon}
			\\ & \quad +  \sqrt{2}  ( 	\sqrt{\mathbb{E}[|X_0|^2]}+ \sqrt{M} ) e^{    - 2 \widehat{	L}_{\text{MO}} (T-\epsilon) -\epsilon}
			\\ & \quad + \sqrt{2  \zeta^{-1}} e^{(1+  \zeta - 2 \widehat{ L}_{\text{MO}} )(T-\epsilon)}  \sqrt{   \varepsilon_{\text{SN}} }
			\\ & \quad + \sqrt{2} e^{ 2(1+  \zeta +  \mathsf{K}_3  (1+2 T^{\alpha} + 4  \mathsf{K}_3  (1+ 4T^{2 \alpha}) )) (T-\epsilon) } \gamma^{\alpha} \sqrt{T-\epsilon}
					\end{split}
		\end{equation*}
	\begin{equation} \label{final_bound_first_inequality}
\begin{split}
			 & \quad \  \times    \Bigg(    \mathsf{K}_4^2 \zeta^{-1} (1+4 T^{2\alpha})  C_{\mathsf{EMose},4}  +  4  M (1 +8 \mathsf{K}^2_3 (1+ 4  T^{2\alpha} ) )+    2 \zeta^{-1} \mathsf{K}_1^2 (1+ 8(   \widetilde{\varepsilon}_{\text{AL}} +  | \theta^{*}|^2) )
			\\ & \qquad \quad + 4    \zeta^{-1}  M   (1 +8 \mathsf{K}^2_3 (1+ 4  T^{2\alpha} ) )
			\\ &  \qquad \quad  \quad \times [ (1+ 16 \mathsf{K}^2_{\text{Total}} (1+T^{2 \alpha})) C_{\mathsf{EM},2}(T) + 32 \mathsf{K}^2_{\text{Total}}  (1+T^{2 \alpha})(1+  2  \widetilde{\varepsilon}_{\text{AL}} + 2 | \theta^{*}|^2) ]
		\\	&  \qquad \quad   + 2  [    (1+ 8 \mathsf{K}_3^2(1+ 4T^{2 \alpha}))^{1/2} C^{1/2}_{\mathsf{EMose},2} +   2 \mathsf{K}_1       (1+ 8  \widetilde{\varepsilon}_{\text{AL}} + 8 | \theta^{*}|^2)^{1/2}]
			\\ &  \qquad \qquad \times  [  M \sqrt{2} (  1+ 8  \mathsf{K}_3^2 (1+ 4  T^{2\alpha} )            )^{1/2}] \Bigg)^{1/2}.
		\end{split}
	\end{equation}
	The bound for $W_2(\mathcal{L}(\widehat{Y}_{K}^{\text{EM}}),\pi_{\mathsf{D}}) $ in \ref{final_bound_first_inequality} can be made arbitrarily small by appropriately choosing parameters including $\epsilon,T, \varepsilon_{\text{SN}}$ and $\gamma$. More precisely, for any $\delta>0$, we first choose $ 0 \le   \epsilon < \epsilon_{\delta}$ with $\epsilon_{\delta}$ given in Table \ref{tab:convconst_general} such that the first term on the right-hand side of \ref{final_bound_first_inequality} is
	\begin{equation} \label{first_delta_4_general}
		(\sqrt{  \mathbb{E}[|X_0  |^2]} + \sqrt{M} ) 2 \sqrt{  \epsilon} <\delta/4.
	\end{equation}
	Next, we choose $T > T_{\delta}$ with $T_{\delta}$ given in Table \ref{tab:convconst_general} such that the second term on the right-hand side of \ref{final_bound_first_inequality} is
	\begin{equation} \label{second_delta_4_general}
		\sqrt{2}  ( 	\sqrt{\mathbb{E}[|X_0|^2]}+ \sqrt{M} ) e^{    - 2 \widehat{	L}_{\text{MO}} (T-\epsilon) -\epsilon} <\delta/4.
	\end{equation}
	Next, we turn to the third term on the right-hand side of \ref{final_bound_first_inequality}. We choose $ 0 < \varepsilon_{\text{SN}} < \varepsilon_{\text{SN}, \delta}$ with $\varepsilon_{\text{SN}, \delta}$ given in Table \ref{tab:convconst_general} such that
	\begin{equation} \label{third_delta_4_general_first}
		\begin{split}
			\sqrt{2  \zeta^{-1}} e^{(1+  \zeta - 2 \widehat{ L}_{\text{MO}} )(T-\epsilon)}  \sqrt{   \varepsilon_{\text{SN}} } < \delta / 4.
		\end{split}
	\end{equation}
	Finally, we choose $ 0 < \gamma < \gamma_{\delta}$ with $\gamma_{\delta}$ given in Table \ref{tab:convconst_general} such that  the fourth term on the right-hand side of \ref{final_bound_first_inequality} is
	\begin{equation*} 
		\begin{split}
			& \sqrt{2} e^{ 2(1+  \zeta +  \mathsf{K}_3  (1+2 T^{\alpha} + 4  \mathsf{K}_3  (1+ 4T^{2 \alpha}) )) (T-\epsilon) } \gamma^{\alpha} \sqrt{T-\epsilon}
			\\ 	 & \quad \  \times    \Bigg(    \mathsf{K}_4^2 \zeta^{-1} (1+4 T^{2\alpha})  C_{\mathsf{EMose},4}  +  4  M (1 +8 \mathsf{K}^2_3 (1+ 4  T^{2\alpha} ) )+    2 \zeta^{-1} \mathsf{K}_1^2 (1+ 8(   \widetilde{\varepsilon}_{\text{AL}} +  | \theta^{*}|^2) )
		\end{split}
	\end{equation*}
	\begin{equation} \label{fourth_delta_4_general}
		\begin{split}
			& \qquad \quad + 4    \zeta^{-1}    M (1 +8 \mathsf{K}^2_3 (1+ 4  T^{2\alpha} ) )
			\\ &  \qquad \quad  \quad \times [ (1+ 16 \mathsf{K}^2_{\text{Total}} (1+T^{2 \alpha})) C_{\mathsf{EM},2}(T) + 32 \mathsf{K}^2_{\text{Total}}  (1+T^{2 \alpha})(1+  2  \widetilde{\varepsilon}_{\text{AL}} + 2 | \theta^{*}|^2) ]
			\\ &  \qquad \quad   + 2  [    (1+ 8 \mathsf{K}_3^2(1+ 4T^{2 \alpha}))^{1/2} C^{1/2}_{\mathsf{EMose},2} +   2 \mathsf{K}_1       (1+ 8  \widetilde{\varepsilon}_{\text{AL}} + 8 | \theta^{*}|^2)^{1/2}]
			\\ &  \qquad \qquad \times  [  M \sqrt{2} (  1+ 8  \mathsf{K}_3^2 (1+ 4  T^{2\alpha} )            )^{1/2}] \Bigg)^{1/2} <\delta/4.
		\end{split}
	\end{equation}
	Using \ref{first_delta_4_general}, \ref{second_delta_4_general}, \ref{third_delta_4_general_first} and \ref{fourth_delta_4_general},  we obtain $ W_2(\mathcal{L}(\widehat{Y}_{K}^{\text{EM}}),\pi_{\mathsf{D}}) <\delta$.
\end{proof}

\subsection{Proof of the  Preliminary Results for the General Case}
\label{proof_preliminary_estimates_appendix_general_case}

\noindent We provide the proofs of Section \ref{section_general_case} and Appendix \ref{proofs_appendix_preliminary_results_general_case_statements}.
\begin{proof}[Proof of Remark \ref{remark_growth_estimate_neural_network}]
	Using Assumption \ref{Assumption_2}, we have
	\begin{equation*}
		\begin{split}
			|s(t, \theta,x)| & \le 	| s(t, \theta,x) - s(0, 0,0) | + |  s(0, 0,0) |
			\\ & \le \mathsf{K}_1 (1+ | \theta |  )  |t|^{\alpha} + \mathsf{K}_2 (1+ | t |^{\alpha} )  | \theta |  + \mathsf{K}_3 (1+ | t |^{\alpha} ) |x |  + |  s(0, 0,0) |
			\\ & \le \mathsf{K}_{\text{Total}} (1+ | t |^{\alpha} ) (1+ | \theta | + |x| ),
		\end{split}
	\end{equation*}
	where $\mathsf{K}_{\text{Total}} :=  \mathsf{K}_1+\mathsf{K}_2+\mathsf{K}_3+  |  s(0, 0,0) |$.
\end{proof}	


\begin{proof} [Proof of Lemma \ref{lem:EMproc2ndbdgeneral}]
	Using It{\^o}'s formula, we have, for any $t\in[0,T-\epsilon]$ and $p \in [2,4]$,
	\begin{equation} \label{Ito_formula_pth_moment_Euler_scheme_general}
		\begin{split}
			\rmd |\widehat{Y}_t^{\text{EM}}|^p
			& =  p\left\langle |\widehat{Y}_t^{\text{EM}}|^{p-2} \widehat{Y}_t^{\text{EM}}, \widehat{Y}_{\lfloor t/\gamma \rfloor \gamma }^{\text{EM}} \right\rangle\rmd t  + p \left\langle |\widehat{Y}_t^{\text{EM}}|^{p-2} \widehat{Y}_t^{\text{EM}},   2 \  s(T-\lfloor t/\gamma \rfloor \gamma, \hat{\theta} , \widehat{Y}_{\lfloor t/\gamma \rfloor \gamma}^{\text{EM}})\right\rangle\rmd t\\
			& \quad +p \left\langle  |\widehat{Y}_t^{\text{EM}}|^{p-2} \widehat{Y}_t^{\text{EM}} ,  \sqrt{2 } \   \rmd \overline{B}_t \right\rangle + \frac{p}{2}  |\widehat{Y}_t^{\text{EM}}|^{p-2}( 2 M)   \ \rmd t + \frac{p(p-2)}{2}    |\widehat{Y}_t^{\text{EM}}|^{p-4} \ 2 |\widehat{Y}_t^{\text{EM}}|^{2}   \rmd t.
		\end{split}
	\end{equation}
	Integrating and taking expectation on both sides in \ref{Ito_formula_pth_moment_Euler_scheme_general}, using Young's inequality and Remark \ref{remark_growth_estimate_neural_network}, we have
	
	\begin{equation*}
		\begin{split}
			\mathbb{E} \left[ |\widehat{Y}_t^{\text{EM}}|^p \right]
			&= 		\mathbb{E} \left[ |\widehat{Y}_0^{\text{EM}}|^p \right] + p \int_0^t 	\mathbb{E} \left[   \langle |\widehat{Y}_s^{\text{EM}}|^{p-2} \widehat{Y}_s^{\text{EM}}, \widehat{Y}_{\lfloor s/\gamma \rfloor \gamma }^{\text{EM}} \rangle   \right]\rmd s
			\\ & \quad + 2p \int_0^t 	\mathbb{E} \left[  \langle |\widehat{Y}_s^{\text{EM}}|^{p-2} \widehat{Y}_s^{\text{EM}},    s(T-\lfloor s/\gamma \rfloor \gamma, \hat{\theta} , \widehat{Y}_{\lfloor s/\gamma \rfloor \gamma}^{\text{EM}}) \rangle  \right] \rmd s  
			\\ & \quad + p(M + p-2)  \int_0^t  	\mathbb{E} \left[ |\widehat{Y}_s^{\text{EM}} |^{p-2} \right]  \ \rmd s
			\\  & \le 	\mathbb{E} \left[ |\widehat{Y}_0^{\text{EM}}|^p \right] + 3(p-1) \int_0^t 	\mathbb{E} \left[    |\widehat{Y}_s^{\text{EM}}|^{p} \right] \  \rmd s  +   \int_0^t 	\mathbb{E} \left[ |  \widehat{Y}_{\lfloor s/\gamma \rfloor \gamma }^{\text{EM}} |^p   \right] \ \rmd s
			\\ & \quad + 2  \int_0^t  	\mathbb{E} \left[|s(T-\lfloor s/\gamma \rfloor \gamma, \hat{\theta} , \widehat{Y}_{\lfloor s/\gamma \rfloor \gamma}^{\text{EM}})|^p     \right] \ \rmd s 
			\\ & \quad + \frac{2}{p} (pM+ p(p-2))^{\frac{p}{2}} t  + \frac{p-2}{p} \int_0^t \mathbb{E} \left[ |\widehat{Y}_s^{\text{EM}} |^{p} \right]  \ \rmd s
			\\	   & \quad  \le 	\mathbb{E} \left[ |\widehat{Y}_0^{\text{EM}}|^p \right] +  \left(3p-2 - \frac{2}{p} \right) \int_0^t 	\mathbb{E} \left[    |\widehat{Y}_s^{\text{EM}}|^{p} \right] \  \rmd s  +   \int_0^t 	\mathbb{E} \left[ |  \widehat{Y}_{\lfloor s/\gamma \rfloor \gamma }^{\text{EM}} |^p   \right] \ \rmd s
			\\ & \quad + 2^p  \mathsf{K}^p_{\text{Total}} \int_0^t (1+ | T-\lfloor s/\gamma \rfloor \gamma |^{\alpha} )^p  \ 	\mathbb{E} \left[   | \widehat{Y}_{\lfloor s/\gamma \rfloor \gamma }^{\text{EM}} |^p \right] \  \rmd s
			\\ 	  & \quad + 2^{2p-1}\mathsf{K}^p_{\text{Total}} (1+ \mathbb{E} [  |  \hat{\theta} |^p  ]) \int_0^t (1+ | T-\lfloor s/\gamma \rfloor \gamma |^{\alpha} )^p 	    \  \rmd s + \frac{2}{p} (pM+ p(p-2))^{\frac{p}{2}} t
			\\  & \le \mathbb{E} \left[ |\widehat{Y}_0^{\text{EM}}|^p \right] + ( 3p-1 - \frac{2}{p} + 2^{2p-1} \mathsf{K}^p_{\text{Total}} (1+T^{\alpha p})) \int_0^t 	\sup_{0 \le r \le s}\mathbb{E} \left[    |\widehat{Y}_r^{\text{EM}}|^{p} \right] \  \rmd s
			\\ & \quad + 2^{3p-2}\mathsf{K}^p_{\text{Total}} (1+ \mathbb{E} [  |  \hat{\theta} |^p  ]) (1+T^{\alpha p}) t  + \frac{2}{p} (pM+ p(p-2))^{\frac{p}{2}} t .
		\end{split}
	\end{equation*}
	Using Gr{\"o}nwall's inequality, we have
	\begin{equation*}
		\begin{split}
			\sup_{0 \le s \le t}	\mathbb{E} \left[ |\widehat{Y}_s^{\text{EM}}|^p \right]
			&  \le \mathbb{E} \left[ |\widehat{Y}_0^{\text{EM}}|^p \right]   + ( 3p-1 - \frac{2}{p}  + 2^{2p-1} \mathsf{K}^p_{\text{Total}} (1+T^{\alpha p})) \int_0^t 	\sup_{0 \le r \le s}\mathbb{E} \left[    |\widehat{Y}_r^{\text{EM}}|^{p} \right] \  \rmd s
			\\ & \quad + 2^{3p-2}\mathsf{K}^p_{\text{Total}} (1+ \mathbb{E} [  |  \hat{\theta} |^p  ]) (1+T^{\alpha p}) t + \frac{2}{p} (pM+ p(p-2))^{\frac{p}{2}} t
			\\ & \le e^{t(3p-1 - \frac{2}{p}   + 2^{2p-1} \mathsf{K}^p_{\text{Total}} (1+T^{\alpha p}))}
			\\ & \quad \times   ( \mathbb{E} \left[ |\widehat{Y}_0^{\text{EM}}|^p \right] +   2^{3p-2}\mathsf{K}^p_{\text{Total}} t (1+ \mathbb{E} [  |  \hat{\theta} |^p  ]) (1+T^{\alpha p})  + \frac{2}{p} (pM+ p(p-2))^{\frac{p}{2}} t  ).
		\end{split}
	\end{equation*}		
\end{proof}

\begin{proof}  [Proof of Lemma \ref{lem:distance_EM_scheme}]
	Using \ref{continuous_time_EM_version}, Lemma \ref{lem:EMproc2ndbdgeneral} and Remark \ref{remark_growth_estimate_neural_network}, we have, for any $t\in[0,T-\epsilon]$ and $p \in [2,4]$,
	\begin{equation*}
		\begin{split}
			&\E\left[ |\widehat{Y}_t^{\text{EM}}  - \widehat{Y}_{\lfloor t/\gamma \rfloor \gamma}^{\text{EM}} |^p\right]
			\\ & \le \gamma^{p} \E\left[ \left|  \widehat{Y}_{\lfloor t/\gamma \rfloor \gamma }^{\text{EM}} + 2 \ s(T-\lfloor t/\gamma \rfloor \gamma, \hat{\theta}, \widehat{Y}_{\lfloor t/\gamma \rfloor \gamma }^{\text{EM}})  \right|^p\right] +   \E\left[ \left|   \int_{\lfloor t/\gamma \rfloor \gamma}^t \sqrt{2} \  \rmd\overline{B}_s  \right|^p\right]
		\end{split}
	\end{equation*}
	\begin{equation*}
		\begin{split}
			& \le  2^{p-1}  \gamma^{p} \left(  \E \left[   | \widehat{Y}_{\lfloor t/\gamma \rfloor \gamma }^{\text{EM}} |^p \right] + 2^p \  \E\left[   | s(T-\lfloor t/\gamma \rfloor \gamma, \hat{\theta}, \widehat{Y}_{\lfloor t/\gamma \rfloor \gamma }^{\text{EM}}) |^p  \right] \right)  +  \gamma^{\frac{p}{2}} (M p(p-1))^{\frac{p}{2}}
			\\ & \le 2^{p-1}  \gamma^{p} ( C_{\mathsf{EM},p}(t) + 2^{3p-2} \mathsf{K}^p_{\text{Total}}  (1+  T^{\alpha p} )   C_{\mathsf{EM},p}(t) + 2^{4p-3} \mathsf{K}^p_{\text{Total}}  (1+  T^{\alpha p} ) (1+ \E [|\hat{\theta}|^p])  )
			\\ & \quad   +  \gamma^{\frac{p}{2}} (M p(p-1))^{\frac{p}{2}}
			\\ & \le \gamma^{\frac{p}{2}} 	C_{\mathsf{EMose},p},
		\end{split}
	\end{equation*}
	where
	\begin{equation*}
		C_{\mathsf{EMose},p}=  2^{p-1}  (C_{\mathsf{EM},p}(T) + \mathsf{K}^p_{\text{Total}} (1+  T^{\alpha p} )  (2^{3p-2}    C_{\mathsf{EM},p}(T) + 2^{4p-3} (1+ \E [|\hat{\theta}|^p])  ))
		+   (M p(p-1))^{\frac{p}{2}} .
	\end{equation*}
\end{proof}

\begin{proof}[Proof of Lemma \ref{Improved_rate_drift_growth}]
	By the mean value theorem,  for any $k=1, \dots, M$, we have,
	\begin{equation*}
		b^{(k)}(t,\theta,x) - b^{(k)}(t,\theta,\bar{x}) = \sum_{i=1}^M \frac{\partial b^{(k)}(t,\theta,qx +(1-q)\bar{x})}{\partial y^i} (x^i - \bar{x}^{(i)}),
	\end{equation*}
	for some $q \in (0,1)$. Hence, for a fixed $q \in (0,1)$, we have
	\begin{equation*}
		\begin{split}
			&	\left| b^{(k)}(t,\theta,x) - b^{(k)}(t,\theta,\bar{x})  - \sum_{i=1}^M \frac{\partial b^{(k)}(t,\theta,\bar{x})}{\partial y^i} (x^i - \bar{x}^{i})    \right|
			\\ & = \left| \sum_{i=1}^M \frac{\partial b^{(k)}(t,\theta,qx +(1-q)\bar{x})}{\partial y^i} (x^i - \bar{x}^{(i)})    - \sum_{i=1}^M \frac{\partial b^{(k)}(t,\theta,\bar{x})}{\partial y^i} (x^i - \bar{x}^{i})    \right|
			\\ & \le \sum_{i=1}^M  \left|  \frac{\partial b^{(k)}(t,\theta,qx +(1-q)\bar{x})}{\partial y^i}    - \frac{\partial b^{(k)}(t,\theta,\bar{x})}{\partial y^i}   \right| |x^i - \bar{x}^{i}|.
		\end{split}
	\end{equation*}
	The proof is completed using Assumption \ref{Assumption_2}.
\end{proof}	

\begin{proof}[Proof of Lemma \ref{growth_estimate_gradient_theta_neural_network}]
	At $x \in \mathbb{R}^M$ and for any $v \in \mathbb{R}^M$, we have, for any $k=1, \dots M$,
	\begin{equation*}
		\langle  \nabla_{x}   s^{(k)}(t,\theta,x), v \rangle = \lim_{h \rightarrow 0 }  \frac{ s^{(k)}(t,\theta ,x +vh) -  s^{(k)}(t,\theta,x)}{h}.
	\end{equation*}
	Using Assumption \ref{Assumption_2}, we have
	\begin{equation} \label{corollary_proof_general_case_gradient_drift_b_componentwise}
		\begin{split}
			|	\langle  \nabla_{x}   s^{(k)}(t,\theta, x), v \rangle | & \le  \lim_{h \rightarrow 0 }
			\left | \frac{ s^{(k)}(t,\theta ,x +vh) -  s^{(k)}(t,\theta,x)}{h} \right |
			\\ & \le   \lim_{h \rightarrow 0 }  \frac{| D_3(t,t) |   | x +vh - x | }{|h|}
			\\ &  \le  \mathsf{K}_3 (1+ 2 | t |^{\alpha} ) |v|.
		\end{split}
	\end{equation}
	Taking $v =  \frac{\nabla_{x} s^{(k)}(t,\theta, x)}{|\nabla_{x} s^{(k)}(t,\theta, x)|}$ in \ref{corollary_proof_general_case_gradient_drift_b_componentwise}, we have
	\begin{equation} \label{bound_gradient_s_proof}
		| \nabla_{x}  s^{(k)}(t,\theta,x)| \le \mathsf{K}_3 (1+ 2 | t |^{\alpha} ).
	\end{equation}
	Using \ref{expression_b_drift_corrollary_lemma_appendix} and \ref{bound_gradient_s_proof}, we obtain
	\begin{equation*}
		\begin{split}
			| \nabla_{x}  b^{(k)}(t,\theta,x)| & \le 1 + 2	| \nabla_{x}  s^{(k)}(t,\theta,x)|
			\\ & \le 1+ 2 \mathsf{K}_3 (1+ 2 | t |^{\alpha} ).
		\end{split}
	\end{equation*}
\end{proof}

\newpage
\section{Table of Constants} \label{Table_of_constants_appendix}
Table~\ref{tab:convconst_general} displays full expressions for constants which appear in Theorem \ref{main_theorem_general} and Remark \ref{main_theorem_general_relaxed_assumption}.
\begin{table}[h] 
		\caption{Explicit expressions for the constants in  Theorem \ref{main_theorem_general} and Remark \ref{main_theorem_general_relaxed_assumption}}.
	\renewcommand{\arraystretch}{2}
	\centering

	
	\scriptsize
	\begin{tabular}{@{}llllllllllllll@{}}
		\toprule
		
		\multicolumn{1}{c}{\begin{sc} Constant \end{sc}} &
		\multicolumn{2}{c}{\begin{sc} Dependency \end{sc}} &
		\multicolumn{3}{c}{\begin{sc}  Full Expression \qquad  \qquad\qquad\qquad\qquad\qquad\qquad \qquad\qquad\qquad\qquad\qquad \  \end{sc}}   \\
		
		
		
		\toprule
		$C_1$&  $O(\sqrt{M})$ &  & $ 2   ( \sqrt{\mathbb{E}[|X_0|^2]}+\sqrt{M}) $ \\\hline
		$C_2$ &  $O(\sqrt{M})$ & &  $\sqrt{2} \left(	\sqrt{\mathbb{E}[|X_0|^2]} + \sqrt{M} \right)  $  \\\hline
		$C_3(T,\epsilon)$ & $ O(e^{(1+  \zeta - 2 \widehat{ L}_{\text{MO}} )(T-\epsilon) } ) $& &  $  \sqrt{2  \zeta^{-1}} e^{(1+  \zeta - 2 \widehat{ L}_{\text{MO}} )(T-\epsilon) } $
		\\\hline $C_{\mathsf{EM},2}(T)$ & $O (M e^{T^{2\alpha+1}}  T^{2 \alpha +1} \widetilde{\varepsilon}_{\text{AL}}) $ & &
		$
		\begin{aligned}
			&	e^{T(4 + 8 \mathsf{K}^2_{\text{Total}} (1+T^{2 \alpha }))}
			\\ & \quad \times
			( \mathbb{E} [ |\widehat{Y}_0^{\text{EM}}|^2] +   16 \mathsf{K}^2_{\text{Total}} T (1+  2  \widetilde{\varepsilon}_{\text{AL}} + 2 | \theta^{*}|^2) (1+T^{2 \alpha })  +  2M T  )
		\end{aligned}
		$
		\\\hline
		$C_{\mathsf{EM},4}(T)$ & $O (M^2 e^{T^{4\alpha+1}}  T^{4 \alpha +1}) $ & &
		$
		\begin{aligned}
			& e^{T(  \frac{21}{2}   + 128 \mathsf{K}^4_{\text{Total}} (1+T^{4 \alpha }))}
			\\ & \quad \times   ( \mathbb{E} [ |\widehat{Y}_0^{\text{EM}}|^4] +  1024 \mathsf{K}^4_{\text{Total}} T (1+ \mathbb{E} [  |  \hat{\theta} |^4  ]) (1+T^{4 \alpha })  + 8(M^2+ 4M +4) T  )
		\end{aligned}
		$
		\\\hline
		$	C_{\mathsf{EMose},2} $ & $O (M e^{T^{2\alpha+1}}  T^{4 \alpha +1} \widetilde{\varepsilon}_{\text{AL}})  $ & &
		$
		\begin{aligned}
			2 (C_{\mathsf{EM},2}(T) + \mathsf{K}^2_{\text{Total}} (1+  T^{2 \alpha } )  (16    C_{\mathsf{EM},2}(T) + 32 (1+  2  \widetilde{\varepsilon}_{\text{AL}} + 2 | \theta^{*}|^2)  )) +   2 M
		\end{aligned}
		$
		\\\hline
		$	C_{\mathsf{EMose},4} $ & $O (M^2 e^{T^{4\alpha+1}}  T^{8 \alpha +1})  $ & &
		$
		\begin{aligned}
			8 (C_{\mathsf{EM},4}(T) + \mathsf{K}^4_{\text{Total}} (1+  T^{4 \alpha } )  (1024  C_{\mathsf{EM},4}(T) + 8192 (1+ \E [|\hat{\theta}|^4])  )) + 144   M^{2}
		\end{aligned}
		$
		\\\hline
		$C_4(T,\epsilon)$ & $O(M  e^{T^{4 \alpha+1}} T^{4 \alpha +1} \widetilde{\varepsilon}^{1/4}_{\text{AL}}) $& &   	$
		\begin{aligned} & \sqrt{2} e^{ 2(1+  \zeta +  \mathsf{K}_3  (1+2 T^{\alpha} + 4  \mathsf{K}_3  (1+ 4T^{2 \alpha}) )) (T-\epsilon) }  \sqrt{T-\epsilon}
			\\ & \quad \  \times    \Bigg(    \mathsf{K}_4^2 \zeta^{-1} (1+4 T^{2\alpha})  C_{\mathsf{EMose},4}  +  4  M (1 +8 \mathsf{K}^2_3 (1+ 4  T^{2\alpha} ) ) \\ & \qquad \quad +    2 \zeta^{-1} \mathsf{K}_1^2 (1+ 8(   \widetilde{\varepsilon}_{\text{AL}} +  | \theta^{*}|^2) )
			\\ & \qquad \quad + 4    \zeta^{-1}  M   (1 +8 \mathsf{K}^2_3 (1+ 4  T^{2\alpha} ) )
			\\ &  \qquad \quad  \quad \times [ (1+ 16 \mathsf{K}^2_{\text{Total}} (1+T^{2 \alpha})) C_{\mathsf{EM},2}(T) \\ &  \qquad \qquad  \qquad + 32 \mathsf{K}^2_{\text{Total}}  (1+T^{2 \alpha})(1+  2  \widetilde{\varepsilon}_{\text{AL}} + 2 | \theta^{*}|^2) ]
			\\ &  \qquad \quad   + 2  [    (1+ 8 \mathsf{K}_3^2(1+ 4T^{2 \alpha}))^{1/2} C^{1/2}_{\mathsf{EMose},2} +   2 \mathsf{K}_1       (1+ 8  \widetilde{\varepsilon}_{\text{AL}} + 8 | \theta^{*}|^2)^{1/2}]
			\\ &  \qquad \qquad \times  [  M \sqrt{2} (  1+ 8  \mathsf{K}_3^2 (1+ 4  T^{2\alpha} )            )^{1/2}] \Bigg)^{1/2} .
		\end{aligned}
		$  \\ 	\\ \hline
		$\widetilde{C}_4(T,\epsilon)$ & $O(\sqrt{M} e^{T^{2\alpha +1}} T^{2 \alpha +1} \widetilde{\varepsilon}^{1/2}_{\text{AL}}) $ &  &
        \begin{math}
        \begin{aligned}  
        & e^{(1+ (3/2)\zeta + 2  \mathsf{K}_3 (1+2T^{\alpha})  )(T-\epsilon)}  
        \\ & \ \times (\zeta^{-1/2}     (T-\epsilon)^{1/2}  C^{1/2}_{\mathsf{EMose},2}
		   + 8^{1/2} \zeta^{-1/2}  (T-\epsilon)^{1/2}   \mathsf{K}_1  (1+8  \widetilde{\varepsilon}_{\text{AL}} + 8 | \theta^{*}|^2)^{1/2}
		 \\ & \qquad  \ + 2 \zeta^{-1/2}  \mathsf{K}_3 (1+2T^{\alpha})    (T-\epsilon)^{1/2}  C_{\mathsf{EMose},2}^{1/2}) 
         \end{aligned} 
         \end{math}
         \\ \hline	$\epsilon_{\delta}$& - & & $\delta^2/(64( \sqrt{\mathbb{E}[|X_0|^2]}+\sqrt{M}) ^2)$ 		\\ \hline
		$T_{\delta}$& - & &  $(2\widehat{L}_{\text{MO}})^{-1}\left[\ln\left(4 \sqrt{2} \left(\sqrt{\E\left[ |X_0|^2 \right]}+\sqrt{M}\right)/\delta\right) - \epsilon  \right] +\epsilon $ 	\\ \hline
		$\varepsilon_{\text{SN}, \delta}$ & - & &   $( \zeta \delta^2/32) e^{-2(1+  \zeta  - 2 \widehat{L}_{\text{MO}}) (T-\epsilon)}$
		\\ \hline
		$\gamma_{\delta}$&- & &
		$
		\begin{aligned}
			\min\Bigg\{ &  (\delta / (4 \sqrt{2}))^{1/\alpha}  (T-\epsilon)^{- 1/(2\alpha)} e^{ -(2/\alpha)(1+  \zeta +  \mathsf{K}_3  (1+2 T^{\alpha} + 4 \mathsf{K}_3  (1+ 4T^{2 \alpha}) ))(T-\epsilon)}
			\\ & \times  \Bigg(  \mathsf{K}_4^2 \zeta^{-1} (1+4 T^{2\alpha})  C_{\mathsf{EMose},4}  + 4  M (1 +8 \mathsf{K}^2_3 (1+ 4  T^{2\alpha} ) ) \\ & \qquad  +    2 \zeta^{-1} \mathsf{K}_1^2 (1+ 8(   \widetilde{\varepsilon}_{\text{AL}} +  | \theta^{*}|^2) )
			\\ & \qquad  + 4    \zeta^{-1}  M   (1 +8 \mathsf{K}^2_3 (1+ 4  T^{2\alpha} ) )
			\\ &  \qquad \quad   \times [ (1+ 16 \mathsf{K}^2_{\text{Total}} (1+T^{2 \alpha})) C_{\mathsf{EM},2}(T) \\ & \qquad \qquad  + 32 \mathsf{K}^2_{\text{Total}}  (1+T^{2 \alpha})(1+  2  \widetilde{\varepsilon}_{\text{AL}} + 2 | \theta^{*}|^2) ]
			\\ &  \qquad    + 2  [    (1+ 8 \mathsf{K}_3^2(1+ 4T^{2 \alpha}))^{1/2} C^{1/2}_{\mathsf{EMose},2} +   2 \mathsf{K}_1       (1+ 8  \widetilde{\varepsilon}_{\text{AL}} + 8 | \theta^{*}|^2)^{1/2}]
			\\ &  \qquad \qquad \times  [  M \sqrt{2} (  1+ 8  \mathsf{K}_3^2 (1+ 4  T^{2\alpha} )            )^{1/2}] \Bigg)^{-1/(2\alpha)} ,  1\Bigg\}.
		\end{aligned}
		$
		
		\\
		\bottomrule
	\end{tabular}
	\label{tab:convconst_general}
	
	
\end{table}

\end{document}